\newcommand{\xmark}{\ding{55}}%
\newtheorem{assumption}{Assumption}
\newtheorem{corollary}{Corollary}  
\renewcommand{\d}{\mathrm{d}}
\newtheorem{prop}{Proposition}
\renewcommand{\eqref}[1]{(\ref{#1})}
\definecolor{codegreen}{rgb}{0,0.6,0}
\definecolor{codegray}{rgb}{0.5,0.5,0.5}
\definecolor{codepurple}{rgb}{0.58,0,0.82}
\definecolor{backcolour}{rgb}{0.95,0.95,0.92}
\lstdefinestyle{mystyle}{
    backgroundcolor=\color{backcolour},   
    commentstyle=\color{codegreen},
    keywordstyle=\color{magenta},
    numberstyle=\tiny\color{codegray},
    stringstyle=\color{codepurple},
    basicstyle=\small\ttfamily,
    breakatwhitespace=false,         
    breaklines=true,                 
    captionpos=b,                    
    keepspaces=true,                 
    numbers=left,                    
    numbersep=5pt,                  
    showspaces=false,                
    showstringspaces=false,
    showtabs=false,                  
    tabsize=2
}
\title{Simplified State Space Layers for Sequence Modeling}
\author{Jimmy T.H. Smith\textsuperscript{*, 1, 2}, Andrew Warrington\textsuperscript{*, 2, 3}, Scott W. Linderman\textsuperscript{2, 3}\\
\textsuperscript{*}Equal contribution.  \\
\textsuperscript{1}Institute for Computational and Mathematical Engineering, Stanford University. \\
\textsuperscript{2}Wu Tsai Neurosciences Institute, Stanford University. \\
\textsuperscript{3}Department of Statistics, Stanford University. \\
\texttt{\{jsmith14,awarring,scott.linderman\}@stanford.edu}. 
}
\begin{document}

\maketitle

\begin{abstract}
Models using \emph{structured state space sequence} (S4) layers have achieved state-of-the-art performance on long-range sequence modeling tasks. An S4 layer combines linear state space models (SSMs), the HiPPO framework, and deep learning to achieve high performance. We build on the design of the S4 layer and introduce a new state space layer, the \emph{S5 layer}.  Whereas an S4 layer uses many independent single-input, single-output SSMs, the S5 layer uses one multi-input, multi-output SSM.  We establish a connection between S5 and S4, and use this to develop the initialization and parameterization used by the S5 model.  The result is a state space layer that can leverage efficient and widely implemented parallel scans, allowing S5 to match the computational efficiency of S4, while also achieving state-of-the-art performance on several long-range sequence modeling tasks.  S5 averages $87.4\%$ on the long range arena benchmark, and $98.5\%$ on the most difficult Path-X task.
\end{abstract}

\section{Introduction}
\label{sec:intro}

Efficiently modeling long sequences is a challenging problem in machine learning.  Information crucial to solving tasks may be encoded jointly between observations that are thousands of timesteps apart. Specialized variants of recurrent neural networks (RNNs)~\citep{arjovsky2016unitary, erichson2021lipschitz, rusch2021unicornn, chang2019antisymmetricrnn}, convolutional neural networks (CNNs)~\citep{bai2018empirical, oord2016wavenet, romero2021ckconv}, and transformers~\citep{vaswani2017attention} have been developed to try to address this problem. In particular, many efficient transformer methods have been introduced~\citep{choromanski2021rethinking, katharopoulos2020transformers, Kitaev2020Reformer:, beltagy2020longformer, gupta2020gmat, wang2020linformer} to address the standard transformer's quadratic complexity in the sequence length.  However, these more efficient transformers still perform poorly on very long-range sequence tasks~\citep{tay2020lra}. 

\citet{gu2021s4} presented an alternative approach using \emph{structured state space sequence} (S4) layers.  An S4 layer defines a nonlinear sequence-to-sequence transformation via a bank of many independent single-input, single-output (SISO) linear state space models (SSMs)~\citep{gu2021lssl}, coupled together with nonlinear mixing layers.  Each SSM leverages the HiPPO framework~\citep{gu2020hippo} by initializing with specially constructed state matrices.  Since the SSMs are linear, each layer can be equivalently implemented as a convolution, which can then be applied efficiently by parallelizing across the sequence length. Multiple S4 layers can be stacked to create a deep sequence model.  Such models have achieved significant improvements over previous methods, including on the \emph{long range arena} (LRA)~\citep{tay2020lra} benchmarks specifically designed to stress test long-range sequence models. Extensions have shown good performance on raw audio generation~\citep{goel2022sashimi} and classification of long movie clips~\citep{islam2022long}. 

We introduce a new state space layer that builds on the S4 layer, the \emph{S5} layer, illustrated in Figure~\ref{fig:banner:s5}.  S5 streamlines the S4 layer in two main ways.  First, S5 uses one multi-input, multi-output (MIMO) SSM in place of the bank of many independent SISO SSMs in S4.  Second, S5 uses an efficient and widely implemented parallel scan.  This removes the need for the convolutional and frequency-domain approach used by S4, which requires a non-trivial computation of the convolution kernel. The resulting state space layer has the same computational complexity as S4, but operates purely recurrently and in the time domain. 

We then establish a mathematical relationship between S4 and S5. This connection allows us to inherit the HiPPO initialization schemes that are key to the success of S4.  Unfortunately, the specific HiPPO matrix that S4 uses for initialization cannot be diagonalized in a numerically stable manner for use in S5. However, in line with recent work on the \emph{DSS}~\citep{gupta2022dss} and \emph{S4D}~\citep{gu2022parameterization} layers, we found that a diagonal approximation to the HiPPO matrix achieves comparable performance. We extend a result from~\citet{gu2022parameterization} to the MIMO setting, which justifies the diagonal approximation for use in S5. We leverage the mathematical relationship between S4 and S5 to inform several other aspects of parameterization and initialization, and we perform thorough ablation studies to explore these design choices.

\begin{figure}
    \centering
    \includegraphics[width=\textwidth]{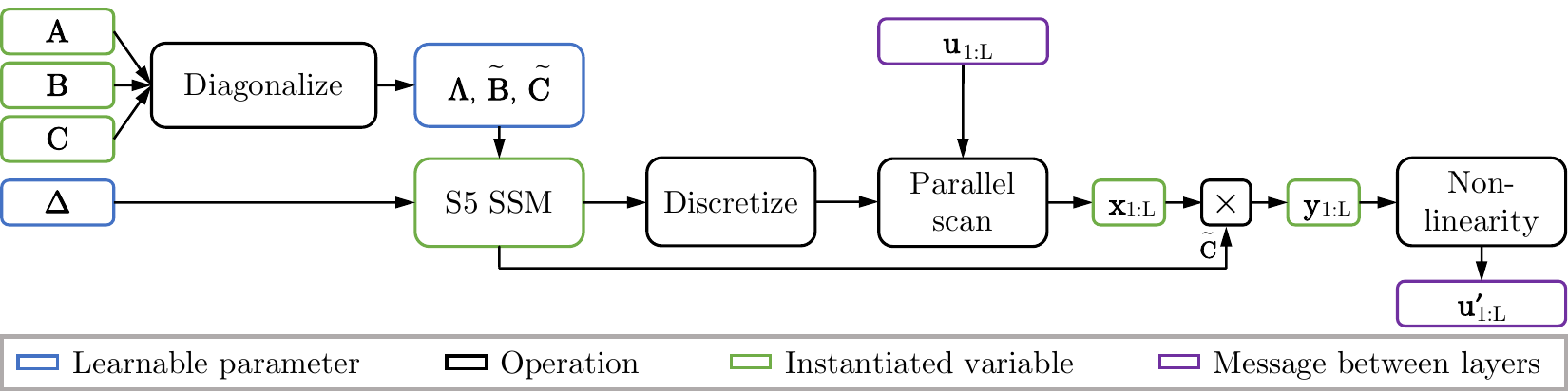}
    \caption{The computational components of an S5 layer for offline application to a sequence.  The S5 layer uses a parallel scan on a diagonalized linear SSM to compute the SSM outputs $\mathbf{y}_{1:L}\in \mathbb{R}^{L \times H}$.  A nonlinear activation function is applied to the SSM outputs to produce the layer outputs. A similar diagram for S4 is included in Appendix \ref{app:sec:S5}.}
    \label{fig:banner:s5}
\end{figure}

The final S5 layer has many desirable properties. It is straightforward to implement (see Appendix~\ref{app:sec:implementation}),\footnote{The full S5 implementation is available at: \url{https://github.com/lindermanlab/S5}.} enjoys linear complexity in the sequence length, and can efficiently handle time-varying SSMs and irregularly sampled observations (which is intractable with the convolution implementation of S4).  S5 achieves state-of-the-art performance on a variety of long-range sequence modeling tasks, with an LRA average of ${87.4\%}$, and ${98.5\%}$ accuracy on the most difficult Path-X task. 

\section{Background}
\label{genbackground}
We provide the necessary background in this section prior to introducing the S5 layer in Section~\ref{sec:simplifying}. 

\subsection{Linear State Space Models}
\label{subsec:ssm}
Continuous-time linear SSMs are the core component of both the S4 layer and the S5 layer.  Given an input signal $\mathbf{u}(t)\in \mathbb{R}^U$, a latent state $\mathbf{x}(t)\in \mathbb{R}^P$ and an output signal $\mathbf{y}(t)\in \mathbb{R}^{M}$, a linear continuous-time SSM is defined by the differential equation:
\begin{align}
    \dfrac{\d \mathbf{x}(t)}{\d t} = \mathbf{A}\mathbf{x}(t) + \mathbf{B}\mathbf{u}(t), \quad \quad \mathbf{y}(t) = \mathbf{C}\mathbf{x}(t) + \mathbf{D}\mathbf{u}(t) , \label{eq:cont_ssm} 
\end{align}
and is parameterized by a state matrix $\mathbf{A}\in \mathbb{R}^{P \times P}$, an input matrix $\mathbf{B}\in \mathbb{R}^{P \times U}$, an output matrix $\mathbf{C}\in \mathbb{R}^{M \times P}$ and a feedthrough matrix $\mathbf{D}\in \mathbb{R}^{M \times U}$.  For a constant step size, $\Delta$, the SSM can be \emph{discretized} using, e.g. Euler, bilinear or zero-order hold (ZOH) methods to define the linear recurrence
\begin{align}
    \mathbf{x}_k = \overline{\mathbf{A}}\mathbf{x}_{k-1} + \overline{\mathbf{B}}\mathbf{u}_k, \quad \quad \mathbf{y}_k = \overline{\mathbf{C}}\mathbf{x}_k + \overline{\mathbf{D}}\mathbf{u}_k, \label{eq:discrete_ssm}
\end{align}
where the discrete-time parameters are each a function, specified by the discretization method, of the continuous-time parameters.  See \citet{iserles2009first} for more information on discretization methods.

\subsection{Parallelizing Linear State Space Models With Scans}
\label{subsec:parscan}
We use parallel scans to efficiently compute the states of a discretized linear SSM.  Given a binary associative operator $\bullet$ (i.e. $(a \bullet b) \bullet c = a \bullet (b \bullet c)$) and a sequence of $L$ elements $[a_1, a_2, ..., a_{L}]$, the scan operation (sometimes referred to as \emph{all-prefix-sum}) returns the sequence
\begin{align}
    [a_1, \ (a_1 \bullet a_2), \ ..., \ (a_1 \bullet a_2 \bullet ... \bullet a_{L})].
\end{align}
Computing a length $L$ linear recurrence of a discretized SSM, $\mathbf{x}_k = \overline{\mathbf{A}}\mathbf{x}_{k-1} + \overline{\mathbf{B}}\mathbf{u}_k$ as in \eqref{eq:discrete_ssm}, is a specific example of a scan operation.  As discussed in  Section 1.4 of \citet{blelloch1990prefix}, parallelizing the linear recurrence of the latent transitions in the discretized SSM above can be computed in a parallel time of $\mathcal{O}(T_{\odot}\log L)$, assuming $L$ processors, where $T_{\odot}$ represents the cost of matrix-matrix multiplication. For a general matrix $\overline{\mathbf{A}}\in \mathbb{R}^{P \times P}$, $T_\odot$ is $\mathcal{O}(P^3)$.  This can be prohibitively expensive in deep learning settings. However, if $\overline{\mathbf{A}}$ is a diagonal matrix, the parallel time becomes $\mathcal{O}(P \log L)$ with $L$ processors and only requires $\mathcal{O}(PL)$ space. Finally, we note that efficient parallel scans are implemented in a work-efficient manner, thus the total computational cost of the parallel scan with a diagonal matrix is $\mathcal{O}(PL)$ operations. See Appendix \ref{app:sec:parallel_scan} for more information on parallel scans.

\subsection{S4: Structured State Space Sequence Layers}
\label{background:S4}
The S4 layer \citep{gu2021s4} defines a nonlinear sequence-to-sequence transformation, mapping from an input sequence $\mathbf{u}_{1:L} \in \mathbb{R}^{L \times H}$ to an output sequence $\mathbf{u}'_{1:L} \in \mathbb{R}^{L \times H}$.  An S4 layer contains a bank of $H$ independent single-input, single-output (SISO) SSMs with $N$-dimensional states. Each \emph{S4 SSM} is applied to one dimension of the input sequence.  This results in an independent linear transformation from each input channel to each preactivation channel.  A nonlinear activation function is then applied to the preactivations.  Finally, a position-wise linear \emph{mixing} layer is applied to combine the independent features and produce the output sequence $\mathbf{u}'_{1:L}$.  Figure \ref{app:fig:banner:s4} in the appendix illustrates the view of the S4 layer as a bank of independent SSMs. Figure \ref{fig:matrix:s4} shows an alternative view of S4 as one large SSM with state size $HN$ and block-diagonal state, input and output matrices.  

Each S4 SSM leverages the HiPPO framework for online function approximation~\citep{gu2020hippo} by initializing the state matrices with a HiPPO matrix (most often the HiPPO-LegS matrix).  This was demonstrated empirically to lead to strong performance~\citep{gu2021lssl, gu2021s4}, and can be shown as approximating long-range dependencies with respect to an infinitely long, exponentially-decaying measure~\citep{gu2022train}.  While the HiPPO-LegS matrix is not stably diagonalizable~\citep{gu2021s4}, it can be represented as a normal plus low-rank (NPLR) matrix.   The normal component, referred to as HiPPO-N and denoted $\mathbf{A}_{\mathrm{LegS}}^{\mathrm{Normal}}$, can be diagonalized.  Thus, the HiPPO-LegS can be conjugated into a diagonal plus low-rank (DPLR) form, which S4 then utilizes to derive an efficient form of the convolution kernel.  This motivates S4's DPLR parameterization.

Efficiently applying the S4 layer requires two separate implementations depending on context: a recurrent mode and a convolution mode.  For online generation, the SSM is iterated recurrently, much like other RNNs.  However, when the entire sequence is available and the observations are evenly spaced, a more efficient convolution mode is used.  This takes advantage of the ability to represent the linear recurrence as a one-dimensional convolution between the inputs and a convolution kernel for each of the SSMs.  Fast Fourier transforms (FFTs) can then be applied to efficiently parallelize this application.  Figure \ref{app:fig:banner:s4} in the appendix illustrates the convolution approach of the S4 layer for offline processing.  We note that while parallel scans could, in principle, allow a recurrent approach to be used in offline scenarios, applying the parallel scan to all $H$ of the $N$-dimensional SSMs would in general be much more expensive than the convolution approach S4 actually uses.

The trainable parameters of each S4 layer are the $H$ independent copies of the learnable SSM parameters and the $\mathcal{O}(H^2)$ parameters of the mixing layer. For each of the $h\in \{1,...,H\}$ S4 SSMs, given a scalar input signal $u^{(h)}(t)\in \mathbb{R}$, an S4 SSM uses an input matrix $\mathbf{B}^{(h)}\in \mathbb{C}^{N\times 1}$, a DPLR parameterized transition matrix $\mathbf{A}^{(h)}\in \mathbb{C}^{N \times N}$, an output matrix $\mathbf{C}^{(h)}\in \mathbb{C}^{1\times N}$, and feedthrough matrix $\mathbf{D}^{(h)}\in \mathbb{R}^{1\times 1}$, to produce a signal $y^{(h)}(t) \in \mathbb{R}$. To apply the S4 SSMs to discrete sequences, each continuous-time SSM is discretized using a constant timescale parameter $\Delta^{(h)}\in \mathbb{R}_+$. The learnable parameters of each SSM are the timescale parameter $\Delta^{(h)}\in \mathbb{R}_+$, the continuous-time parameters $\mathbf{B}^{(h)}$, $\mathbf{C}^{(h)}$, $\mathbf{D}^{(h)}$, and the DPLR matrix, parameterized by vectors $\mathbf{\Lambda}^{(h)}\in\mathbb{C}^{N}$ and $\mathbf{p}^{(h)}, \mathbf{q}^{(h)} \in \mathbb{C}^{N}$ representing the diagonal matrix and low-rank terms respectively.  For notational compactness we denote the concatenation of the $H$ S4 SSM states at discrete time index $k$ as $\mathbf{x}_k^{(1:H)} = \big[ (\mathbf{x}_k^{(1)})^\top , \ldots , (\mathbf{x}_k^{(H)})^\top \big]^\top$, and the $H$ SSM outputs as $\mathbf{y}_k = \big[ \mathbf{y}_k^{(1)} , \ldots , \mathbf{y}_k^{(H)} \big]^\top$.

\renewcommand{\floatpagefraction}{.8}%

\begin{figure}
    \centering

    \begin{subfigure}[t]{\textwidth}
        \centering
        \includegraphics[width=\textwidth]{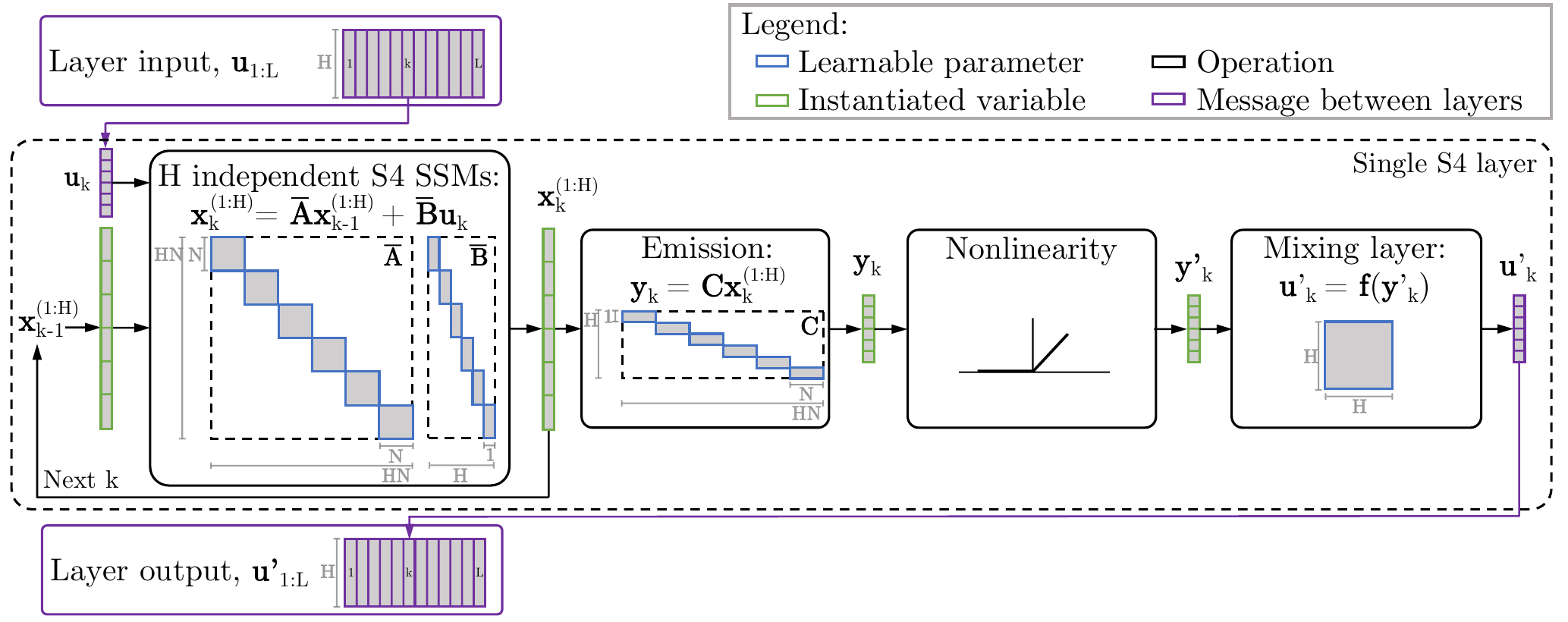}
        \caption{Internal structure of a single S4 layer~\citep{gu2021s4} when viewed as a block-diagonal system.}
        \label{fig:matrix:s4}
    \end{subfigure}%
    
    \vspace*{0.1cm}

    \begin{subfigure}[t]{\textwidth}
        \centering
        \includegraphics[width=\textwidth]{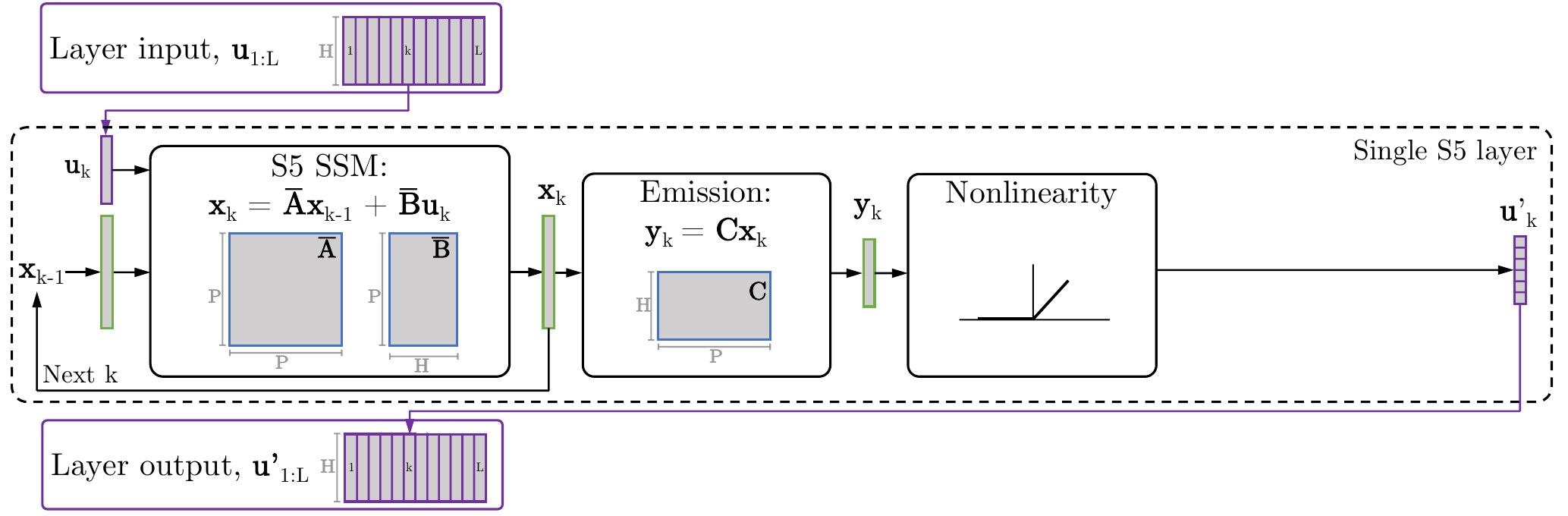}
        \caption{Internal structure of a single S5 layer. }
        \label{fig:matrix:S5}
    \end{subfigure}%
    
    \vspace*{-0.2cm}

    \caption{Schematic of the internal structure of a discretized S4 layer~\citep{gu2021s4} (top) and S5 layer (bottom).  Note $\mathbf{D}$ is omitted for simplicity.  We view an S4 layer as a single block-diagonal SSM with a latent state of size $HN$, followed by a nonlinearity and mixing layer to mix the independent features.  (b) In contrast, the S5 layer uses a dense, MIMO linear SSM with latent size $P \ll HN$. }
    \label{fig:matrix}
\end{figure}

\section{The S5 Layer}
\label{sec:simplifying}
In this section we present the S5 layer.  We describe its structure, parameterization and computation, particularly focusing on how each of these differ from S4. 

\subsection{S5 Structure: From SISO to MIMO}
\label{subsec:simplestruct}
The S5 layer replaces the bank of SISO SSMs (or large block-diagonal system) in S4 with a multi-input, multi-output (MIMO) SSM, as in \eqref{eq:cont_ssm}, with a latent state size $P$, and input and output dimension $H$. The discretized version of this MIMO SSM can be applied to a vector-valued input sequence $\mathbf{u}_{1:L}\in \mathbb{R}^{L\times H}$, to produce a vector-valued sequence of \emph{SSM outputs} (or preactivations) $\mathbf{y}_{1:L}\in \mathbb{R}^{L\times H}$, using latent states $\mathbf{x}_k\in \mathbb{R}^{P}$.  A nonlinear activation function is then applied to produce a sequence of layer outputs $\mathbf{u}'_{1:L}\in \mathbb{R}^{L\times H}$. See Figure \ref{fig:matrix:S5} for an illustration. Unlike S4, we do not require an additional position-wise linear layer, since these features are already mixed.  We note here that compared to the $HN$ latent size of the block-diagonal SSM in the S4 layer, S5's latent size $P$ can be significantly smaller, allowing for the use of efficient parallel scans, as we discuss in Section \ref{subsec:simplecomp}. 

\subsection{S5 Parameterization: Diagonalized Dynamics}
\label{subsec:simpleparam}

The parameterization of the S5 layer's MIMO SSM is motivated by the desire to use efficient parallel scans. As discussed in Section \ref{subsec:parscan}, a diagonal state matrix is required to efficiently compute the linear recurrence using a parallel scan. Thus, we diagonalize the system, writing the continuous-time state matrix as  $\mathbf{A}=\mathbf{V}\mathbf{\Lambda}\mathbf{V}^{-1}$, where $\mathbf{\Lambda}\in \mathbb{C}^{P \times P}$ denotes the diagonal matrix containing the eigenvalues and $\mathbf{V} \in \mathbb{C}^{P \times P}$ corresponds to the eigenvectors.  Therefore, we can diagonalize the continuous-time latent dynamics from \eqref{eq:cont_ssm} as 
\begin{align}
    \dfrac{\d \mathbf{V}^{-1}\mathbf{x}(t)}{\d t} = \boldsymbol\Lambda \mathbf{V}^{-1}\mathbf{x}(t) + \mathbf{V}^{-1}\mathbf{B}\mathbf{u}(t).
    \label{eq:S5eigenbasis}
\end{align}
Defining $\tilde{\mathbf{x}}(t)= \mathbf{V}^{-1}\mathbf{x}(t)$, $\tilde{\mathbf{B}}= \mathbf{V}^{-1}\mathbf{B}$, and  $\Tilde{\mathbf{C}}=\mathbf{C}\mathbf{V}$ gives a reparameterized system,
\begin{align}
    \dfrac{\d \tilde{\mathbf{x}}(t)}{\d t} = \mathbf{\Lambda} \tilde{\mathbf{x}}(t) + \tilde{\mathbf{B}}\mathbf{u}(t), \quad
    \quad \mathbf{y}(t) &= \tilde{\mathbf{C}}\tilde{\mathbf{x}}(t) +  \mathbf{D}\mathbf{u}(t).
    \label{eq:S5ssm} 
\end{align}
This is a linear SSM with a diagonal state matrix. This diagonalized system can be discretized with a timescale parameter $\Delta\in \mathbb{R}_+$ using the ZOH method to give another diagonalized system with parameters
\begin{align}
    \label{eq:diag_disc_params}
    \overline{\mathbf{\Lambda}} = e^{\mathbf{\Lambda}\Delta}, \quad \overline{\mathbf{B}} = \mathbf{\Lambda}^{-1}(\overline{\mathbf{\Lambda}}-\mathbf{I})\tilde{\mathbf{B}}, \quad \overline{\mathbf{C}} = \tilde{\mathbf{C}}, \quad \overline{\mathbf{D}} = \mathbf{D}. 
\end{align}
In practice, we use a vector of learnable timescale parameters $\mathbf{\Delta}\in \mathbb{R}^P$ (see Section \ref{sec:s4s5rel:untying}) and restrict the feedthrough matrix $\mathbf{D}$ to be diagonal.  The S5 layer therefore has the learnable parameters:  $\tilde{\mathbf{B}}\in \mathbb{C}^{P \times H}$, $\tilde{\mathbf{C}}\in \mathbb{C}^{H \times P}$, $\mathrm{diag}(\mathbf{D})\in \mathbb{R}^{H}$,  $\mathrm{diag}(\mathbf{\Lambda}) \in \mathbb{C}^{P}$, and $\mathbf{\Delta}\in \mathbb{R}^P$. 

\paragraph{Initialization}
Prior work showed that the performance of deep state space models are sensitive to the initialization of the state matrix~\citep{gu2021lssl, gu2021s4}.  We discussed in Section \ref{subsec:parscan} that state matrices must be diagonal for efficient application of parallel scans.  We also discussed in Section \ref{background:S4} that the HiPPO-LegS matrix cannot be diagonalized stably, but that the HiPPO-N matrix can be.  In Section \ref{sec:s4s5rel} we connect the dynamics of S5 to S4 to suggest why initializing with HiPPO-like matrices may also work well in the MIMO setting.  We support this empirically, finding that diagonalizing the HiPPO-N matrix leads to good performance, and perform ablations in Appendix \ref{app:section:ablations} to compare to other initializations.  We note that DSS~\citep{gupta2022dss} and S4D~\citep{gu2022parameterization} layers also found strong performance in the SISO setting by using a diagonalization of the HiPPO-N matrix.

\paragraph{Conjugate Symmetry} 
The complex eigenvalues of a diagonalizable matrix with real entries always occur in conjugate pairs.  We enforce this conjugate symmetry by using half the number of eigenvalues and latent states. This ensures real outputs and reduces the runtime and memory usage of the parallel scan by a factor of two.  This idea is also discussed in \citet{gu2022parameterization}.

\subsection{S5 Computation: Fully Recurrent} 
\label{subsec:simplecomp}
Compared to the large $HN$ effective latent size of the block-diagonal S4 layer, the smaller latent dimension of the S5 layer ($P$) allows the use of efficient parallel scans when the entire sequence is available.  The S5 layer can therefore be efficiently used as a recurrence in the time domain for both online generation and offline processing.  Parallel scans and the continuous-time parameterization also allow for efficient handling of irregularly sampled time series and other time-varying SSMs, by simply supplying a different $\overline{\mathbf{A}}_k$ matrix at each step.  We leverage this feature and apply S5 to irregularly sampled data in Section \ref{sec:exp:variable}.  In contrast, the convolution of the S4 layer requires a time invariant system and regularly spaced observations.

\subsection{Matching the Computational Efficiency of S4 and S5}
A key design desiderata for S5 was matching the computational complexity of S4 for both online generation and offline recurrence. The following proposition guarantees that their complexities are of the same order if S5's latent size $P=\mathcal{O}(H)$. 
\begin{prop} 
\label{prop:efficiency}
Given an S4 layer with $H$ input/output features, an S5 layer with $H$ input/output features and a latent size $P=\mathcal{O}(H)$ has the same order of magnitude complexity as an S4 layer in terms of both runtime and memory usage. 
\end{prop}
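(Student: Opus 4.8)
The plan is to prove Proposition~\ref{prop:efficiency} by directly tabulating, in terms of the input/output width $H$, the sequence length $L$, the S4 per-channel state size $N$, and the S5 latent size $P$, the parameter count, runtime (total work), and memory of each layer in \emph{both} operating regimes it is meant to support --- online autoregressive generation, and offline processing of a full length-$L$ sequence --- and then substituting $P=\mathcal{O}(H)$ and checking that every term matches the corresponding S4 term up to constants. Throughout I suppress the batch dimension, which multiplies both sides identically, and I treat $N$ as $\mathcal{O}(H)$ (indeed $N$ is a small constant in all standard configurations); I will flag this assumption explicitly rather than hide it, since strictly the two layers are the same order only once $N=\mathcal{O}(H)$.

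For S4 I would collect the following, all of which follow from the description in Section~\ref{background:S4}. The layer has $\mathcal{O}(HN+H^2)$ parameters: $\mathcal{O}(N)$ per DPLR SISO SSM, plus the $\mathcal{O}(H^2)$ position-wise mixing matrix. In recurrent/online mode, one step costs $\mathcal{O}(N)$ per SSM using the DPLR structure, hence $\mathcal{O}(HN)$ across channels, plus $\mathcal{O}(H^2)$ for the mixing layer, giving $\mathcal{O}((HN+H^2)L)$ runtime over the sequence and $\mathcal{O}(HN+H^2)$ memory (plus $\mathcal{O}(HL)$ if the input/output sequence is retained). In convolution/offline mode, forming the $H$ length-$L$ kernels costs $\widetilde{\mathcal{O}}(H(N+L))$ (the Cauchy-kernel / truncated-generating-function computation), the FFT convolution costs $\mathcal{O}(HL\log L)$, and the mixing layer $\mathcal{O}(H^2L)$, for a total of $\widetilde{\mathcal{O}}(H(N+L)+H^2L)$ runtime and $\mathcal{O}(HN+H^2+HL)$ memory.

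For S5 I would derive the analogue from Sections~\ref{subsec:simplestruct}--\ref{subsec:simplecomp}. Parameter count: $\tilde{\mathbf{B}}\in\mathbb{C}^{P\times H}$, $\tilde{\mathbf{C}}\in\mathbb{C}^{H\times P}$, $\mathrm{diag}(\mathbf{D})\in\mathbb{R}^{H}$, $\mathrm{diag}(\mathbf{\Lambda})\in\mathbb{C}^{P}$, $\mathbf{\Delta}\in\mathbb{R}^{P}$, hence $\mathcal{O}(PH)$, and discretizing via \eqref{eq:diag_disc_params} is $\mathcal{O}(PH)$ since $\overline{\mathbf{\Lambda}}$ and $\mathbf{\Lambda}^{-1}(\overline{\mathbf{\Lambda}}-\mathbf{I})$ are diagonal. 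Online mode: each step is $\overline{\mathbf{\Lambda}}\tilde{\mathbf{x}}_{k-1}$ ($\mathcal{O}(P)$, diagonal), $\overline{\mathbf{B}}\mathbf{u}_k$ ($\mathcal{O}(PH)$), $\tilde{\mathbf{C}}\tilde{\mathbf{x}}_k$ ($\mathcal{O}(PH)$), $\mathbf{D}\mathbf{u}_k$ ($\mathcal{O}(H)$), and --- crucially --- no mixing layer, so $\mathcal{O}(PH)$ per step, $\mathcal{O}(PHL)$ over the sequence, $\mathcal{O}(PH)$ memory (plus $\mathcal{O}(HL)$ if the sequence is retained). Offline mode: precompute $\overline{\mathbf{B}}\mathbf{u}_k$ for all $k$ ($\mathcal{O}(PHL)$, one $P\times H$ by $H\times L$ matmul); run the parallel scan on the \emph{diagonal} recurrence, which by Section~\ref{subsec:parscan} is work-efficient with $\mathcal{O}(PL)$ total operations, $\mathcal{O}(PL)$ space, and $\mathcal{O}(P\log L)$ depth; then compute $\mathbf{y}_k=\tilde{\mathbf{C}}\tilde{\mathbf{x}}_k+\mathbf{D}\mathbf{u}_k$ for all $k$ ($\mathcal{O}(PHL)$). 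Total: $\mathcal{O}(PHL)$ runtime and $\mathcal{O}(PH+PL+HL)$ memory. Substituting $P=\mathcal{O}(H)$ turns all S5 bounds into $\mathcal{O}(H^2)$ parameters, $\mathcal{O}(H^2L)$ runtime, and $\mathcal{O}(H^2+HL)$ memory in both modes, matching the S4 bounds above term by term (with $N=\mathcal{O}(H)$, and noting S5 additionally avoids the FFT $\log L$ factor).

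The step I expect to be the main obstacle --- really the only non-bookkeeping point --- is correctly accounting for the offline S5 cost and explaining why it is \emph{not} dominated by a blow-up in the scan, in contrast to the remark in Section~\ref{background:S4} that running a parallel scan over S4's SSMs would be more expensive than its convolution. Two facts need to be stated carefully to make this go through: (i) S5 uses a single SSM whose state matrix is kept \emph{diagonal} (not DPLR), so the scan's associative combine is an elementwise product of $P$-vectors at every level rather than a growing-rank or dense $P\times P$ matrix product, yielding $\mathcal{O}(PL)$ rather than, say, $\mathcal{O}(P^2L)$ work; and (ii) the input and output projections $\overline{\mathbf{B}}\mathbf{u}_{1:L}$ and $\tilde{\mathbf{C}}\tilde{\mathbf{x}}_{1:L}$, at $\mathcal{O}(PHL)$, actually dominate the $\mathcal{O}(PL)$ scan, so the whole layer is $\mathcal{O}(PHL)=\mathcal{O}(H^2L)$ once $P=\mathcal{O}(H)$. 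The remaining work is the term-by-term comparison, for which the only subtlety is being honest about the $N$-versus-$P$-versus-$H$ relationship, which I would surface as the stated hypothesis $N=\mathcal{O}(H)$.
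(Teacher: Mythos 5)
Your proposal is correct and follows essentially the same route as the paper's proof in Appendix \ref{app:comp_proof}: bound S4's offline cost by its FFTs plus the $\mathcal{O}(H^2L)$ mixing layer, bound S5's offline cost by the $\mathcal{O}(PHL)$ input/output projections dominating the $\mathcal{O}(PL)$ diagonal scan, do the analogous per-step accounting for the recurrent mode, and substitute $P=\mathcal{O}(H)$ (with the same explicit caveat that $N=\mathcal{O}(H)$ is needed for the online case). The only additions beyond the paper's argument are the parameter-count tabulation and the kernel-formation term, neither of which changes the conclusion.
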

\begin{proof}
See Appendix \ref{app:comp_proof}.
\end{proof}
We also support this proposition with empirical comparisons in Appendix \ref{apps:runtimecomp}.

\section{Relationship Between S4 and S5}
\label{sec:s4s5rel}
We now establish a relationship between the dynamics of S5 and S4.  In Section \ref{app:sec:equiv} we show that, under certain conditions, the outputs of the S5 SSM can be \emph{interpreted} as a projection of the latent states computed by a particular S4 system. This interpretation motivates using HiPPO initializations for S5, which we discuss in more detail in Section \ref{sec:s4s5rel:diag}.  In Section \ref{sec:s4s5rel:untying} we discuss how the conditions required to relate the dynamics further motivate initialization and parameterization choices.

\subsection{Different Output Projections of Equivalent Dynamics}
\label{app:sec:equiv}
We compare the dynamics of S4 and S5 under some simplifying assumptions: 

\begin{assumption}
\label{ass:h_to_h}
We consider only $H$-dimensional to $H$-dimensional sequence maps.
\end{assumption}
\begin{assumption}
\label{ass:tied_a}
We assume the state matrix of each S4 SSM is identical, $\mathbf{A}^{(h)} = \mathbf{A} \in \mathbb{C}^{N \times N}$.
\end{assumption}
\begin{assumption}
\label{ass:tied_delta}
We assume the timescales of each S4 SSM are identical, $\Delta^{(h)} = \Delta \in \mathbb{R}_+$
\end{assumption}
\begin{assumption}
\label{ass:tied_s5}
We assume that the same state matrix $\mathbf{A}$ is used in S5 as in S4 (also cf. Assumption \ref{ass:tied_a}).  Note this also specifies the S5 latent size $P=N$.  We also assume the S5 input matrix is the horizontal concatenation of the column input vectors used by S4: $\mathbf{B} \triangleq \left[ \mathbf{B}^{(1)} \mid \ldots \mid \mathbf{B}^{(H)} \right]$.
\end{assumption}
We will discuss relaxing these assumptions shortly, but under these conditions it is straightforward to derive a relationship between the dynamics of S4 and S5:
\begin{prop}
\label{prop:equiv}
Consider an S5 layer, with state matrix $\mathbf{A}$, input matrix $\mathbf{B}$ and some output matrix $\mathbf{C}$ (cf. Assumption \ref{app:ass:h_to_h}); and an S4 layer, where each of the $H$ S4 SSMs has state matrix $\mathbf{A}$ (cf. Assumption \ref{app:ass:tied_a}, \ref{ass:tied_s5}) and input vector $\mathbf{B}^{(h)}$ (cf. Assumption \ref{app:ass:tied_s5}).  If the S4 and S5 layers are discretized with the same timescales (cf. Assumption \ref{ass:tied_delta}), then the S5 SSM produces outputs, $\mathbf{y}_k$, equivalent to a linear combination of the latent states of the $H$ S4 SSMs, $\mathbf{y}_k = \mathbf{C}^{\mathrm{equiv}}\mathbf{x}_k^{(1:H)}$, where\ $\mathbf{C}^{\mathrm{equiv}} = \left[\ \mathbf{C}\ \cdots\ \mathbf{C}\ \right]$.
\end{prop}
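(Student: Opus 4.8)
The plan is to exploit the key structural fact that all $H$ S4 SSMs share the same state matrix $\mathbf{A}$ (Assumption \ref{ass:tied_a}) and the same timescale $\Delta$ (Assumption \ref{ass:tied_delta}), so they share the same discretized state matrix $\overline{\mathbf{A}}$. First I would write out the discretized recurrence for the concatenated S4 state. Each S4 SSM obeys $\mathbf{x}_k^{(h)} = \overline{\mathbf{A}}\,\mathbf{x}_{k-1}^{(h)} + \overline{\mathbf{B}}^{(h)} u_k^{(h)}$, where $\overline{\mathbf{B}}^{(h)}$ is the ZOH discretization of $\mathbf{B}^{(h)}$ (which depends only on $\mathbf{A}$, $\Delta$, and $\mathbf{B}^{(h)}$, hence is exactly the $h$-th column of the ZOH discretization $\overline{\mathbf{B}}$ of the S5 input matrix $\mathbf{B} = [\mathbf{B}^{(1)} \mid \cdots \mid \mathbf{B}^{(H)}]$). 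Stacking these $H$ recurrences gives
\begin{align}
\mathbf{x}_k^{(1:H)} = \left(\mathbf{I}_H \otimes \overline{\mathbf{A}}\right)\mathbf{x}_{k-1}^{(1:H)} + \widetilde{\mathbf{B}}\,\mathbf{u}_k,
\label{eq:stacked}
\end{align}
where $\widetilde{\mathbf{B}}$ is the block-diagonal matrix with $\overline{\mathbf{B}}^{(h)}$ on the $h$-th block (mapping the $h$-th input coordinate to the $h$-th state block).

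Next I would compare this to the S5 recurrence. By Assumption \ref{ass:tied_s5}, S5 uses the same $\mathbf{A}$ and the same $\Delta$, so its discretized state matrix is $\overline{\mathbf{A}}$ and its discretized input matrix is $\overline{\mathbf{B}} = [\overline{\mathbf{B}}^{(1)} \mid \cdots \mid \overline{\mathbf{B}}^{(H)}]$, giving $\mathbf{x}_k = \overline{\mathbf{A}}\,\mathbf{x}_{k-1} + \overline{\mathbf{B}}\,\mathbf{u}_k = \overline{\mathbf{A}}\,\mathbf{x}_{k-1} + \sum_{h=1}^H \overline{\mathbf{B}}^{(h)} u_k^{(h)}$. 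The crucial observation, proved by an easy induction on $k$ (with zero initial conditions), is that the S5 state is the sum of the S4 states: $\mathbf{x}_k = \sum_{h=1}^H \mathbf{x}_k^{(h)}$. The induction step is immediate from linearity once the base case $\mathbf{x}_0 = \sum_h \mathbf{x}_0^{(h)} = 0$ is in place. Equivalently, $\mathbf{x}_k = \left[\mathbf{I}_P \mid \cdots \mid \mathbf{I}_P\right]\mathbf{x}_k^{(1:H)}$, summing the $H$ blocks.

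Finally I would push this through the output map. The S5 SSM output (ignoring the feedthrough term, or noting it is handled identically on both sides) is $\mathbf{y}_k = \mathbf{C}\mathbf{x}_k = \mathbf{C}\sum_{h=1}^H \mathbf{x}_k^{(h)} = \sum_{h=1}^H \mathbf{C}\mathbf{x}_k^{(h)} = \left[\ \mathbf{C}\ \cdots\ \mathbf{C}\ \right]\mathbf{x}_k^{(1:H)}$, which is exactly the claimed identity with $\mathbf{C}^{\mathrm{equiv}} = \left[\ \mathbf{C}\ \cdots\ \mathbf{C}\ \right]$. One subtlety I would be careful about is the role of the diagonalizing change of basis used in the S5 parameterization \eqref{eq:S5ssm}: since $\widetilde{\mathbf{C}}\widetilde{\mathbf{x}} = \mathbf{C}\mathbf{V}\mathbf{V}^{-1}\mathbf{x} = \mathbf{C}\mathbf{x}$, the diagonalization is just an internal reparameterization and does not affect the input-output map, so the argument can be carried out either in the original $\mathbf{A}$-basis or in the $\mathbf{\Lambda}$-basis. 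The only real obstacle is bookkeeping: making sure the ZOH discretization of $\mathbf{B}^{(h)}$ within S4 genuinely coincides with the $h$-th column of the ZOH discretization of the concatenated $\mathbf{B}$ in S5 — but since ZOH discretization is a linear map of $\mathbf{B}$ that depends only on $(\mathbf{A},\Delta)$, which are shared, this is immediate and everything else is linear-algebra routine.
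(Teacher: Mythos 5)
Your proposal is correct and follows essentially the same route as the paper's proof: both hinge on the observation that, under the stated assumptions, the S5 state satisfies $\mathbf{x}_k = \sum_{h=1}^H \mathbf{x}_k^{(h)}$ by linearity, after which $\mathbf{y}_k = \mathbf{C}\sum_h \mathbf{x}_k^{(h)} = \left[\,\mathbf{C} \mid \cdots \mid \mathbf{C}\,\right]\mathbf{x}_k^{(1:H)}$ is immediate. The only cosmetic difference is that you establish the state identity by induction on the recurrence, whereas the paper unrolls it into the explicit sum $\mathbf{x}_k = \sum_{i=1}^k \overline{\mathbf{A}}^{k-i}\overline{\mathbf{B}}\mathbf{u}_i$; your extra remarks on the ZOH column-wise consistency and the invariance under the diagonalizing change of basis are correct and, if anything, make the bookkeeping more explicit than the paper does.
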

\begin{proof}
See Appendix \ref{app:sec:relation:equiv}.
\end{proof}
Importantly, the S5 SSM outputs are \emph{not} equal to the outputs of the block-diagonal S4 SSM.  Instead they are equivalent to the outputs of the block-diagonal S4 SSM with modified output matrix $\mathbf{C}^{\mathrm{equiv}}$.  Under the assumptions, however, the underlying state dynamics \emph{are} equivalent.  Recalling that initializing the S4 dynamics with HiPPO was key to performance~\citep{gu2021s4}, the relationship established in Proposition~\ref{prop:equiv} motivates using HiPPO initializations for S5, as we now discuss.

\subsection{Diagonalizable Initialization}
\label{sec:s4s5rel:diag}
Ideally, given the interpretation above, we would initialize S5 with the exact HiPPO-LegS matrix. Unfortunately, as discussed in Section \ref{background:S4}, this matrix is not stably diagonalizable, as is required for the efficient parallel scans used for S5. However, \citet{gupta2022dss} and \citet{gu2022parameterization} showed empirically that removing the low rank terms and initializing with the diagonalized HiPPO-N matrix still performed well. \citet{gu2022parameterization} offered a theoretical justification for the use of this normal approximation for single-input systems: in the limit of infinite state dimension, the linear ODE with HiPPO-N state matrix produces the same dynamics as an ODE with the HiPPO-LegS matrix.  Using linearity, it is straightforward to extend this result to the multi-input system that S5 uses:
\begin{corollary}[Extension of Theorem 3 in \citet{gu2022parameterization}]
\label{prop:s4d_extension}
Consider $\mathbf{A}_{\mathrm{LegS}}\in \mathbb{R}^{N\times N}$, $\mathbf{A}_{\mathrm{LegS}}^{\mathrm{Normal}}\in \mathbb{R}^{N\times N}$, $\mathbf{B}_{\mathrm{LegS}}\in \mathbb{R}^{N \times H}, \mathbf{P}_{\mathrm{LegS}}\in \mathbb{R}^{N}$ as defined in Appendix \ref{subsec:initA}. Given vector-valued inputs $\mathbf{u}(t)\in \mathbb{R}^H$, the ordinary differential equation $\dfrac{\d \mathbf{x}'(t)}{\d t} = \mathbf{A}_{\mathrm{LegS}}^{\mathrm{Normal}} \mathbf{x}'(t) + \frac{1}{2}\mathbf{B}_{\mathrm{LegS}}\mathbf{u}(t)$ converges to $\dfrac{\d \mathbf{x}(t)}{\d t} = \mathbf{A}_{\mathrm{LegS}}\mathbf{x}(t) + \mathbf{B}_{\mathrm{LegS}}\mathbf{u}(t)$ as $N \rightarrow \infty$.
\end{corollary}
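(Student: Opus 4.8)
The plan is to reduce the multi-input claim to the single-input result of \citet{gu2022parameterization} (their Theorem 3) by exploiting linearity of the ODE in the input. First I would recall the setup: write $\mathbf{B}_{\mathrm{LegS}} = \left[ \mathbf{b}^{(1)} \mid \cdots \mid \mathbf{b}^{(H)} \right]$ in terms of its columns, and correspondingly decompose the vector-valued input as $\mathbf{u}(t) = \sum_{h=1}^{H} u^{(h)}(t)\, \mathbf{e}_h$ where $\mathbf{e}_h$ is the $h$-th standard basis vector of $\mathbb{R}^H$. Then $\mathbf{B}_{\mathrm{LegS}}\mathbf{u}(t) = \sum_{h=1}^{H} u^{(h)}(t)\, \mathbf{b}^{(h)}$, so both ODEs in the statement are driven by a superposition of $H$ scalar-driven forcing terms, each with the same state matrix ($\mathbf{A}_{\mathrm{LegS}}^{\mathrm{Normal}}$ or $\mathbf{A}_{\mathrm{LegS}}$ respectively).

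Next I would invoke superposition for linear ODEs. Since the map from forcing function to solution (with, say, zero initial condition, or any fixed shared initial condition handled separately as the homogeneous part) is linear, the solution $\mathbf{x}'(t)$ of the normal system decomposes as $\mathbf{x}'(t) = \sum_{h=1}^{H} \mathbf{x}'^{(h)}(t)$, where $\mathbf{x}'^{(h)}$ solves the scalar-input ODE $\frac{\d}{\d t}\mathbf{x}'^{(h)}(t) = \mathbf{A}_{\mathrm{LegS}}^{\mathrm{Normal}} \mathbf{x}'^{(h)}(t) + \tfrac12 u^{(h)}(t)\,\mathbf{b}^{(h)}$; similarly $\mathbf{x}(t) = \sum_h \mathbf{x}^{(h)}(t)$ for the HiPPO-LegS system. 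For each fixed $h$, the pair $(\mathbf{x}'^{(h)}, \mathbf{x}^{(h)})$ is exactly an instance covered by Theorem 3 of \citet{gu2022parameterization} applied with input channel $u^{(h)}$ and input vector $\mathbf{b}^{(h)}$ (the factor $\tfrac12$ matching the $\frac12\mathbf{B}_{\mathrm{LegS}}$ scaling in the statement), so $\mathbf{x}'^{(h)}(t) \to \mathbf{x}^{(h)}(t)$ as $N\to\infty$ in whatever sense Theorem 3 provides (pointwise in $t$, or in the appropriate function-space norm). Summing over the finite index set $h\in\{1,\dots,H\}$ and using that a finite sum of convergent sequences converges to the sum of the limits yields $\mathbf{x}'(t) \to \mathbf{x}(t)$, which is the claim.

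A few details I would make explicit. I need to note that Theorem 3 as originally stated is for a single input channel; the columns $\mathbf{b}^{(h)}$ of $\mathbf{B}_{\mathrm{LegS}}$ are (up to the relevant normalization in Appendix~\ref{subsec:initA}) each a copy of the single-input HiPPO-LegS $\mathbf{B}$ vector, so Theorem 3 applies verbatim to each summand — I should check that the hypotheses of Theorem 3 (on the input regularity and on $\mathbf{B}_{\mathrm{LegS}}$) are inherited channel-by-channel, which is immediate since each $u^{(h)}$ has the same regularity assumed of $\mathbf{u}$. I would also state the initial-condition convention (typically $\mathbf{x}(0)=\mathbf{x}'(0)=0$, matching the HiPPO online-approximation setting) so the decomposition into $H$ summands is clean; if nonzero shared initial conditions are allowed, the homogeneous solutions $e^{t\mathbf{A}_{\mathrm{LegS}}^{\mathrm{Normal}}}\mathbf{x}_0$ and $e^{t\mathbf{A}_{\mathrm{LegS}}}\mathbf{x}_0$ must also be shown to converge, which again is the $H=1$, zero-input case of the underlying result. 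The main obstacle here is essentially bookkeeping rather than mathematics: there is no hard estimate to redo, since all the analytic content — the $N\to\infty$ limit — is imported from Theorem 3; the only real work is (i) confirming the column-wise structure of $\mathbf{B}_{\mathrm{LegS}}$ and the $\tfrac12$ normalization line up with the single-input theorem, and (ii) being careful that the mode of convergence supplied by Theorem 3 is one that is preserved under finite sums (it is, for any norm or pointwise convergence). Accordingly I expect the proof to be short: state the decomposition, cite Theorem 3 per channel, sum.
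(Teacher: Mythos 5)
Your proposal is correct and follows essentially the same route as the paper's proof: decompose the vector input into $H$ scalar channels, apply Theorem 3 of \citet{gu2022parameterization} to each single-input subsystem, and sum the resulting states by linearity (the paper phrases this via the identity $\mathbf{x}(t) = \sum_{h=1}^H \mathbf{x}^{(h)}(t)$, which is exactly your superposition step). Your additional remarks about initial conditions and preservation of the mode of convergence under finite sums are sensible bookkeeping that the paper leaves implicit.
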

We include a simple proof of this extension in Appendix \ref{app:sec:relation:init}.  This extension motivates the use of HiPPO-N to initialize S5's MIMO SSM. Note that S4D (the diagonal extension of S4) uses the same HiPPO-N matrix. Thus, when under the assumptions in Proposition \ref{prop:equiv}, an S5 SSM in fact produces outputs that are equivalent to a linear combination of the latent states produced by S4D's SSMs. Our empirical results in Section \ref{sec:exp} suggest that S5 initialized with the HiPPO-N matrix performs just as well as S4 initialized with the HiPPO-LegS matrix.

\subsection{Relaxing the Assumptions}
\label{sec:s4s5rel:untying}

We now revisit the assumptions required for Proposition \ref{prop:equiv}, since they only relate a constrained version of S5 to a constrained version of S4. Regarding Assumption \ref{ass:tied_a}, \citet{gu2021s4} report that S4 models with tied state matrices can still perform well, though allowing different state matrices often yields higher performance. Likewise, requiring a single scalar timescale across all of the S4 SSMs, per Assumption \ref{ass:tied_delta}, is restrictive.  S4 typically learns different timescale parameters for each SSM~\citep{gu2022train} to capture different timescales in the data. To relax these assumptions, note that Assumption \ref{ass:tied_s5} constrains S5 to have dimension $P=N$, and $N$ is typically much smaller than the dimensionality of the inputs, $H$. Proposition~\ref{prop:efficiency} established that S5 can match S4's complexity with $P = \mathcal{O}(H)$. By allowing for larger latent state sizes, Assumptions 2 and 3 can be relaxed, as discussed in Appendix \ref{app:sec:relation:blocks}. We also discuss how this relaxation motivates a block-diagonal initialization with HiPPO-N matrices on the diagonal. Finally, to further relax the tied timescale assumptions, we note that in practice, we find improved performance by learning $P$ different timescales (one per state). See Appendix \ref{app:sec:relation:timescales} for further discussion of this empirical finding and the ablations in Appendix \ref{app:simo-ablation}.

\section{Related work}
\label{sec: related_worrk}
S5 is most directly related to S4 and its other extensions, which we have discussed thoroughly. However, there is prior literature that uses similar ideas to those developed here. For example, prior work studied approximating nonlinear RNNs with stacks of linear RNNs connected by nonlinear layers, while also using parallel scans~\citep{martin2018parallelizing}.  \citet{martin2018parallelizing} showed that several efficient RNNs, such as QRNNs~\citep{bradbury2016quasi} and SRUs~\citep{lei2017simple}, fall into a class of linear surrogate RNNs that can leverage parallel scans. \citet{kaul2020linear} also used parallel scans for an approach that approximates RNNs with stacks of discrete-time single-input, multi-output (SIMO) SSMs.  However, S4 and S5 are the only methods to significantly outperform other comparable state-of-the-art nonlinear RNNs, transformers and convolution approaches.  Our ablation study in Appendix \ref{app:ablation:disc_hippo} suggests that this performance gain over prior attempts at parallelized linear RNNs is likely due to the continuous-time parameterization and the HiPPO initialization.
\section{Experiments}
\label{sec:exp}
We now compare empirically the performance of the S5 layer to the S4 layer and other baseline methods.  We use the S5 layer as a drop-in replacement for the S4 layer.  The architecture consists of a linear input encoder, stacks of S5 layers, and a linear output decoder~\citep{gu2021s4}.  For all experiments we choose the S5 dimensions to ensure similar computational complexities as S4, following the conditions discussed in Section \ref{subsec:simplecomp}, as well as comparable parameter counts.  The results we present show that the S5 layer matches the performance and efficiency of the S4 layer.  We include in the appendix further ablations, baselines and runtime comparisons.

\subsection{Long Range Arena}
\label{subsec:LRA}
The long range arena (LRA) benchmark~\citep{tay2020lra} is a suite of six sequence modeling tasks, with sequence lengths from 1,024 to over 16,000.  The suite was specifically developed to benchmark the performance of architectures on long-range modeling tasks (see Appendix \ref{app:sec:configs} for more details). Table \ref{tab:lra} presents S5's LRA performance in comparison to other methods. S5 achieves the highest average score among methods that have linear complexity in sequence length (most notably S4, S4D, and the concurrent works: Liquid-S4~\citep{hasani2022liquid} and Mega-chunk~\citep{ma2022mega}). Most significantly, S5 achieves the highest score among all models (including Mega~\citep{ma2022mega}) on the Path-X task, which has by far the longest sequence length of the tasks in the benchmark.  

\begin{table}
  \captionsetup{justification=centering}
  \caption{Test accuracy on the LRA benchmark tasks~\citep{tay2020lra}. \xmark\ indicates the model did not exceed random guessing.  We include an expanded table, Table \ref{app:tab:lra}, with full citations and error bars in the appendix.  We follow the procedure reported in \citet{gu2021s4, gu2022parameterization} and report means across three seeds for S4, S4D (as reported by \citet{gu2021s4, gu2022parameterization}) and S5.  Bold scores indicate highest performance, underlined scores indicate second placed performance.  We also include the results for the concurrent methods \emph{Liquid-S4}~\citep{hasani2022liquid} and \emph{Mega}~\citep{ma2022mega}. Unlike S4 methods and S5, the best Mega model retains the transformer's $\mathcal{O}(L^2)$ complexity.}
  \label{tab:lra}
  \centering
  \begin{adjustbox}{center}
    \begin{tabular}{@{}lccccccc@{}}
    \toprule
    Model               & \texttt{ListOps}                  & \texttt{Text}                 & \texttt{Retrieval}            & \texttt{Image}                & \texttt{Pathfinder}       & \texttt{Path-X}     & Avg.                              \vspace{2pt}\\ 
    (Input length)      & (2,048)                           & (4,096)                       & (4,000)                       & (1,024)                       & (1,024)                   & (16,384)                                          \\ \midrule    
    Transformer        & 36.37                             & 64.27                         & 57.46                         & 42.44                         & 71.40                     & \xmark            & 53.66                                    \\
    Luna-256    & 37.25                             & 64.57                         & 79.29                         & 47.38                         & 77.72                     & \xmark                & 59.37                            \\
    H-Trans.-1D  & 49.53                             & 78.69                         & 63.99                         & 46.05                         & 68.78                     & \xmark               & 61.41                             \\ 
    CCNN  & 43.60                             & 84.08                         & \xmark                         & 88.90                         & 91.51                     & \xmark               & 68.02                             \\  \midrule
    Mega ($\mathcal{O}(L^2)$)  & \textbf{63.14}                             & \textbf{90.43}                         & \underline{91.25}                         & \textbf{90.44}                         & \textbf{96.01}                     & \underline{97.98}               & \textbf{88.21}                             \\  
    Mega-chunk ($\mathcal{O}(L)$)  & 58.76                             & \underline{90.19}                         & 90.97                         & 85.80                         & 94.41                     & 93.81               & 85.66                             \\   \midrule
    S4D-LegS & 60.47                    & 86.18                         & 89.46                         & 88.19                & 93.06                     & 91.95            & 84.89                        \\
    S4-LegS  & 59.60                    & 86.82                         & 90.90                         & 88.65                & 94.20                     & 96.35            & 86.09                        \\
    Liquid-S4  & \underline{62.75}                    & 89.02                         & 91.20                         & \underline{89.50}                & 94.8                     & 96.66            & 87.32                        \\ \midrule
    \textbf{S5}                  & 62.15                             & 89.31                & \textbf{91.40}                & 88.00                         & \underline{95.33}            & \textbf{98.58}          & \underline{87.46} \\ \bottomrule
    \end{tabular}
\end{adjustbox}
\end{table}

\subsection{Raw Speech Classification}
\label{subsec:speech}
The Speech Commands dataset~\citep{warden2018speech} contains high-fidelity sound recordings of different human readers reciting a word from a vocabulary of 35 words.  The task is to classify which word was spoken.  We show in Table \ref{tab:sc35_compact} that S5 outperforms the baselines, outperforms previous S4 methods and performs similarly to the concurrent Liquid-S4 method~\citep{hasani2022liquid}. As S4 and S5 methods are parameterized in continuous-time, these models can be applied to datasets with different sampling rates without the need for re-training, simply by globally re-scaling the timescale parameter $\Delta$ by the ratio between the new and old sampling rates.  The result of applying the best S5 model \emph{trained on 16kHz data}, to the speech data sampled (via decimation) at 8kHz, without any additional fine-tuning, is also presented in Table \ref{tab:sc35_compact}.  S5 also improves this metric over the baseline methods. 

\begin{table}
  \captionsetup{justification=centering}
  \caption{Test accuracy on 35-way Speech Commands classification task~\citep{warden2018speech}. We include an expanded table, Table \ref{tab:sc35}, with error bars in the appendix. Training examples are one-second 16kHz audio waveforms. Last column indicates 0-shot testing at 8kHz (constructed by naive decimation). As in \citet{gu2022parameterization}, the mean across three random seeds is reported. Performance for the baselines InceptionNet through to S4D-Lin are reported from \citet{gu2022parameterization}. }
  \label{tab:sc35_compact}
  \centering
  \begin{adjustbox}{center}
  \begin{tabular}{@{}lccc@{}}
    \toprule                   
    Model                        & Parameters                            & 16kHz                     & 8kHz                          \\ 
    (Input length)              &                                      & (16,000)                  & (8,000)                       \\
    \midrule
    InceptionNet \citep{nonaka2021depth}     & 481K  & 61.24           & 05.18  \\
    ResNet-1   \citep{nonaka2021depth}       & 216K   & 77.86                     & 08.74                         \\
    XResNet-50   \citep{nonaka2021depth}    & 904K       & 83.01                     & 07.72                         \\
    ConvNet    \citep{nonaka2021depth}      & 26.2M      & 95.51                     & 07.26                         \\
    \midrule
    S4-LegS     \citep{gu2021s4}     & 307K       & 96.08                     & \underline{91.32}                         \\
    S4D-LegS    \citep{gu2022parameterization}    & 306K            & 95.83                     & 91.08                         \\
    Liquid-S4    \citep{hasani2022liquid}   & 224K                 & \textbf{96.78}         & 90.00             \\ \midrule
    \textbf{S5}      & 280K                                            & \underline{96.52}            & \textbf{94.53}                \\ \bottomrule
  \end{tabular}
\end{adjustbox}
\end{table}

\subsection{Variable Observation Interval}
\label{sec:exp:variable}
The final application we study here highlights how S5 can naturally handle observations received at irregular intervals.  S5 does so by supplying a different $\Delta_t$ value to the discretization at each step.  We use the pendulum regression example presented by \citet{becker2019recurrent} and \citet{schirmer2022modeling}, illustrated in Figure \ref{fig:exp:pendulum_illustration}.  The input sequence is a sequence of $L=50$ images, each $24 \times 24$ pixels in size, that has been corrupted with a correlated noise process and sampled at irregular intervals from a continuous trajectory of duration $T=100$.  The targets are the sine and cosine of the angle of the pendulum, which follows a nonlinear dynamical system.  The velocity is unobserved.  We match the architecture, parameter count and training procedure of \citet{schirmer2022modeling}.  Table \ref{tab:results:pendulum_mse} summarizes the results of this experiment.  S5 outperforms CRU on the regression task, recovering a lower mean error.  Furthermore, S5 is markedly faster than CRU on the same hardware.  

\subsection{Pixel-level 1-D Image Classification}
\label{subsec:pixel}
Table~\ref{tab:pixellong} in Appendix \ref{sec:pixellong} shows results of S5 on other common benchmarks including sequential MNIST, permuted sequential MNIST and sequential CIFAR (color).  We see that S5 broadly matches the performance of S4, and outperforms a range of state-of-the-art RNN-based methods.  

\begin{figure}
    \centering
    \includegraphics[width=\textwidth]{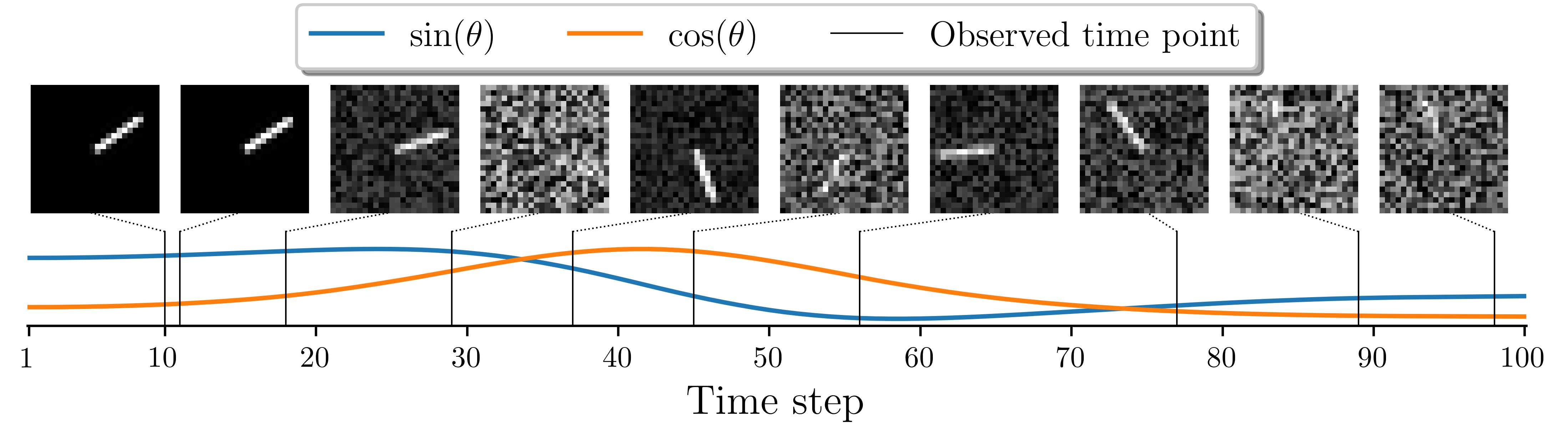}
    \caption{Illustration of the pendulum regression example.  Shown in the top row are the images used as input at the time points indicated.  Shown on the bottom are the values of $\sin(\theta_t)$ and $\cos(\theta_t)$, where $\theta_t$ is the angle of the pendulum at time $t$, that are used as the regression targets.  }
    \label{fig:exp:pendulum_illustration}
\end{figure}

\begin{table}
\caption[Pendulum interpolation results]{Regression MSE $\times10^{-3}$ (mean $\pm$ std) and relative application speed on the pendulum regression task on a held-out test set. Performance for the baselines, mTAND through to CRU, are reported from \citet{schirmer2022modeling}. We include an expanded table, Table \ref{app:tab:results:pendulum_mse}, and further details in the appendix.  Results for \emph{CRU (our run)} and \emph{S5} are across twenty seeds.}
\begin{center}
\vskip -0.1in
\begin{tabular}{l@{\hskip 5pt}ccc}
\toprule
Model               & Relative speed  & Regression MSE ($\times 10^{-3}$)     \\
\midrule
mTAND~\citep{shukla2021multitime}              & 12.2$\times$      & 65.64    (4.05)                   \\
RKN~\citep{becker2019recurrent}                & 1.9$\times$      & 8.43	   (0.61)                   \\
RKN-$\Delta_t$~\citep{becker2019recurrent}     & 1.9$\times$      & 5.09     (0.40)                   \\
ODE-RNN~\citep{rubanova2019latent}            & 1.0$\times$      & 7.26     (0.41)                   \\
CRU~\citep{schirmer2022modeling}                & 1.0$\times$      & 4.63    (1.07)                   \\
\midrule
CRU (our run)       & 1.0$\times$         & \underline{3.94}     (0.21)     \\
\textbf{S5}  & \textbf{86}$\times$     & \textbf{3.41}  (0.27)     \\
\bottomrule
\label{tab:results:pendulum_mse}
\end{tabular}
\end{center}
\end{table}

\section{Conclusion}
\label{sec:conclusion}
We introduce the S5 layer for long-range sequence modeling.  The S5 layer modifies the internal structure of the S4 layer, and replaces the frequency-domain approach used by S4 with a purely recurrent, time-domain approach leveraging parallel scans. S5 achieves high performance while retaining the computational efficiency of S4.  S5 also provides further opportunities.  For instance, unlike the convolutional S4 methods, the parallel scan unlocks the ability to efficiently and easily process time-varying SSMs whose parameters can vary with time. Section \ref{sec:exp:variable} illustrated an example of this for sequences sampled at variable sampling rates. The concurrently developed method, \emph{Liquid-S4}~\citep{hasani2022liquid}, uses an input-dependent bilinear dynamical system and highlights further opportunities for time-varying SSMs. The more general MIMO SSM design will also enable connections to be made with classical probabilistic state space modeling as well as more recent work on parallelizing filtering and smoothing operations~\citep{sarkka2020temporal}.  More broadly, we hope the simplicity and generality of the S5 layer can expand the use of state space layers in deep sequence modeling and lead to new formulations and extensions.

\clearpage

\section*{Acknowledgements and Disclosure of Funding}
We thank Albert Gu for his thorough and insightful feedback. We also acknowledge The Annotated S4 Blog~\citep{rushs4blog} which inspired our JAX implementation. This work was supported by grants from the Simons Collaboration on the Global
Brain (SCGB 697092), the NIH BRAIN Initiative (U19NS113201 and R01NS113119), and the Sloan
Foundation. Some of the computation for this work was made possible by Stanford Data Science
Microsoft Education Azure cloud credits.

\bibliography{99_bib}

\begin{thebibliography}{68}
\providecommand{\natexlab}[1]{#1}
\providecommand{\url}[1]{\texttt{#1}}
\expandafter\ifx\csname urlstyle\endcsname\relax
  \providecommand{\doi}[1]{doi: #1}\else
  \providecommand{\doi}{doi: \begingroup \urlstyle{rm}\Url}\fi

\bibitem[Arjovsky et~al.(2016)Arjovsky, Shah, and Bengio]{arjovsky2016unitary}
Martin Arjovsky, Amar Shah, and Yoshua Bengio.
\newblock Unitary evolution recurrent neural networks.
\newblock In \emph{International Conference on Machine Learning}, pp.\
  1120--1128. PMLR, 2016.

\bibitem[Bai et~al.(2018)Bai, Kolter, and Koltun]{bai2018empirical}
Shaojie Bai, J.~Zico Kolter, and Vladlen Koltun.
\newblock An empirical evaluation of generic convolutional and recurrent
  networks for sequence modeling.
\newblock \emph{arXiv preprint arXiv:1803.01271}, 2018.

\bibitem[Bai et~al.(2019)Bai, Kolter, and Koltun]{bai2018trellis}
Shaojie Bai, J.~Zico Kolter, and Vladlen Koltun.
\newblock Trellis networks for sequence modeling.
\newblock In \emph{International Conference on Learning Representations}, 2019.

\bibitem[Becker et~al.(2019)Becker, Pandya, Gebhardt, Zhao, Taylor, and
  Neumann]{becker2019recurrent}
Philipp Becker, Harit Pandya, Gregor Gebhardt, Cheng Zhao, C~James Taylor, and
  Gerhard Neumann.
\newblock Recurrent kalman networks: Factorized inference in high-dimensional
  deep feature spaces.
\newblock In \emph{International Conference on Machine Learning}, pp.\
  544--552. PMLR, 2019.

\bibitem[Beltagy et~al.(2020)Beltagy, Peters, and Cohan]{beltagy2020longformer}
Iz~Beltagy, Matthew Peters, and Arman Cohan.
\newblock Longformer: The long-document transformer.
\newblock \emph{arXiv preprint arXiv:2004.05150}, 2020.

\bibitem[Blelloch(1990)]{blelloch1990prefix}
Guy Blelloch.
\newblock Prefix sums and their applications.
\newblock Technical report, Tech. rept. CMU-CS-90-190. School of Computer
  Science, Carnegie Mellon, 1990.

\bibitem[Bradbury et~al.(2017)Bradbury, Merity, Xiong, and
  Socher]{bradbury2016quasi}
James Bradbury, Stephen Merity, Caiming Xiong, and Richard Socher.
\newblock Quasi-recurrent neural networks.
\newblock In \emph{International Conference on Learning Representations}, 2017.

\bibitem[Chang et~al.(2019)Chang, Chen, Haber, and
  Chi]{chang2019antisymmetricrnn}
Bo~Chang, Minmin Chen, Eldad Haber, and Ed~Chi.
\newblock Antisymmetric{RNN}: A dynamical system view on recurrent neural
  networks.
\newblock In \emph{International Conference on Learning Representations}, 2019.

\bibitem[Chang et~al.(2017)Chang, Zhang, Han, Yu, Guo, Tan, Cui, Witbrock,
  Hasegawa-Johnson, and Huang]{chang2017dilated}
Shiyu Chang, Yang Zhang, Wei Han, Mo~Yu, Xiaoxiao Guo, Wei Tan, Xiaodong Cui,
  Michael Witbrock, Mark~A Hasegawa-Johnson, and Thomas~S Huang.
\newblock Dilated recurrent neural networks.
\newblock \emph{Advances in Neural Information Processing Systems}, 30, 2017.

\bibitem[Chen et~al.(2018)Chen, Rubanova, Bettencourt, and
  Duvenaud]{chen2018neural}
Ricky Chen, Yulia Rubanova, Jesse Bettencourt, and David Duvenaud.
\newblock Neural ordinary differential equations.
\newblock \emph{Advances in neural information processing systems}, 31, 2018.

\bibitem[Chilkuri \& Eliasmith(2021)Chilkuri and
  Eliasmith]{chilkuri2021parallelizing}
Narsimha~Reddy Chilkuri and Chris Eliasmith.
\newblock Parallelizing {L}egendre memory unit training.
\newblock In \emph{International Conference on Machine Learning}, pp.\
  1898--1907. PMLR, 2021.

\bibitem[Cho et~al.(2014)Cho, van Merri{\"e}nboer, Bahdanau, and
  Bengio]{cho2014properties}
Kyunghyun Cho, Bart van Merri{\"e}nboer, Dzmitry Bahdanau, and Yoshua Bengio.
\newblock On the properties of neural machine translation: Encoder--decoder
  approaches.
\newblock \emph{Syntax, Semantics and Structure in Statistical Translation},
  pp.\  103, 2014.

\bibitem[Choromanski et~al.(2021)Choromanski, Likhosherstov, Dohan, Song, Gane,
  Sarlos, Hawkins, Davis, Mohiuddin, Kaiser, Belanger, Colwell, and
  Weller]{choromanski2021rethinking}
Krzysztof~Marcin Choromanski, Valerii Likhosherstov, David Dohan, Xingyou Song,
  Andreea Gane, Tamas Sarlos, Peter Hawkins, Jared~Quincy Davis, Afroz
  Mohiuddin, Lukasz Kaiser, David~Benjamin Belanger, Lucy Colwell, and Adrian
  Weller.
\newblock Rethinking attention with performers.
\newblock In \emph{International Conference on Learning Representations}, 2021.

\bibitem[Dauphin et~al.(2017)Dauphin, Fan, Auli, and
  Grangier]{dauphin2017languageGLU}
Yann Dauphin, Angela Fan, Michael Auli, and David Grangier.
\newblock Language modeling with gated convolutional networks.
\newblock In \emph{International Conference on Machine Learning}, pp.\
  933--941. PMLR, 2017.

\bibitem[De~Brouwer et~al.(2019)De~Brouwer, Simm, Arany, and Moreau]{de2019gru}
Edward De~Brouwer, Jaak Simm, Adam Arany, and Yves Moreau.
\newblock {GRU-ODE-Bayes}: Continuous modeling of sporadically-observed time
  series.
\newblock \emph{Advances in neural information processing systems}, 32, 2019.

\bibitem[Erichson et~al.(2021)Erichson, Azencot, Queiruga, Hodgkinson, and
  Mahoney]{erichson2021lipschitz}
N.~Benjamin Erichson, Omri Azencot, Alejandro Queiruga, Liam Hodgkinson, and
  Michael Mahoney.
\newblock Lipschitz recurrent neural networks.
\newblock In \emph{International Conference on Learning Representations}, 2021.

\bibitem[Goel et~al.(2022)Goel, Gu, Donahue, and Re]{goel2022sashimi}
Karan Goel, Albert Gu, Chris Donahue, and Christopher Re.
\newblock It’s raw! {A}udio generation with state-space models.
\newblock In \emph{Proceedings of the 39th International Conference on Machine
  Learning}, volume 162 of \emph{Proceedings of Machine Learning Research},
  pp.\  7616--7633. PMLR, 17--23 Jul 2022.

\bibitem[Gu et~al.(2020{\natexlab{a}})Gu, Dao, Ermon, Rudra, and
  R{\'e}]{gu2020hippo}
Albert Gu, Tri Dao, Stefano Ermon, Atri Rudra, and Christopher R{\'e}.
\newblock Hi{PPO}: Recurrent memory with optimal polynomial projections.
\newblock \emph{Advances in Neural Information Processing Systems},
  33:\penalty0 1474--1487, 2020{\natexlab{a}}.

\bibitem[Gu et~al.(2020{\natexlab{b}})Gu, Gulcehre, Paine, Hoffman, and
  Pascanu]{gu2020improving}
Albert Gu, Caglar Gulcehre, Thomas Paine, Matt Hoffman, and Razvan Pascanu.
\newblock Improving the gating mechanism of recurrent neural networks.
\newblock In \emph{International Conference on Machine Learning}, pp.\
  3800--3809. PMLR, 2020{\natexlab{b}}.

\bibitem[Gu et~al.(2021{\natexlab{a}})Gu, Goel, and Re]{gu2021s4}
Albert Gu, Karan Goel, and Christopher Re.
\newblock Efficiently modeling long sequences with structured state spaces.
\newblock In \emph{International Conference on Learning Representations},
  2021{\natexlab{a}}.

\bibitem[Gu et~al.(2021{\natexlab{b}})Gu, Johnson, Goel, Saab, Dao, Rudra, and
  R{\'e}]{gu2021lssl}
Albert Gu, Isys Johnson, Karan Goel, Khaled Saab, Tri Dao, Atri Rudra, and
  Christopher R{\'e}.
\newblock Combining recurrent, convolutional, and continuous-time models with
  linear state space layers.
\newblock \emph{Advances in Neural Information Processing Systems}, 34,
  2021{\natexlab{b}}.

\bibitem[Gu et~al.(2022)Gu, Goel, Gupta, and R{\'e}]{gu2022parameterization}
Albert Gu, Karan Goel, Ankit Gupta, and Christopher R{\'e}.
\newblock On the parameterization and initialization of diagonal state space
  models.
\newblock In \emph{Advances in Neural Information Processing Systems}, 2022.

\bibitem[Gu et~al.(2023)Gu, Johnson, Timalsina, Rudra, and Re]{gu2022train}
Albert Gu, Isys Johnson, Aman Timalsina, Atri Rudra, and Christopher Re.
\newblock How to train your {HIPPO}: State space models with generalized
  orthogonal basis projections.
\newblock In \emph{International Conference on Learning Representations}, 2023.

\bibitem[Gupta \& Berant(2020)Gupta and Berant]{gupta2020gmat}
Ankit Gupta and Jonathan Berant.
\newblock Gmat: Global memory augmentation for transformers, 2020.

\bibitem[Gupta et~al.(2022)Gupta, Gu, and Berant]{gupta2022dss}
Ankit Gupta, Albert Gu, and Jonathan Berant.
\newblock Diagonal state spaces are as effective as structured state spaces.
\newblock In \emph{Advances in Neural Information Processing Systems}, 2022.

\bibitem[Hasani et~al.(2023)Hasani, Lechner, Wang, Chahine, Amini, and
  Rus]{hasani2022liquid}
Ramin Hasani, Mathias Lechner, Tsun-Hsuan Wang, Makram Chahine, Alexander
  Amini, and Daniela Rus.
\newblock Liquid structural state-space models.
\newblock In \emph{International Conference on Learning Representations}, 2023.

\bibitem[Hochreiter \& Schmidhuber(1997)Hochreiter and
  Schmidhuber]{hochreiter1997long}
Sepp Hochreiter and J{\"u}rgen Schmidhuber.
\newblock Long short-term memory.
\newblock \emph{Neural {C}omputation}, 9\penalty0 (8):\penalty0 1735--1780,
  1997.

\bibitem[Iserles(2009)]{iserles2009first}
Arieh Iserles.
\newblock \emph{A first course in the numerical analysis of differential
  equations}.
\newblock 44. Cambridge university press, 2009.

\bibitem[Islam \& Bertasius(2022)Islam and Bertasius]{islam2022long}
Md~Mohaiminul Islam and Gedas Bertasius.
\newblock Long movie clip classification with state-space video models.
\newblock In \emph{Computer Vision--ECCV 2022: 17th European Conference, Tel
  Aviv, Israel, October 23--27, 2022, Proceedings, Part XXXV}, pp.\  87--104,
  2022.

\bibitem[Katharopoulos et~al.(2020)Katharopoulos, Vyas, Pappas, and
  Fleuret]{katharopoulos2020transformers}
Angelos Katharopoulos, Apoorv Vyas, Nikolaos Pappas, and Fran{\c{c}}ois
  Fleuret.
\newblock Transformers are {RNNs}: {F}ast autoregressive transformers with
  linear attention.
\newblock In \emph{International Conference on Machine Learning}, pp.\
  5156--5165. PMLR, 2020.

\bibitem[Kaul(2020)]{kaul2020linear}
Shiva Kaul.
\newblock Linear dynamical systems as a core computational primitive.
\newblock \emph{Advances in Neural Information Processing Systems},
  33:\penalty0 16808--16820, 2020.

\bibitem[Kitaev et~al.(2020)Kitaev, Kaiser, and Levskaya]{Kitaev2020Reformer:}
Nikita Kitaev, Lukasz Kaiser, and Anselm Levskaya.
\newblock Reformer: The efficient transformer.
\newblock In \emph{International Conference on Learning Representations}, 2020.

\bibitem[Krizhevsky(2009)]{krizhevsky2009learning}
Alex Krizhevsky.
\newblock Learning multiple layers of features from tiny images.
\newblock \emph{Master's thesis, University of Toronto}, 2009.

\bibitem[Ladner \& Fischer(1980)Ladner and Fischer]{ladner1980parallel}
Richard Ladner and Michael Fischer.
\newblock Parallel prefix computation.
\newblock \emph{Journal of the ACM (JACM)}, 27\penalty0 (4):\penalty0 831--838,
  1980.

\bibitem[Lakshmivarahan \& Dhall(1994)Lakshmivarahan and
  Dhall]{lakshmivarahan1994parallel}
Sivaramakrishnan Lakshmivarahan and Sudarshan Dhall.
\newblock \emph{Parallel computing using the prefix problem}.
\newblock Oxford University Press, 1994.

\bibitem[Lee-Thorp et~al.(2022)Lee-Thorp, Ainslie, Eckstein, and
  Ontanon]{lee2021fnet}
James Lee-Thorp, Joshua Ainslie, Ilya Eckstein, and Santiago Ontanon.
\newblock {FNet}: Mixing tokens with {F}ourier transforms.
\newblock In \emph{Proceedings of the 2022 Conference of the North American
  Chapter of the Association for Computational Linguistics: Human Language
  Technologies}, pp.\  4296--4313, 2022.

\bibitem[Lei et~al.(2018)Lei, Zhang, Wang, Dai, and Artzi]{lei2017simple}
Tao Lei, Yu~Zhang, Sida Wang, Hui Dai, and Yoav Artzi.
\newblock Simple recurrent units for highly parallelizable recurrence.
\newblock In \emph{Proceedings of the 2018 Conference on Empirical Methods in
  Natural Language Processing}, pp.\  4470--4481, 2018.

\bibitem[Lezcano-Casado \& Mart{\i}nez-Rubio(2019)Lezcano-Casado and
  Mart{\i}nez-Rubio]{lezcano2019cheap}
Mario Lezcano-Casado and David Mart{\i}nez-Rubio.
\newblock Cheap orthogonal constraints in neural networks: A simple
  parametrization of the orthogonal and unitary group.
\newblock In \emph{International Conference on Machine Learning}, pp.\
  3794--3803. PMLR, 2019.

\bibitem[Li et~al.(2018)Li, Li, Cook, Zhu, and Gao]{li2018independently}
Shuai Li, Wanqing Li, Chris Cook, Ce~Zhu, and Yanbo Gao.
\newblock Independently recurrent neural network ({INDRNN}): Building a longer
  and deeper {RNN}.
\newblock In \emph{Proceedings of the IEEE Conference on Computer Vision and
  Pattern Recognition}, pp.\  5457--5466, 2018.

\bibitem[Linsley et~al.(2018)Linsley, Kim, Veerabadran, Windolf, and
  Serre]{linsley2018learning}
Drew Linsley, Junkyung Kim, Vijay Veerabadran, Charles Windolf, and Thomas
  Serre.
\newblock Learning long-range spatial dependencies with horizontal gated
  recurrent units.
\newblock \emph{Advances in Neural Information Processing Systems}, 31, 2018.

\bibitem[Loshchilov \& Hutter(2019)Loshchilov and
  Hutter]{loshchilov2018decoupled}
Ilya Loshchilov and Frank Hutter.
\newblock Decoupled weight decay regularization.
\newblock In \emph{International Conference on Learning Representations}, 2019.

\bibitem[Ma et~al.(2021)Ma, Kong, Wang, Zhou, May, Ma, and
  Zettlemoyer]{ma2021luna}
Xuezhe Ma, Xiang Kong, Sinong Wang, Chunting Zhou, Jonathan May, Hao Ma, and
  Luke Zettlemoyer.
\newblock Luna: Linear unified nested attention.
\newblock \emph{Advances in Neural Information Processing Systems}, 34, 2021.

\bibitem[Ma et~al.(2023)Ma, Zhou, Kong, He, Gui, Neubig, May, and
  Zettlemoyer]{ma2022mega}
Xuezhe Ma, Chunting Zhou, Xiang Kong, Junxian He, Liangke Gui, Graham Neubig,
  Jonathan May, and Luke Zettlemoyer.
\newblock Mega: Moving average equipped gated attention.
\newblock In \emph{International Conference on Learning Representations}, 2023.

\bibitem[Maas et~al.(2011)Maas, Daly, Pham, Huang, Ng, and
  Potts]{maas2011learning}
Andrew Maas, Raymond Daly, Peter Pham, Dan Huang, Andrew Ng, and Christopher
  Potts.
\newblock Learning word vectors for sentiment analysis.
\newblock In \emph{Proceedings of the 49th Annual Meeting of the Association
  for Computational Linguistics: Human Language Technologies}, pp.\  142--150,
  2011.

\bibitem[Martin \& Cundy(2018)Martin and Cundy]{martin2018parallelizing}
Eric Martin and Chris Cundy.
\newblock Parallelizing linear recurrent neural nets over sequence length.
\newblock In \emph{International Conference on Learning Representations}, 2018.

\bibitem[Nangia \& Bowman(2018)Nangia and Bowman]{nangia2018listops}
Nikita Nangia and Samuel Bowman.
\newblock {ListOps}: A diagnostic dataset for latent tree learning.
\newblock \emph{NAACL HLT 2018}, pp.\ ~92, 2018.

\bibitem[Nonaka \& Seita(2021)Nonaka and Seita]{nonaka2021depth}
Naoki Nonaka and Jun Seita.
\newblock In-depth benchmarking of deep neural network architectures for ecg
  diagnosis.
\newblock In \emph{Machine Learning for Healthcare Conference}, pp.\  414--439.
  PMLR, 2021.

\bibitem[Oord et~al.(2016)Oord, Dieleman, Zen, Simonyan, Vinyals, Graves,
  Kalchbrenner, Senior, and Kavukcuoglu]{oord2016wavenet}
Aaron van~den Oord, Sander Dieleman, Heiga Zen, Karen Simonyan, Oriol Vinyals,
  Alex Graves, Nal Kalchbrenner, Andrew Senior, and Koray Kavukcuoglu.
\newblock Wavenet: A generative model for raw audio.
\newblock \emph{arXiv preprint arXiv:1609.03499}, 2016.

\bibitem[Radev et~al.(2009)Radev, Muthukrishnan, and Qazvinian]{radev2009acl}
Dragomir Radev, Pradeep Muthukrishnan, and Vahed Qazvinian.
\newblock The {ACL} anthology network corpus.
\newblock \emph{ACL-IJCNLP 2009}, pp.\ ~54, 2009.

\bibitem[Romero et~al.(2021)Romero, Bruintjes, Tomczak, Bekkers, Hoogendoorn,
  and van Gemert]{romero2021flexconv}
David Romero, Robert-Jan Bruintjes, Jakub~Mikolaj Tomczak, Erik Bekkers, Mark
  Hoogendoorn, and Jan van Gemert.
\newblock Flexconv: Continuous kernel convolutions with differentiable kernel
  sizes.
\newblock In \emph{International Conference on Learning Representations}, 2021.

\bibitem[Romero et~al.(2022{\natexlab{a}})Romero, Knigge, Gu, Bekkers, Gavves,
  Tomczak, and Hoogendoorn]{romero2022towards}
David Romero, David Knigge, Albert Gu, Erik Bekkers, Efstratios Gavves, Jakub
  Tomczak, and Mark Hoogendoorn.
\newblock Towards a general purpose {CNN} for long range dependencies in
  {$ND$}.
\newblock \emph{arXiv preprint arXiv:2206.03398}, 2022{\natexlab{a}}.

\bibitem[Romero et~al.(2022{\natexlab{b}})Romero, Kuzina, Bekkers, Tomczak, and
  Hoogendoorn]{romero2021ckconv}
David Romero, Anna Kuzina, Erik Bekkers, Jakub~Mikolaj Tomczak, and Mark
  Hoogendoorn.
\newblock {CKC}onv: Continuous kernel convolution for sequential data.
\newblock In \emph{International Conference on Learning Representations},
  2022{\natexlab{b}}.

\bibitem[Rubanova et~al.(2019)Rubanova, Chen, and Duvenaud]{rubanova2019latent}
Yulia Rubanova, Ricky Chen, and David Duvenaud.
\newblock Latent ordinary differential equations for irregularly-sampled time
  series.
\newblock \emph{Advances in Neural Information Processing Systems}, 32, 2019.

\bibitem[Rusch \& Mishra(2021)Rusch and Mishra]{rusch2021unicornn}
T.~Konstantin Rusch and Siddhartha Mishra.
\newblock Unicornn: A recurrent model for learning very long time dependencies.
\newblock In \emph{International Conference on Machine Learning}, pp.\
  9168--9178. PMLR, 2021.

\bibitem[Rush \& Karamcheti(2022)Rush and Karamcheti]{rushs4blog}
Sasha Rush and Sidd Karamcheti.
\newblock The {A}nnotated {S}4.
\newblock In \emph{Blog Track at ICLR 2022}, 2022.
\newblock URL \url{https://srush.github.io/annotated-s4/}.

\bibitem[S{\"a}rkk{\"a} \& Garc{\'\i}a-Fern{\'a}ndez(2020)S{\"a}rkk{\"a} and
  Garc{\'\i}a-Fern{\'a}ndez]{sarkka2020temporal}
Simo S{\"a}rkk{\"a} and {\'A}ngel~F Garc{\'\i}a-Fern{\'a}ndez.
\newblock Temporal parallelization of {B}ayesian smoothers.
\newblock \emph{IEEE Transactions on Automatic Control}, 66\penalty0
  (1):\penalty0 299--306, 2020.

\bibitem[Schirmer et~al.(2022)Schirmer, Eltayeb, Lessmann, and
  Rudolph]{schirmer2022modeling}
Mona Schirmer, Mazin Eltayeb, Stefan Lessmann, and Maja Rudolph.
\newblock Modeling irregular time series with continuous recurrent units.
\newblock In \emph{International Conference on Machine Learning}, pp.\
  19388--19405. PMLR, 2022.

\bibitem[Shukla \& Marlin(2021)Shukla and Marlin]{shukla2021multitime}
Satya~Narayan Shukla and Benjamin Marlin.
\newblock Multi-time attention networks for irregularly sampled time series.
\newblock In \emph{International Conference on Learning Representations}, 2021.

\bibitem[Tanaka(2020)]{tanaka2020weighted}
Masayuki Tanaka.
\newblock Weighted sigmoid gate unit for an activation function of deep neural
  network.
\newblock \emph{Pattern Recognition Letters}, 135:\penalty0 354--359, 2020.

\bibitem[Tay et~al.(2021)Tay, Dehghani, Abnar, Shen, Bahri, Pham, Rao, Yang,
  Ruder, and Metzler]{tay2020lra}
Yi~Tay, Mostafa Dehghani, Samira Abnar, Yikang Shen, Dara Bahri, Philip Pham,
  Jinfeng Rao, Liu Yang, Sebastian Ruder, and Donald Metzler.
\newblock Long {R}ange {A}rena: A benchmark for efficient transformers.
\newblock In \emph{International Conference on Learning Representations}, 2021.

\bibitem[Trinh et~al.(2018)Trinh, Dai, Luong, and Le]{trinh2018learning}
Trieu Trinh, Andrew Dai, Thang Luong, and Quoc Le.
\newblock Learning longer-term dependencies in {RNN}s with auxiliary losses.
\newblock In \emph{International Conference on Machine Learning}, pp.\
  4965--4974. PMLR, 2018.

\bibitem[Vaswani et~al.(2017)Vaswani, Shazeer, Parmar, Uszkoreit, Jones, Gomez,
  Kaiser, and Polosukhin]{vaswani2017attention}
Ashish Vaswani, Noam Shazeer, Niki Parmar, Jakob Uszkoreit, Llion Jones,
  Aidan~N Gomez, {\L}ukasz Kaiser, and Illia Polosukhin.
\newblock Attention is all you need.
\newblock \emph{Advances in Neural Information Processing Systems}, 30, 2017.

\bibitem[Voelker et~al.(2019)Voelker, Kaji{\'c}, and
  Eliasmith]{Voelker2019LegendreMU}
Aaron Voelker, Ivana Kaji{\'c}, and Chris Eliasmith.
\newblock Legendre {M}emory {U}nits: Continuous-time representation in
  recurrent neural networks.
\newblock \emph{Advances in Neural Information Processing Systems}, 32, 2019.

\bibitem[Wang et~al.(2020)Wang, Li, Khabsa, Fang, and Ma]{wang2020linformer}
Sinong Wang, Belinda Li, Madian Khabsa, Han Fang, and Hao Ma.
\newblock Linformer: Self-attention with linear complexity.
\newblock \emph{arXiv preprint arXiv:2006.04768}, 2020.

\bibitem[Warden(2018)]{warden2018speech}
Pete Warden.
\newblock Speech {C}ommands: A dataset for limited-vocabulary speech
  recognition.
\newblock \emph{arXiv preprint arXiv:1804.03209}, 2018.

\bibitem[Xiong et~al.(2021)Xiong, Zeng, Chakraborty, Tan, Fung, Li, and
  Singh]{xiong2021nystromformer}
Yunyang Xiong, Zhanpeng Zeng, Rudrasis Chakraborty, Mingxing Tan, Glenn Fung,
  Yin Li, and Vikas Singh.
\newblock Nystr{\"o}mformer: A {N}ystr{\"o}m-based algorithm for approximating
  self-attention.
\newblock In \emph{Proceedings of the AAAI Conference on Artificial
  Intelligence.}, volume~35, pp.\  14138. NIH Public Access, 2021.

\bibitem[Zaheer et~al.(2020)Zaheer, Guruganesh, Dubey, Ainslie, Alberti,
  Ontanon, Pham, Ravula, Wang, Yang, et~al.]{zaheer2020big}
Manzil Zaheer, Guru Guruganesh, Kumar~Avinava Dubey, Joshua Ainslie, Chris
  Alberti, Santiago Ontanon, Philip Pham, Anirudh Ravula, Qifan Wang, Li~Yang,
  et~al.
\newblock Big bird: Transformers for longer sequences.
\newblock \emph{Advances in Neural Information Processing Systems},
  33:\penalty0 17283--17297, 2020.

\bibitem[Zhu \& Soricut(2021)Zhu and Soricut]{zhu2021h}
Zhenhai Zhu and Radu Soricut.
\newblock H-transformer-1d: Fast one-dimensional hierarchical attention for
  sequences.
\newblock In \emph{Proceedings of the 59th Annual Meeting of the Association
  for Computational Linguistics and the 11th International Joint Conference on
  Natural Language Processing (Volume 1: Long Papers)}, pp.\  3801--3815, 2021.

\end{thebibliography}
\bibliographystyle{iclr2023_conference}
\clearpage

\appendix

\section*{Appendix for: Simplified State Space Layers for Sequence Modeling}

\subsection*{Contents:}
\begin{itemize}
    \item \textbf{Appendix \ref{app:sec:implementation}}:  JAX Implementation of S5 Layer.
    \item \textbf{Appendix \ref{app:sec:S5}}:  S5 Layer Details.
     \item \textbf{Appendix \ref{apps:comp_eff}}:  Computational Efficiency of S5.
     \item \textbf{Appendix \ref{app:sec:relation}}:  Relationship Between S4 and S5.
    \item \textbf{Appendix \ref{app:section:ablations}}:  Ablations.
    \item \textbf{Appendix \ref{app:sec:extended}}:  Supplementary Results.
    \item \textbf{Appendix \ref{app:sec:configs}}:  Experiment Configurations.
    \item \textbf{Appendix \ref{app:sec:parallel_scan}}:  Background on Parallel Scans for Linear Recurrences.
\end{itemize}

\clearpage

\section{JAX Implementation of S5 Layer}
\label{app:sec:implementation}
\hspace*{0.2cm}
\begin{minipage}{13.8cm}
\begin{center}
\begin{lstlisting}[language=Python, basicstyle=\scriptsize\ttfamily, showlines=true, caption=JAX implementation to apply a single S5 layer to a batch of input sequences.]
import jax
import jax.numpy as np
parallel_scan = jax.lax.associative_scan

def discretize(Lambda, B_tilde, Delta):
    """ Discretize a diagonalized, continuous-time linear SSM
        Args:
            Lambda  (complex64): diagonal state matrix                      (P,)
            B_tilde (complex64): input matrix                               (P, H)
            Delta   (float32):   discretization step sizes                  (P,)
        Returns:
            discretized Lambda_bar (complex64), B_bar (complex64)           (P,), (P,H)"""
    Identity = np.ones(Lambda.shape[0])
    Lambda_bar = np.exp(Lambda * Delta)
    B_bar = (1 / Lambda * (Lambda_bar - Identity))[..., None] * B_tilde
    return Lambda_bar, B_bar

def binary_operator(element_i, element_j):
    """ Binary operator for parallel scan of linear recurrence. Assumes a diagonal matrix A.
        Args:
            element_i: tuple containing A_i and Bu_i at position i          (P,), (P,)
            element_j: tuple containing A_j and Bu_j at position j          (P,), (P,)
        Returns:
            new element ( A_out, Bu_out )    """
    A_i, Bu_i = element_i
    A_j, Bu_j = element_j
    return A_j * A_i, A_j * Bu_i + Bu_j

def apply_ssm(Lambda_bar, B_bar, C_tilde, D, input_sequence):
    """ Compute the LxH output of discretized SSM given an LxH input.
        Args:
            Lambda_bar (complex64): discretized diagonal state matrix       (P,)
            B_bar      (complex64): discretized input matrix                (P, H)
            C_tilde    (complex64): output matrix                           (H, P)
            D          (float32):   feedthrough matrix                      (H,)
            input_sequence (float32): input sequence of features            (L, H)
        Returns:
            ys (float32): the SSM outputs (S5 layer preactivations)         (L, H)    """
    # Prepare elements required to initialize parallel scan
    Lambda_elements = np.repeat(Lambda_bar[None, ...], input_sequence.shape[0], axis=0)
    Bu_elements = jax.vmap(lambda u: B_bar @ u)(input_sequence)
    elements = (Lambda_elements, Bu_elements)                               # (L, P), (L, P)

    # Compute latent state sequence given input sequence using parallel scan
    _, xs = parallel_scan(binary_operator, elements)                        # (L, P)

    # Compute SSM output sequence
    ys = jax.vmap(lambda x, u: (C_tilde @ x + D * u).real)(xs, input_sequence)
    return ys

def apply_S5_layer(params, input_sequence):
    """ Computes LxH output sequence of an S5 layer given LxH input sequence.
        Args:
            params: tuple of the continuous time SSM parameters
            input_sequence: input sequence of features                      (L, H)
        Returns:
            The S5 layer output sequence                                    (L, H)    """
    Lambda, B_tilde, C_tilde, D, log_Delta = params
    Lambda_bar, B_bar = discretize(Lambda, B_tilde, np.exp(log_Delta))
    preactivations = apply_ssm(Lambda_bar, B_bar, C_tilde, D, input_sequence)
    return jax.nn.gelu(preactivations)

def batch_apply_S5_layer(params, input_sequences):
    """ Computes BxLxH output sequence of an S5 layer given BxLxH input sequence.
        Args:
            params: tuple of the continuous time SSM parameters
            input_sequences: batch of input feature sequences               (B, L ,H)
        Returns:
            Batch of S5 layer output sequences                              (B, L, H)  """
    return jax.vmap(apply_S5_layer, in_axes=(None, 0))(params, input_sequences)
\end{lstlisting}
\end{center}
\end{minipage}

\clearpage

\section{S5 Layer Details}
\label{app:sec:S5}

\subsection{Initialization Details}
\label{subsec:app:init}

\subsubsection{Initialization of the State Matrix}
\label{subsec:initA}
Here we provide additional details to supplement the discussion of initialization in Section \ref{subsec:simpleparam}.  \citet{gu2022train} explains the ability of S4 to capture long-range dependencies when using the HiPPO-LegS matrix via decomposing the input with respect to an infinitely long, exponentially decaying measure.  The HiPPO-LegS matrix and corresponding SISO input vector are defined as  
\begin{align}
    \left(\mathbf{A_{\mathrm{LegS}}}\right)_{nk} &= - \begin{cases}
                    (2n+1)^{1/2}(2k+1)^{1/2}, &  n>k \\
                    n+1, & n = k \\
                    0, &  n < k
    \end{cases}.\\
    \left(\mathbf{b_{\mathrm{LegS}}}\right)_{n}&=(2n+1)^{\frac{1}{2}}.
\end{align}
Note that in Section \ref{sec:s4s5rel:diag}, the input matrix $\mathbf{B_{\mathrm{LegS}}}\in \mathbb{R}^{N\times H}$ used in Corollary \ref{prop:s4d_extension} is formed by concatenating $H$ copies of $\mathbf{b_{\mathrm{LegS}}}\in \mathbb{R}^N$.

Theorem 1 of~\citet{gu2021s4} then shows that the HiPPO matrices in \citet{gu2020hippo}, $\mathbf{A}_{\mathrm{HiPPO}}\in \mathbb{R}^{N \times N}$ can be represented with a normal plus low-rank (NPLR) form consisting of a normal matrix, $\mathbf{A}_{\mathrm{HiPPO}}^{\mathrm{Normal}}=\mathbf{V}\mathbf{\Lambda} \mathbf{V}^* \in \mathbb{R}^{N \times N} $, and a low-rank term
\begin{align}
    \mathbf{A_{\mathrm{HiPPO}}}  =  \mathbf{A}_{\mathrm{HiPPO}}^{\mathrm{Normal}} - \mathbf{P}\mathbf{Q}^{\top} =\mathbf{V}\left(\mathbf{\Lambda}- (\mathbf{V}^*\mathbf{P})(\mathbf{V}^*\mathbf{Q})^*\right)\mathbf{V}^* 
\end{align}
for unitary $\mathbf{V}\in \mathbb{C}^{N \times N}$, diagonal $\mathbf{\Lambda}\in \mathbb{C}^{N \times N}$, and low-rank factorization $\mathbf{P}, \mathbf{Q}\in \mathbb{R}^{N\times r}$.  The right hand side of this equation shows HiPPO matrices can be conjugated into a diagonal plus low-rank (DPLR) form.  The HiPPO-LegS matrix can therefore be written in terms of the normal HiPPO-N matrix and low-rank term $\mathbf{P}_{\mathrm{LegS}}\in \mathbb{R}^N$~\citep{goel2022sashimi} as
\begin{align}
    \mathbf{A_{\mathrm{LegS}}}  &= \mathbf{A}_{\mathrm{LegS}}^{\mathrm{Normal}}-\mathbf{P}_{\mathrm{Legs}}\mathbf{P}_{\mathrm{Legs}}^{\top}
\end{align}
where
\begin{align}
    \mathbf{A}_{\mathrm{LegS}_{nk}}^{\mathrm{Normal}} &= 
    -\begin{cases}
    (n+\frac{1}{2})^{1/2}(k+\frac{1}{2})^{1/2}, & n>k\\
    \frac{1}{2}, & n=k\\
    (n+\frac{1}{2})^{1/2}(k+\frac{1}{2})^{1/2}, & n < k
    \end{cases}.\\
    \mathbf{P_{\mathrm{Legs}}}_n &= (n+\frac{1}{2})^{\frac{1}{2}}
\end{align}
Our default is to set the S5 layer state matrix $\mathbf{A}=\mathbf{A}_{\mathrm{LegS}}^{\mathrm{Normal}}\in \mathbb{R}^{P \times P}$, and take the eigendecomposition of this matrix to recover the initial $\mathbf{\Lambda}$.  We often find it beneficial to also use $\mathbf{V}$ and $\mathbf{V}^{-1}=\mathbf{V}^*$ to initialize $\mathbf{\tilde{B}}$ and $\mathbf{\tilde{C}}$, as described below.

As mentioned in Section \ref{sec:s4s5rel:untying}, we also found that performance on many tasks benefited from initializing the S5 state matrix as block-diagonal, with each block on the diagonal equal to $\mathbf{A}_{\mathrm{LegS}}^{\mathrm{Normal}}\in \mathbb{R}^{R \times R}$, where $R$ here is less than the state dimension $P$, e.g. $R=\frac{P}{4}$ when 4 blocks are used on the diagonal. We then take the eigendecomposition of this matrix to initialize $\mathbf{\Lambda}$, as well as $\mathbf{\tilde{B}}$ and $\mathbf{\tilde{C}}$. We note that even in this case, $\mathbf{\tilde{B}}$ and $\mathbf{\tilde{C}}$ are still initialized in dense form and there is no constraint that requires $\mathbf{A}$ to remain block-diagonal during learning. In the hyperparameter table in Appendix \ref{app:sec:configs}, the $J$ hyperparameter indicates the number of these HiPPO-N blocks used on the diagonal for initialization, where $J=1$ indicates we used the default case of initializing with a single HiPPO-N matrix.  We discuss the motivation for this block-diagonal initialization further in Appendix \ref{app:sec:relation:blocks}.

\subsubsection{Initialization of Input, Output and Feed-through Matrices}
In general, we explicitly initialize the input matrix $\mathbf{\tilde{B}}$ and output matrix $\mathbf{\tilde{C}}$ using the eigenvectors from the diagonalization of the initial state matrix. Specifically, we sample $\mathbf{B}$ and $\mathbf{C}$ and then initialize the (complex) learnable parameters $\mathbf{\tilde{B}}$ as $\mathbf{\tilde{B}}= \mathbf{V}^{-1}\mathbf{B}$ and $\mathbf{\tilde{C}}= \mathbf{C}\mathbf{V}$.

We initialize $\mathbf{D}\in \mathbb{R}^H$ by independently sampling each element from a standard normal distribution.

\subsubsection{Initialization of the Timescales}
\label{subsec:initDelta}
Prior work~\citep{gupta2022dss, gu2022train} found the initialization of this timescale parameter to be important.  This is studied in detail in \citet{gu2022train}.  We sample these parameters in line with S4 and sample each element of $\log \mathbf{\Delta} \in \mathbb{R}^P$ from a uniform distribution on the interval $[\log \delta_{
\min}, \log \delta_{\max})$, where the default range is $ \delta_{\min}= 0.001$ and  $\delta_{\max}= 0.1$.  The only exception is the Path-X experiment, where we initialize from $ \delta_{\min}= 0.0001$ and  $\delta_{\max}= 0.1$ to account for the longer timescales as discussed in \citet{gu2022train}.

\subsection{Comparison of S4 and S5 Computational Elements}
In Figure \ref{app:fig:banner} we illustrate a comparison of the computational details of the S4 and S5 layers for efficient, parallelized offline processing.

\begin{figure}
    \centering

    \begin{subfigure}[t]{\textwidth}
        \includegraphics[width=\textwidth]{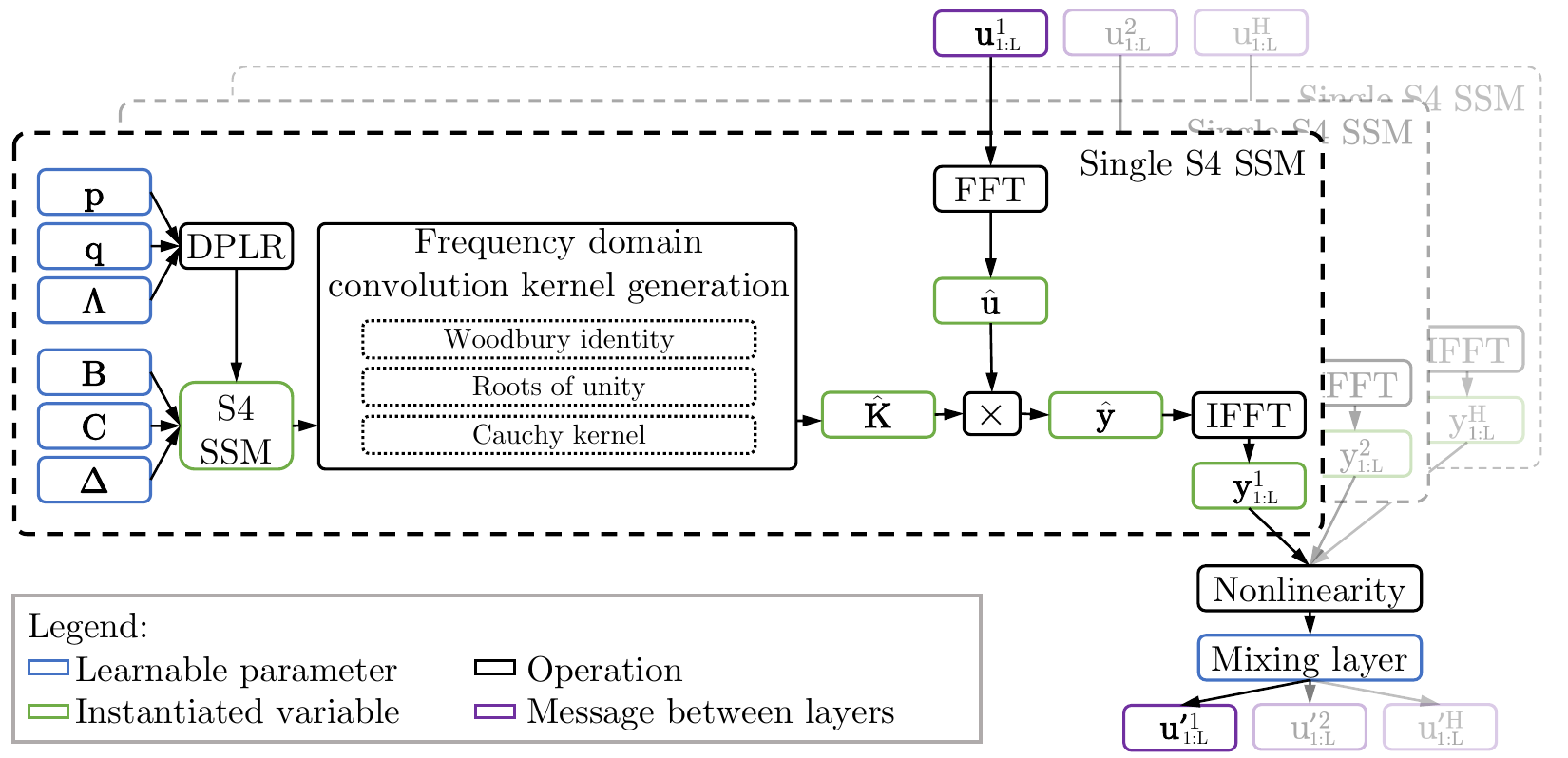}
        \caption{S4 layer~\citep{gu2021s4} offline processing.}
        \label{app:fig:banner:s4}
    \end{subfigure}%
    
    \begin{subfigure}[t]{\textwidth}
        \includegraphics[width=\textwidth]{figures/s3-block-diagram-2.pdf}
        \caption{S5 layer offline processing.  Duplicated from the main text. }
        \label{app:fig:banner:s5}
    \end{subfigure}%

    \caption{The computational components of the S4 layer~\citep{gu2021s4} (top) and the S5 layer (bottom) for offline application to a sequence. (a) The S4 layer applies an independent SSM to each dimension of the input sequence $\mathbf{u}_{1:L}\in \mathbb{R}^{L\times H}$. This requires a Cauchy kernel computation to compute the convolution kernel coefficients in the frequency domain. Convolutions are computed using FFTs to produce the independent SSM outputs $\mathbf{y}_{1:L}\in \mathbb{R}^{L \times H}$. A nonlinear activation function that includes a mixing layer is applied to the SSM outputs to produce the layer outputs.  (b) (Reproduced from Figure \ref{fig:banner:s5}) The S5 layer uses a parallel scan on a diagonalized linear SSM to compute the SSM outputs $\mathbf{y}_{1:L}\in \mathbb{R}^{L \times H}$. A nonlinear activation function is applied to the SSM outputs to produce the layer outputs.}
    \label{app:fig:banner}
\end{figure}

\clearpage
\newpage

\section{Computational Efficiency of S5}
\label{apps:comp_eff}
\subsection{Theoretical Computational Efficiency}
\label{app:comp_proof}

\setcounter{prop}{0}
\begin{prop} 
Given an S4 layer with $H$ input/output features, an S5 layer with $H$ input/output features and a latent size $P=\mathcal{O}(H)$ has the same order of magnitude complexity as an S4 layer in terms of both runtime and memory usage. 
\end{prop}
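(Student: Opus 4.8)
The plan is to tabulate the runtime and memory costs of the S4 and S5 layers for the two regimes of interest—online (step-by-step autoregressive) generation and offline (whole-sequence) processing—and then check, term by term, that substituting $P=\mathcal{O}(H)$ collapses every S5 cost onto the corresponding S4 cost up to constant factors (and, in the offline case, up to the same logarithmic factors that already appear in S4). Batching over $B$ sequences multiplies every quantity below by $B$ and so is irrelevant to the comparison.

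First I would recall the S4 costs from \citet{gu2021s4}. An S4 layer holds $H$ SISO SSMs with $N$-dimensional DPLR states, together with an $\mathcal{O}(H^2)$-parameter position-wise mixing layer. In online mode a single SSM step costs $\mathcal{O}(N)$, so one layer step costs $\mathcal{O}(HN+H^2)$ and a length-$L$ sequence costs $\mathcal{O}((HN+H^2)L)$ time with $\mathcal{O}(HN+H^2)$ memory. In offline mode S4 builds a length-$L$ convolution kernel per channel via a Cauchy-kernel evaluation and applies it with FFTs, costing $\mathcal{O}\big(H(N+L)\log(N+L)\big)$ time plus $\mathcal{O}(H^2L)$ for the mixing layer, and $\mathcal{O}(H(N+L))$ memory.

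Next I would derive the S5 costs directly from the definitions in Section~\ref{subsec:simpleparam}. The S5 layer has a single diagonal MIMO SSM with state size $P$ and parameters $\tilde{\mathbf{B}}\in\mathbb{C}^{P\times H}$, $\tilde{\mathbf{C}}\in\mathbb{C}^{H\times P}$, $\mathrm{diag}(\mathbf{D})\in\mathbb{R}^H$. Discretization via \eqref{eq:diag_disc_params} costs $\mathcal{O}(PH)$. In online mode each step of \eqref{eq:discrete_ssm} needs $\overline{\mathbf{B}}\mathbf{u}_k$ at cost $\mathcal{O}(PH)$, the diagonal update $\overline{\mathbf{\Lambda}}\mathbf{x}_{k-1}$ at cost $\mathcal{O}(P)$, and $\overline{\mathbf{C}}\mathbf{x}_k+\overline{\mathbf{D}}\mathbf{u}_k$ at cost $\mathcal{O}(PH)$, so a length-$L$ sequence costs $\mathcal{O}(PHL)$ time and $\mathcal{O}(PH)$ memory. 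In offline mode, precomputing all $\overline{\mathbf{B}}\mathbf{u}_k$ costs $\mathcal{O}(PHL)$ time and $\mathcal{O}(PL)$ space; the parallel scan over the diagonal recurrence is work-efficient with $\mathcal{O}(PL)$ total operations and $\mathcal{O}(PL)$ space by the discussion in Section~\ref{subsec:parscan}; and applying $\tilde{\mathbf{C}}$ and $\mathbf{D}$ to the $L$ states costs a further $\mathcal{O}(PHL)$. Hence offline S5 costs $\mathcal{O}(PHL)$ time and $\mathcal{O}(PL+PH)$ memory.

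Finally I would substitute $P=\mathcal{O}(H)$. Online S5 becomes $\mathcal{O}(H^2L)$ time and $\mathcal{O}(H^2)$ memory, of the same order as the $\mathcal{O}((HN+H^2)L)$ time and $\mathcal{O}(HN+H^2)$ memory of online S4. Offline S5 becomes $\mathcal{O}(H^2L)$ time and $\mathcal{O}(HL)$ memory, matching the $\mathcal{O}(H^2L)$ mixing-layer term and the $\mathcal{O}(H(N+L))$ activation memory of offline S4 for $L\gtrsim N$; note S4 additionally carries the FFT's $\log$ factor whereas the work-efficient scan of S5 does not, so if anything S5 is marginally cheaper and is certainly of the same order. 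The main obstacle—and the point that needs to be stated precisely rather than calculated—is pinning down the cost model meant by ``the complexity of S4'': which mode, which resource, and whether $N$ is treated as an absolute constant or tracked alongside $H$. The claim is an order-of-magnitude equivalence holding uniformly over these choices (using that, in the regime where S5 is deployed, $N$ is small and the $H^2$ mixing and projection terms dominate), not a single exact identity, so the bookkeeping above must be carried out consistently under whichever convention is fixed.
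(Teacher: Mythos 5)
Your proposal is correct and follows essentially the same route as the paper's proof: split into online (recurrent) and offline (convolution vs.\ parallel scan) modes, tabulate the per-component operation and memory costs of each layer, and observe that setting $P=\mathcal{O}(H)$ (together with $N=\mathcal{O}(H)$ for the S4 state size) makes the dominant $\mathcal{O}(H^2L)$ terms coincide, with S5 dropping the FFT's $\log L$ factor. The paper additionally records the parallel-time bound of $\mathcal{O}(\log H+\log L)$ for both layers given sufficiently many processors, but this is a difference of emphasis rather than of argument.
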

\begin{proof}
We first consider the case where the entire sequence is available and compare the S4 layer's convolution mode to the S5 layer's use of a parallel scan. We then consider the online generation case where each method operates recurrently. 

\paragraph{Parallelized offline processing}
We consider the application of both the S4 and S5 layer to a vector-valued sequence $\mathbf{u}_{1:L}\in \mathbb{R}^{L \times H}$.  Note that there are multiple ways to compute the convolution kernel for S4/S4D and the exact computational complexity depends on the implementation \citep{gu2021s4, gu2022parameterization}.  However, the overall complexity of computing the S4 SSM outputs given the inputs is lower bounded by the FFTs used to convert into the frequency domain to apply the convolutions. Therefore, we can lower bound the cost of applying a single SISO S4 SSM to a scalar input sequence as $\mathcal{O}(L \log L)$ operations and $\mathcal{O}(L)$ space.  The S4 layer then consists of $H$ different S4 SSMs, and therefore requires $\mathcal{O}(H L\log L)$ operations and $\mathcal{O}(HL)$ space.  Finally, mixing the $H$ activations at each timestep requires $L$ independent matrix-vector multiplications. Thus, the S4 layer sequence-to-sequence transformation requires a total of $\mathcal{O}(H^2L + HL\log L)$ operations when run in the efficient convolution mode.  Given $H^2L$ processors, these operations can be parallelized to a minimum of $\mathcal{O}(\log H + \log L)$ parallel time. 

We now consider the S5 layer. As discussed in Section \ref{subsec:parscan}, the parallel scan requires $\mathcal{O}(PL)$ operations and $\mathcal{O}(PL)$ space to perform the linear recurrence that computes the discretized states $\mathbf{x}_{1:L}\in \mathbb{R}^{L\times P}$. In addition, $\mathcal{O}(PHL)$ operations are required for the independent matrix-vector multiplications that compute $\overline{\mathbf{B}}\mathbf{u}_{1:L}\in \mathbb{R}^{L\times P}$ and the SSM outputs~$\tilde{\mathbf{C}}\mathbf{x}_{1:L}\in \mathbb{R}^{L\times H}$. Therefore, the S5 layer requires $\mathcal{O}\left(PHL + PL\right)$ operations.  Given $PHL$ processors, these operations can be parallelized to a minimum of $\mathcal{O}(\log P + \log L)$ parallel time. Thus, we see that when the S5 state dimension $P=\mathcal{O}(H)$, the S5 layer requires $\mathcal{O}\left(H^2L + HL\right)$ operations compared to the S4 layer's  $\mathcal{O}(H^2L + HL\log L)$ operations.  Crucially,  when $P=\mathcal{O}(H)$, S4 and S5 each have parallel complexity of $\mathcal{O}(\log H + \log L)$ (when $H^2L$ processors are available).  In addition, when $P=\mathcal{O}(H)$, the space complexity for the parallel scan is $\mathcal{O}(HL)$ which matches the space complexity of S4's FFTs.  Both methods then also perform identical broadcasted matrix vector multiplications, and hence have the same space complexity. 

\paragraph{Online generation}
For online generation, both the S4 and S5 layers are run recurrently. The S4 layer requires $\mathcal{O}(H^2+HN)$ operations per step due to its $\mathcal{O}(HN)$ DPLR-matrix-vector multiplication~\citep{gu2021s4} and the $\mathcal{O}(H^2)$ matrix-vector multiplication of its mixing layer. The S5 layer requires $\mathcal{O}(PH + P)$ operations per step due to its $\mathcal{O}(P)$ matrix-vector multiplication with its diagonal matrix and its $\mathcal{O}(PH)$ matrix-vector multiplications to compute $\overline{\mathbf{B}}\mathbf{u}_{k}\in \mathbb{R}^{P}$ and $\tilde{\mathbf{C}}\mathbf{x}_{k}\in \mathbb{R}^{H}$. Thus, we see the two approaches have the same per step complexity of $\mathcal{O}(H^2)$  when $P=\mathcal{O}(H)$ and the individual S4 SSM state sizes $N$ are $\mathcal{O}(H)$. 

Thus, S4 and S5 have the same order computational complexity and memory requirements in both cases.
\end{proof}

\subsection{Empirical Runtime Comparison}
\label{apps:runtimecomp}
\begin{table}[ht]
  \captionsetup{justification=centering}
  \centering
  \caption{Benchmarking the runtime performance of S4, S4D and S5 on three LRA tasks of varied sequence lengths using parameterizations described in Section \ref{apps:runtimecomp}.  For speeds, $>1\times$ indicates faster than the S4D baseline. For memory, $<1\times$ indicates less memory was used than the S4D baseline.  The fifth line of each metric shows the performance of the actual S5 model used for the LRA result in Table \ref{tab:lra} for each task using the architecture reported in Table \ref{tab:hyperparam}.   }
  \label{tab:runtime}
      
    \begin{tabular}{@{}ccccc@{}}
    \toprule
    \multirow{2}{*}{Model}                  & \multirow{2}{*}{Architecture}               & \multicolumn{3}{c}{Dataset (Input Length)}                                                        \\ \cmidrule(l){3-5} 
                                            &                                           & ListOps (2,048)                       & Text (4,096)              & Path-X (16,384)               \\ \midrule
                                            &                                           & \multicolumn{3}{c}{Train step speed ($\uparrow)$}                                                 \\ \cmidrule(l){3-5} 
    S4                                      & From \citet{gu2021s4}                     & 0.6 $\times$                          & 0.4 $\times$              & 0.4 $\times$                  \\
    S4D                                     & From \citet{gu2022parameterization}       & \textbf{1.0} $\times$                 & 1.0 $\times$     & 1.0 $\times$                  \\ 
    S5                              & (P$=$H)  Matched to \citet{gu2022parameterization} & 0.7 $\times$                          & 0.9 $\times$              & 1.9 $\times$         \\ 
    S5                              & (P$=$N)  Matched to \citet{gu2022parameterization} & 0.9 $\times$                          & \textbf{1.5 $\times$}              & \textbf{2.9} $\times$         \\ 
    \textit{S5}                             & \textit{As in Table \ref{tab:hyperparam}} & \textit{1.1 $\times$}                 & \textit{1.0 $\times$}     & \textit{4.7 $\times$}       \\ \midrule%
                                            &                                           & \multicolumn{3}{c}{Evaluation step speed ($\uparrow)$}                                            \\ \cmidrule(l){3-5} 
    S4                                      & From \citet{gu2021s4}                     & 0.6 $\times$                          & 0.5 $\times$              & 0.4 $\times$                  \\
    S4D                                     & From \citet{gu2022parameterization}       & \textbf{1.0} $\times$                 & 1.0 $\times$     & 1.0 $\times$                  \\ 
    S5                               & (P$=$H) Matched to \citet{gu2022parameterization} & 0.5 $\times$                          & 0.9 $\times$              & 1.7 $\times$         \\
    S5                             &  (P$=$N)  Matched to \citet{gu2022parameterization} & 0.6 $\times$                          & \textbf{1.2} $\times$             & \textbf{1.8} $\times$         \\
    \textit{S5}                             & \textit{As in Table \ref{tab:hyperparam}} & \textit{0.7 $\times$}                 & \textit{1.0 $\times$}     & \textit{4.1 $\times$}         \\ \midrule%
                                            &                                           & \multicolumn{3}{c}{Memory allocation ($\downarrow)$}                                              \\ \cmidrule(l){3-5} 
    S4                                      & From \citet{gu2021s4}                     & 1.1 $\times$                          & 1.2 $\times$              & 1.2 $\times$                  \\
    S4D                                     & From \citet{gu2022parameterization}       & 1.0 $\times$                 & 1.0 $\times$     & 1.0 $\times$         \\ 
    S5                              & (P$=$H) Matched to \citet{gu2022parameterization} & 0.9 $\times$                 & 1.2 $\times$              & 1.1 $\times$                  \\  
    S5                              & (P$=$N) Matched to \citet{gu2022parameterization} & \textbf{0.7} $\times$                 & \textbf{0.7} $\times$              & \textbf{0.7} $\times$                  \\  
    \textit{S5}                             & \textit{As in Table \ref{tab:hyperparam}} & \textit{0.7 $\times$}                 & \textit{1.0 $\times$}     & \textit{0.9 $\times$}         \\ \bottomrule
    \end{tabular}

\end{table}

Table \ref{tab:runtime} provides an empirical evaluation of the runtime performance, in terms of speed and memory, between S4, S4D and S5 across a range of sequence lengths from the LRA tasks. We compared the JAX implementation of S5 to a JAX implementation of S4 and S4D, based on the JAX implementation from \citet{rushs4blog}. For a fair comparison, we modified these existing JAX implementations of S4 and S4D to allow them both to enforce conjugate symmetry and use bidirectionality. For each task, models use bidirectionality and conjugate symmetry as reported in \citet{gu2022parameterization}.  All models, except for the italicised S5 row, use the same input/output features $H$ and number of layers as reported in \citet{gu2022parameterization}.  The S4 and S4D layers also use the same S4 SSM latent size as reported in \citet{gu2022parameterization}.  All methods used the same batch size and all comparisons were made using a 16GB NVIDIA V100 GPU. Note we observed the JAX S4D implementation to in general be faster than the JAX S4 implementation (possibly due to this specific S4 implementation's use of the naive Cauchy kernel computation~\citep{gu2021s4}). For this reason, we consider S4D as the baseline.

We consider three configurations of S5 for comparison. The first two configurations, corresponding to lines 3 and 4 for each metric in Table \ref{tab:runtime}, show how the runtime metrics vary as S5's latent size is adjusted, with all other architecture choices equal to those of S4. In line 3 of each metric, we denote the S5 ``Architecture'' as ``(P$=$H) Matched to \citet{gu2022parameterization}'' to indicate that this configuration of S5 sets the latent size equal to the number of input/output features, $P=H$. This line empirically supports the complexity argument presented in Appendix \ref{app:comp_proof}. In line 4 of each metric, we denote the S5 ``Architecture'' as `` (P$=$N) Matched to \citet{gu2022parameterization}'' to indicate that this configuration of S5 sets the latent size $P$ equal to the latent size $N$ S4 uses for each of its SISO SSMs. This line also corresponds to the constrained version of S5 that performs similarly to S4/S4D as presented in the ablation study in Table \ref{app:tab:timescale_ablation}. The runtime results of both of these configurations supports the claim in Section \ref{sec:s4s5rel:untying} that the latent size of S5 can be increased while maintaining S4's computational efficiency.

Finally, we include a third configuration of S5, presented in the fifth line of each metric and \emph{italicized}. This  configuration of S5 uses the best architectural dimensions from Table \ref{tab:hyperparam} and was used for the corresponding LRA results in Table \ref{tab:lra}.

Importantly, the broad takeaway from this empirical study is that the runtime and memory usage of S5 and S4/S4D are broadly similar, as suggested by the complexity analysis in the main text. 

\clearpage
\newpage

\section{Relationship Between S4 and S5}
\label{app:sec:relation}
We now describe in more detail the connection between the S4 and S5 architectures.  This connection allowed us to develop more performant architectures and extend theoretical results from existing work.

We break this analysis down into three parts:  
\begin{enumerate}
    \item In Section \ref{app:sec:relation:equiv} we prove Proposition \ref{prop:equiv}.  We exploit the linearity of the systems to identify that the latent states computed by the S5 SSM are equivalent to a linear combination of latent states computed by the $H$ SISO S4 SSMs, and that the outputs of the S5 SSM are a further linear transformation of these states.  We then highlight how S4 and S5 effectively define different output matrices in the block-diagonal perspective shown in Figure \ref{fig:matrix}. 
    \item In Section \ref{app:sec:relation:init} we provide a simple extension of the proof provided by \citet{gu2022parameterization}.  The original proof shows that in the SISO case, in the limit of large $N$, the dynamics arising from a (non-diagonalizable) HiPPO-LegS matrix, are faithfully approximated by the (diagonalizable) normal component of the HiPPO-LegS matrix.  We extend this proof to apply to the MIMO setting.  This motivates initialization with the HiPPO-N matrix, which in-turn allows us to use parallel scans efficiently.
    \item In Section \ref{app:sec:relation:blocks} we conclude by showing that, by judicious choice of initialization of the S5 state matrix, S5 can implement multiple independent S4 systems and relax the assumptions made. We also discuss the  vector of timescale parameters, which we found to improve performance.
\end{enumerate}
We note that many of these results follow straightforwardly from the linearity of the recurrence. 

\subsection{Assumptions}
\label{app:sec:relation:assumptions}
For these following sections we will use the following assumptions, until otherwise stated:

\setcounter{assumption}{0}
\begin{assumption}
\label{app:ass:h_to_h}
We consider only $H$-dimensional to $H$-dimensional sequence maps.
\end{assumption}
\begin{assumption}
\label{app:ass:tied_a}
We assume the state matrix of each S4 SSM is identical, $\mathbf{A}^{(h)} = \mathbf{A} \in \mathbb{C}^{N \times N}$.
\end{assumption}
\begin{assumption}
\label{app:ass:tied_delta}
We assume the timescales of each S4 SSM are identical, $\Delta^{(h)} = \Delta \in \mathbb{R}_+$
\end{assumption}
\begin{assumption}
\label{app:ass:tied_s5}
We assume that the same state matrix $\mathbf{A}$ is used in S5 as in S4 (also cf. Assumption \ref{ass:tied_a}).  Note this also specifies the S5 latent size $P=N$.  We also assume the S5 input matrix is the horizontal concatenation of the column input vectors used by S4, $\mathbf{B} \triangleq \left[ \mathbf{B}^{(1)} \mid \ldots \mid \mathbf{B}^{(H)} \right]$.
\end{assumption}

\subsection{Different Output Projections of Equivalent Dynamics}
\label{app:sec:relation:equiv}
We provide a proof of Proposition \ref{prop:equiv}.
\setcounter{prop}{1}
\begin{prop}
\label{ass:prop:equiv}
Consider an S5 layer, with state matrix $\mathbf{A}$, input matrix $\mathbf{B}$ and some output matrix $\mathbf{C}$ (cf. Assumption \ref{app:ass:h_to_h}); and an S4 layer, where each of the $H$ S4 SSMs has state matrix $\mathbf{A}$ (cf. Assumption \ref{app:ass:tied_a}, \ref{app:ass:tied_s5}) and input vector $\mathbf{B}^{(h)}$ (cf. Assumption \ref{app:ass:tied_s5}).  If the S4 and S5 layers are discretized with the same timescales (cf. Assumption \ref{app:ass:tied_delta}), then the S5 SSM produces outputs, $\mathbf{y}_k$, equivalent to a linear combination of the latent states of the $H$ S4 SSMs, $\mathbf{y}_k = \mathbf{C}^{\mathrm{equiv}}\mathbf{x}_k^{(1:H)}$, where\ $\mathbf{C}^{\mathrm{equiv}} = \left[\ \mathbf{C}\ \cdots\ \mathbf{C}\ \right]$.
\end{prop}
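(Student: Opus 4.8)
The plan is to exploit linearity of both the S4 and S5 recurrences and show the latent state of the S5 SSM decomposes exactly as the concatenation of the $H$ individual S4 SSM states, after which the output relation falls out immediately. First I would write down the discretized recurrences under the standing assumptions. By Assumptions~\ref{app:ass:tied_a}, \ref{app:ass:tied_delta} and \ref{app:ass:tied_s5}, all $H$ S4 SSMs share the same continuous-time state matrix $\mathbf{A}$ and timescale $\Delta$, hence the same ZOH-discretized $\overline{\mathbf{A}}$; the $h$-th S4 SSM uses input vector $\overline{\mathbf{B}}^{(h)}$ (the discretization of $\mathbf{B}^{(h)}$), and the S5 SSM uses $\mathbf{B} = [\mathbf{B}^{(1)} \mid \cdots \mid \mathbf{B}^{(H)}]$, which discretizes entrywise-in-columns to $\overline{\mathbf{B}} = [\overline{\mathbf{B}}^{(1)} \mid \cdots \mid \overline{\mathbf{B}}^{(H)}]$ since the discretization formula \eqref{eq:diag_disc_params} (or its dense analogue) acts on $\mathbf{B}$ only through left-multiplication by a matrix depending solely on $\mathbf{A}$ and $\Delta$. (I would note that Proposition~\ref{prop:equiv} as stated uses the diagonalized form, but the argument is basis-independent; I would phrase it in whichever of the two equivalent forms the appendix has set up, keeping $\mathbf{A}$ dense for clarity.)

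Next I would prove by induction on $k$ that $\mathbf{x}_k = \sum_{h=1}^{H} \mathbf{x}_k^{(h)}$, where $\mathbf{x}_k$ is the S5 latent state and $\mathbf{x}_k^{(h)}$ is the $h$-th S4 latent state, assuming matched zero initial conditions. The base case is immediate. For the inductive step, the S5 update gives
\begin{align}
\mathbf{x}_k = \overline{\mathbf{A}}\,\mathbf{x}_{k-1} + \overline{\mathbf{B}}\,\mathbf{u}_k = \overline{\mathbf{A}}\sum_{h=1}^H \mathbf{x}_{k-1}^{(h)} + \sum_{h=1}^H \overline{\mathbf{B}}^{(h)} u_k^{(h)} = \sum_{h=1}^H\left(\overline{\mathbf{A}}\,\mathbf{x}_{k-1}^{(h)} + \overline{\mathbf{B}}^{(h)} u_k^{(h)}\right) = \sum_{h=1}^H \mathbf{x}_k^{(h)},
\end{align}
using the inductive hypothesis, the block decomposition of $\overline{\mathbf{B}}$, and the fact that $\overline{\mathbf{B}}\mathbf{u}_k = \sum_h \overline{\mathbf{B}}^{(h)} u_k^{(h)}$ because $u_k^{(h)}$ is the $h$-th scalar component of the vector input $\mathbf{u}_k$. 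Then, since the S5 output is $\mathbf{y}_k = \mathbf{C}\mathbf{x}_k$ (modulo the feedthrough term, which I would treat separately or fold into both sides identically), substituting the identity gives $\mathbf{y}_k = \mathbf{C}\sum_h \mathbf{x}_k^{(h)} = [\,\mathbf{C}\ \cdots\ \mathbf{C}\,]\,\mathbf{x}_k^{(1:H)} = \mathbf{C}^{\mathrm{equiv}}\mathbf{x}_k^{(1:H)}$, which is the claim.

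The only real obstacle is bookkeeping: making sure the discretization of the horizontally-concatenated $\mathbf{B}$ really does coincide with the horizontal concatenation of the discretized $\mathbf{B}^{(h)}$, which requires checking that the chosen discretization method (ZOH here) transforms $\mathbf{B}$ by left-multiplication by a factor that depends only on $(\mathbf{A},\Delta)$ and not on $\mathbf{B}$ — true for ZOH, bilinear, and Euler, but worth stating. I would also be careful that the S5 SSM referred to here is the dense-$\mathbf{A}$ form; if the appendix works in the diagonalized coordinates $\tilde{\mathbf{x}} = \mathbf{V}^{-1}\mathbf{x}$, the same induction goes through verbatim with $\overline{\mathbf{\Lambda}}$ in place of $\overline{\mathbf{A}}$ and $\tilde{\mathbf{B}}^{(h)} = \mathbf{V}^{-1}\mathbf{B}^{(h)}$, $\tilde{\mathbf{C}} = \mathbf{C}\mathbf{V}$, and one recovers the same $\mathbf{C}^{\mathrm{equiv}}$ in the original coordinates. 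Everything else is a one-line linearity argument.
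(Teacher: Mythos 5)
Your proposal is correct and follows essentially the same route as the paper: the key identity $\mathbf{x}_k = \sum_{h=1}^H \mathbf{x}_k^{(h)}$ established by linearity (the paper writes out the explicit convolution sums $\mathbf{x}_k = \sum_{i\le k}\overline{\mathbf{A}}^{k-i}\overline{\mathbf{B}}\mathbf{u}_i$ rather than inducting, but this is the same observation), followed by substitution into $\mathbf{y}_k = \mathbf{C}\mathbf{x}_k$ to read off $\mathbf{C}^{\mathrm{equiv}}$. Your explicit check that ZOH discretization acts on $\mathbf{B}$ by left-multiplication by a factor depending only on $(\mathbf{A},\Delta)$, so that $\overline{\mathbf{B}}$ inherits the column decomposition, is a detail the paper leaves implicit.
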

\begin{proof}
For a single S4 SSM, the discretized latent states can be expressed as a function of the input sequence $\mathbf{u}_{1:L} \in \mathbb{R}^{L \times H}$
\begin{equation}
    \mathbf{x}_{k}^{(h)} = \sum\nolimits_{i=1}^{k} \overline{\mathbf{A}}^{k-i} \overline{\mathbf{B}}^{(h)} u_i^{(h)}.  \label{equ:relation:s4}
\end{equation}
For an S5 layer, the latent states are expressible as
\begin{equation}
    \mathbf{x}_{k} = \sum\nolimits_{i=1}^{k} \overline{\mathbf{A}}^{k-i} \overline{\mathbf{B}} \mathbf{u}_i,  \label{equ:relation:s5}
\end{equation}
where we index as $\overline{\mathbf{B}} \triangleq \left[ \overline{\mathbf{B}}^{(1)} \mid \ldots \mid \overline{\mathbf{B}}^{(H)} \right]$ and $\mathbf{u}_i \triangleq \left[ u^{(1)}_i , \ldots , u^{(H)}_i \right]^{\top}$

Here we make the observation:
\begin{equation} 
    \mathbf{x}_{k} = \sum\nolimits_{h=1}^H \mathbf{x}_{k}^{(h)},  \label{app:equ:relation:equiv_dynamics}
\end{equation}
where this result follows directly from the linearity of \eqref{equ:relation:s4} and \eqref{equ:relation:s5}.  This shows that (under the assumptions outlined above) the states of the MIMO S5 SSM are equivalent to the summation of the states across the $H$ different SISO S4 SSMs.  

We can then consider the effect of the output matrix $\mathbf{C}$.  For S5, the output matrix is a single dense matrix
\begin{equation}
    \mathbf{y}_k = \mathbf{C} \mathbf{x}_k.  \label{app:equ:relation:_s5_y}
\end{equation}
We can substitute the relationship in \eqref{app:equ:relation:equiv_dynamics} into \eqref{app:equ:relation:_s5_y} to cast the outputs of the MIMO S5 SSM in terms of the state of the $H$ SISO S4 SSMs:
\begin{align}
    \mathbf{y}_k &= \mathbf{C} \sum\nolimits_{h=1}^H \mathbf{x}^{(h)}_k , \\
    &= \sum\nolimits_{h=1}^H \mathbf{C} \mathbf{x}^{(h)}_k  \label{equ:relation:s5_y}.
\end{align}
Denoting the vertical concatenation of the $H$ S4 SSM state vectors $\mathbf{x}_k^{(1:H)} = \left[ \mathbf{x}_k^{(1)^{\top}} , \ldots , \mathbf{x}_k^{(H)^{\top}} \right]^{\top}$, we see that the outputs of the S5 SSM are expressible as:
\begin{equation}
    \mathbf{y}_k = \mathbf{C}^{\mathrm{equiv}}\mathbf{x}_k^{(1:H)}, \quad \mathrm{where} \quad \mathbf{C}^{\mathrm{equiv}} = \left[\ \mathbf{C} \mid \cdots \mid \mathbf{C}\ \right], \label{app:equ:s5:c-equiv}
\end{equation}
and hence are equivalent to a linear combination of the $HN$ states computed by the $H$ S4 SSMs.
\end{proof}

This shows the outputs of the constrained S5 SSM under consideration (cf. Assumption \ref{app:ass:tied_s5}) can be interpreted as a linear combination of the latent states computed by $H$ constrained S4 SSMs with the same state matrices and timescale parameters.  Note however, it does \emph{not} show that the outputs of the S5 SSM directly equal the outputs of the effective block-diagonal S4 SSM.  Indeed, they are not equal, and we can repeat this analysis for the S4 layer to concretely identify the difference.  For comparison we assume that the output vector for each S4 SSM is given as a row in the S5 output matrix, i.e. $\mathbf{C} = \left[ \mathbf{C}^{(1)^{\top}} \mid \ldots \mid \mathbf{C}^{(H)^{\top}} \right]^{\top}$.  We can express the output of each S4 SSM as
\begin{align}
    y_k^{(h)}  = \mathbf{C}^{(h)} \mathbf{x}_k^{(h)},  
\end{align}
where $y_k^{(h)} \in \mathbb{R}$.  We can then define the effective output matrix that operates on the entire latent space (the dashed box labelled $\mathbf{C}$ in Figure \ref{fig:matrix:s4}) in S4 as
\begin{equation}
    y_k^{(h)} = \left( \mathbf{C}^{\mathrm{S4}} \mathbf{x}_k \right)^{(h)}   \label{app:equ:relation:s4_y} 
\end{equation}
By inspecting \eqref{app:equ:s5:c-equiv} and \eqref{app:equ:relation:s4_y}, we can concretely express the difference in the equivalent output matrix used by both layers
\begin{align}
    \mathbf{C}^{\mathrm{S4}} &= \left[ \begin{array}{ccc} \mathbf{C}^{(1)} & \cdots & \mathbf{0} \\ \vdots & \ddots & \vdots \\ \mathbf{0} & \cdots & \mathbf{C}^{(H)}  \end{array} \right] ,  
    &\mathbf{C}^{\mathrm{equiv}} = \left[ \begin{array}{ccc} \mathbf{C}^{(1)} & \cdots & \mathbf{C}^{(1)} \\ \vdots & \ddots & \vdots \\ \mathbf{C}^{(H)} & \cdots & \mathbf{C}^{(H)}  \end{array} \right] = \left[\ \mathbf{C} \mid \cdots \mid \mathbf{C}\ \right]. \label{equ:relation:c_eff}
\end{align}
In S4, the effective output matrix consists of independent vectors on the leading diagonal (as is pictured in Figure \ref{fig:matrix:s4}).  In contrast, the effective output matrix used by S5 instead ties dense output matrices across the $H$ S4 SSMs.  As such, S5 can be interpreted as simply defining a different projection of the $H$ independent SISO SSMs than is used by S4.  Note that both projection matrices have the same number of parameters.

Although the projection is different, the fact that the latent dynamics can still be \emph{interpreted} as a linear projection of the same underlying S4 latent dynamics suggests that initializing the state dynamics in S5 with the HiPPO-LegS matrix may lead to good performance, similarly to what was observed in S4.  We discuss this in the next section.  We note that it is not obvious whether tying the dense output matrices is any more or less expressive than S4's use of a single untied output vector for each SSM, and it is unlikely that one approach is universally better than the other.  We also stress that one would never implement S4 using the block diagonal matrix in \eqref{equ:relation:c_eff}, or, implement S5 using the repeated matrix in \eqref{equ:relation:c_eff}.  These matrices are simply constructs for understanding the equivalence between S4 and S5. 

Finally, we note an alternative view: the block-diagonal S4 with output matrix $\mathbf{C}^{\mathrm{equiv}}$ of Proposition \ref{prop:equiv} is equivalent to a version of S4 that uses a bank of single-input, multi-output (SIMO) SSMs with tied state matrices, timescales and multi-channel output matrices $\mathbf{C}^{(h)}=\mathbf{C} \in \mathbb{R}^{H \times N}$ and sums the individual SSM outputs. See appendix of \citet{gu2021lssl} for a discussion of how the \emph{linear state space layer} (LSSL), a predecessor of S4, can have multiple output channels.

\subsection{Diagonalizable Initialization}
\label{app:sec:relation:init}
Proposition \ref{prop:equiv} suggests that initializing with the HiPPO-LegS matrix may yield good performance in S5, just as it does in S4 (because the constrained version of S5 under consideration is effectively a different linear projection of the same latent dynamics).  However, the HiPPO-LegS matrix is not stably diagonalizable.  Corollary \ref{prop:s4d_extension} allows us to initialize MIMO SSMs with the diagonalizable HiPPO-N matrix to approximate the HiPPO-LegS matrix and expect the performance to be comparable. 
\setcounter{corollary}{0}
\begin{corollary}[Extension of Theorem 3 in \citet{gu2022parameterization}]
\label{app:prop:s4d_extension}
Consider $\mathbf{A}_{\mathrm{LegS}}\in \mathbb{R}^{N\times N}$, $\mathbf{A}_{\mathrm{LegS}}^{\mathrm{Normal}}\in \mathbb{R}^{N\times N}$, $\mathbf{B}_{\mathrm{LegS}}\in \mathbb{R}^{N \times H}, \mathbf{P}_{\mathrm{LegS}}\in \mathbb{R}^{N}$ as defined in Appendix \ref{subsec:initA}. Given vector-valued inputs $\mathbf{u}(t)\in \mathbb{R}^H$, the ordinary differential equation $\dfrac{\d \mathbf{x}'(t)}{\d t} = \mathbf{A}_{\mathrm{LegS}}^{\mathrm{Normal}} \mathbf{x}'(t) + \frac{1}{2}\mathbf{B}_{\mathrm{LegS}}\mathbf{u}(t)$ converges to $\dfrac{\d \mathbf{x}(t)}{\d t} = \mathbf{A}_{\mathrm{LegS}}\mathbf{x}(t) + \mathbf{B}_{\mathrm{LegS}}\mathbf{u}(t)$ as $N \rightarrow \infty$.
\end{corollary}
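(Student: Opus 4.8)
The plan is to reduce the multi-input statement to the single-input result already available as Theorem 3 of \citet{gu2022parameterization}, using only the linearity of the two ODEs in both the state and the forcing term. First I would decompose the input as $\mathbf{u}(t) = \sum_{h=1}^{H} u^{(h)}(t)\,\mathbf{e}_h$, where $\mathbf{e}_h$ is the $h$-th standard basis vector of $\mathbb{R}^H$, and correspondingly split the forcing terms: since by construction every column of $\mathbf{B}_{\mathrm{LegS}}\in\mathbb{R}^{N\times H}$ equals the SISO vector $\mathbf{b}_{\mathrm{LegS}}$ of Appendix \ref{subsec:initA}, we have $\mathbf{B}_{\mathrm{LegS}}\mathbf{u}(t) = \sum_{h=1}^{H} u^{(h)}(t)\,\mathbf{b}_{\mathrm{LegS}}$ and likewise $\tfrac{1}{2}\mathbf{B}_{\mathrm{LegS}}\mathbf{u}(t) = \sum_{h=1}^{H} \tfrac{1}{2}u^{(h)}(t)\,\mathbf{b}_{\mathrm{LegS}}$. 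Taking the zero-initial-condition convention under which Theorem 3 is stated, each linear ODE solution is the convolution of its forcing term against the matrix exponential of its state matrix, so $\mathbf{x}(t) = \sum_{h=1}^{H}\mathbf{x}_h(t)$ and $\mathbf{x}'(t) = \sum_{h=1}^{H}\mathbf{x}'_h(t)$, where $\mathbf{x}_h$ solves $\dot{\mathbf{x}}_h = \mathbf{A}_{\mathrm{LegS}}\mathbf{x}_h + \mathbf{b}_{\mathrm{LegS}}u^{(h)}$ and $\mathbf{x}'_h$ solves $\dot{\mathbf{x}}'_h = \mathbf{A}_{\mathrm{LegS}}^{\mathrm{Normal}}\mathbf{x}'_h + \tfrac{1}{2}\mathbf{b}_{\mathrm{LegS}}u^{(h)}$.

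Second, I would invoke Theorem 3 of \citet{gu2022parameterization} index-by-index: applied to channel $h$ (whose per-channel system is, verbatim, the SISO HiPPO-N / HiPPO-LegS pair the theorem concerns), it gives $\mathbf{x}'_h \to \mathbf{x}_h$ as $N\to\infty$ in the sense established there (convergence of the reconstruction/approximation performed by the HiPPO ODE for a fixed input). Because $H$ is a fixed finite number and this mode of convergence is additive, summing the $H$ convergent pairs term by term preserves it, so $\mathbf{x}'(t) = \sum_{h=1}^{H}\mathbf{x}'_h(t) \to \sum_{h=1}^{H}\mathbf{x}_h(t) = \mathbf{x}(t)$, which is exactly the claim.

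I do not expect a genuine obstacle here — the corollary is essentially ``linearity plus a finite sum'' layered on top of a cited theorem — but the one point deserving care is matching the precise object and mode of convergence used by Theorem 3 (it is phrased in terms of the function the HiPPO ODE reconstructs, not merely pointwise convergence of state trajectories) and checking that this notion is linear in the input so that superposing $H$ copies is legitimate. A secondary bookkeeping point is that the factor $\tfrac{1}{2}$ must appear identically in every channel; this is automatic because $\mathbf{B}_{\mathrm{LegS}}$ is literally $H$ stacked copies of $\mathbf{b}_{\mathrm{LegS}}$, so the per-channel systems are precisely those covered by the SISO theorem. I would also remark that the zero-initial-state assumption is inherited from the cited result, and that an $N$-independent nonzero initial state would add identical homogeneous terms $e^{\mathbf{A}_{\mathrm{LegS}}t}$ and $e^{\mathbf{A}_{\mathrm{LegS}}^{\mathrm{Normal}}t}$ to the two sides, again handled by the SISO statement.
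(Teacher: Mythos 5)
Your proposal is correct and follows essentially the same route as the paper's own proof: decompose the MIMO system into $H$ SISO channels by linearity (each column of $\mathbf{B}_{\mathrm{LegS}}$ being a copy of $\mathbf{b}_{\mathrm{LegS}}$), invoke Theorem~3 of \citet{gu2022parameterization} on each channel, and sum the finitely many convergent pairs back up. Your added caution about matching the precise mode of convergence in the cited theorem is a reasonable refinement, but the paper's argument is structurally identical.
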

\begin{proof}
    Theorem 3 in \citet{gu2022parameterization} shows the following relationship for scalar input signals as $N \rightarrow \infty$:
    \begin{equation}
        \frac{ \d \mathbf{x}^{(h)}(t) }{ \d t } = \mathbf{A}_{\mathrm{LegS}}^{\mathrm{Normal}} \mathbf{x}^{(h)}(t) + \frac{1}{2} \mathbf{B}^{(h)}_{\mathrm{LegS}} u^{(h)}(t), \label{app:equ:equiv_gu}
    \end{equation}
    where the only modification we have made is introducing the $(h)$ superscript to allow us to explicitly index dimensions later.  We define $\mathbf{B} = \left[ \mathbf{B}^{(1)}_{\mathrm{LegS}} \mid \ldots \mid \mathbf{B}^{(H)}_{\mathrm{LegS}}\right]$.  We wish to extend this to the case of vector-valued input signals.
    
    We first recall \eqref{app:equ:relation:equiv_dynamics}, which shows that the latent states of the MIMO S5 SSM are the summation of the latent states of the $H$ SISO S4 SSMs (to which Theorem 3 from \citet{gu2021s4} applies).  Although we derived \eqref{app:equ:relation:equiv_dynamics} in discrete time, it applies equally in continuous time:
    \begin{equation}
        \mathbf{x}(t) = \sum\nolimits_{h=1}^H \mathbf{x}^{(h)}(t).  \label{app:equ:relation:equiv_dynamics_continuous}
    \end{equation}
    We can therefore define the derivative of the S5 state as:
    \begin{equation}
        \frac{\d \mathbf{x}(t)}{\d t} = \sum\nolimits_{h=1}^H \frac{ \d \mathbf{x}^{(h)}(t) }{ \d t }.
    \end{equation}
    Substituting \eqref{app:equ:equiv_gu} into this then yields:
    \begin{align}
        \frac{\d \mathbf{x}(t)}{\d t} &= \sum\nolimits_{h=1}^H \left[ \mathbf{A}_{\mathrm{LegS}}^{\mathrm{Normal}} \mathbf{x}^{(h)}(t) + \frac{1}{2} \mathbf{B}_{\mathrm{LegS}}^{(h)} u^{(h)}(t) \right], \\
        &= \mathbf{A}_{\mathrm{LegS}}^{\mathrm{Normal}} \sum\nolimits_{h=1}^H \mathbf{x}^{(h)}(t) + \frac{1}{2} \sum\nolimits_{h=1}^H \mathbf{B}_{\mathrm{LegS}}^{(h)} u^{(h)}(t) , \\
        &= \mathbf{A}_{\mathrm{LegS}}^{\mathrm{Normal}} \mathbf{x}(t) + \frac{1}{2} \mathbf{B}_{\mathrm{LegS}} \mathbf{u}(t).
    \end{align}
\end{proof}
This equivalence motivates initializing S5 state matrices with the diagonalizable HiPPO-N matrix and suggests that we can expect to see similar performance gains.

\subsection{Relaxing the Assumptions}
\label{app:sec:relation:blocks}
Here we discuss how relaxing the constraint on S5's latent size from Assumption \ref{ass:tied_s5} helps to relax the assumptions on the tied S4 SSM state matrices (Assumption \ref{ass:tied_a}) and timescales (Assumption \ref{ass:tied_delta}) as well as the tied output matrices that result from Proposition \ref{prop:equiv}.

We start by considering the case when the S5 SSM state matrix is block-diagonal. Consider an S5 SSM with latent size $JN=\mathcal{O}(H)$ and block-diagonal $\mathbf{A}\in \mathbb{R}^{JN\times JN}$, dense $\mathbf{B}\in \mathbb{R}^{JN \times H}$, dense $\mathbf{C} \in \mathbb{R}^{H \times JN}$, and $J$ different timescale parameters $\mathbf{\Delta}\in \mathbb{R}^{J}$. As a result of the block-diagonal state matrix, this system has a latent state $\mathbf{x}_k\in \mathbb{R}^{JN}$ that can be partitioned into $J$ different states $\mathbf{x}_k^{(j)}\in \mathbb{R}^N$. We can then partition this system into $J$ different subsystems and discretize each subsystem with one of the $\Delta^{(j)}$ to get the following discretized system:
\begin{equation}
    \overline{\mathbf{A}} = \left[ \begin{array}{ccc} \overline{\mathbf{A}}^{(1)} &   &   \\   & \ddots &   \\   &   & \overline{\mathbf{A}}^{(J)}  \end{array} \right], \quad \overline{\mathbf{B}} = \left[ \begin{array}{ccc} \overline{\mathbf{B}}^{(1)} \\ \vdots \\ \overline{\mathbf{B}}^{(J)}  \end{array} \right], \quad \mathbf{C} = \left[ \begin{array}{ccc} \mathbf{C}^{(1)} \mid  \cdots \mid \mathbf{C}^{(J)}  \end{array} \right],
\end{equation}
where $\overline{\mathbf{A}}^{(j)}\in \mathbb{R}^{N\times N}$, $\overline{\mathbf{B}}^{(j)}\in \mathbb{R}^{N \times H}$ and $\mathbf{C}^{(j)} \in \mathbb{R}^{H \times N}$. It follows that this partitioned system can be also be viewed as $J$ independent $N$-dimensional S5 SSM subsystems and the output of the overall system is the sum of the output of the $J$ subsystems
\begin{align}
    \mathbf{y}_k &= \mathbf{C}\mathbf{x}_k \\
    &= \sum_{j=1}^J \mathbf{C}^{(j)}\mathbf{x}_k^{(j)}.
\end{align}
It follows from Proposition \ref{prop:equiv} that the dynamics of each of these $J$ S5 SSM subsystems can be related to the dynamics of a \emph{different} S4 system from Proposition \ref{prop:equiv}. Each of these S4 systems has its own bank of tied S4 SSMs (cf. Assumptions \ref{ass:tied_a}, \ref{app:ass:tied_delta}). Importantly, each of the $J$ S4 systems can have its own state matrix, timescale parameter and output matrix shared across its $H$ S4 SSMs. Thus, the outputs of a $JN$ dimensional S5 SSM can be equivalent to the linear combination of the latent states of $J$ different S4 systems from Proposition \ref{prop:equiv}. This fact motivates the option to initialize a block-diagonal S5 state matrix with several HiPPO-N matrices on the blocks, rather than just initializing with one larger HiPPO-N matrix. In practice we found the block-diagonal initialization to improve performance on many tasks, see Appendix \ref{app:section:ablations}.

\subsection{Timescale Parameterization}
\label{app:sec:relation:timescales}
Finally, we take a closer look at the parameterization of the timescale parameters $\mathbf{\Delta}$. As discussed in Section \ref{sec:s4s5rel:untying}, S4 can learn a different timescale parameter for each S4 SSM, potentially allowing it to capture different timescales of the data. Further, the initialization of the timescales can be important~\citep{gu2022train, gupta2022dss}, and limiting to sampling a single initial parameter may lead to poor initialization.  The discussion in the previous section motivates potentially learning $J$ different timescale parameters, one for each of the $J$ subsystems. However, in practice, we found better performance when using $P$ different timescale parameters, one for each of the states. On the one hand, this can be viewed simply as learning a different scaling for each of the eigenvalues in the diagonalized system (see Eq. \eqref{eq:diag_disc_params}). On the other hand, this could be viewed as increasing the number of timescale parameters sampled at initialization, helping to combat the possibility of poor initialization. Of course, the system could \emph{learn} to use just a single timescale by setting all of the timescales to be the same. See further discussion in the ablation study in Appendix \ref{app:section:ablations}.

\clearpage
\newpage

\section{Ablations}
\label{app:section:ablations}
We perform several ablations to empirically explore different aspects of S5.

\subsection{S5 latent size, block-diagonal initialization, and timescale parameterization}
\label{app:simo-ablation}

The discussion in Section \ref{sec:s4s5rel} and Appendix \ref{app:sec:relation} raises several interesting questions: How does S5 perform when the latent size $P$ is restricted to be equal to the latent size $N$ used by each of S4's SSMs? How important is the timescale parameterization discussed in Appendix \ref{app:sec:relation:timescales}? How important is the block-diagonal initialization? Table \ref{app:tab:timescale_ablation} displays the results of an ablation study performed on the LRA tasks to get a better sense of this. We consider 3 versions of S5.

The first version uses the same general architecture (e.g. number of input/output features $H$, number of layers, etc)  as reported for the S4/S4D variants in \citet{gu2022parameterization}, sets the S5 SSM latent size $P$ to be equal to the latent size $N=64$ used by each of the S4 SSMs, and uses only a single scalar timescale parameter $\Delta \in \mathbb{R}$. This is essentially the version of S5 we consider in Proposition \ref{prop:equiv}. We observe that this constrained version of S5 actually performs well on most tasks, though struggles to perform comparably to the S4 baselines on Image and ListOps.

The second version of S5 is exactly the same as the first except we parameterize the timescale parameter as a vector $\mathbf{\Delta}\in \mathbb{R}^N$. We observe uniform improvements over the scalar timescale parameterization and this reflects our general findings when training S5. 

Finally, the complexity analysis and runtime comparison in Appendix \ref{apps:runtimecomp} suggests the latent size of S5 can be increased while maintaining similar complexity and practical runtimes as S4. We include the unconstrained version of S5 reported in our main results that uses the settings reported in the hyperparameter Table \ref{tab:hyperparam}. These models were allowed to be parameterized with fewer input/output features $H$ (to ensure similar parameter counts to the S4 baselines) and generally used larger latent sizes $P>N$. Further, we swept over the use of a block-diagonal initialization or not and the number of blocks to use (where $J=1$ indicates no block-diagonal initialization was used). All models benefited from the block-diagonal initialization for the LRA tasks (See Table \ref{tab:hyperparam} ).
\begin{table}[!ht]
  \captionsetup{justification=centering}
  \caption{Ablations on the LRA benchmark tasks~\citep{tay2020lra}.  S4 results were taken from \cite{gu2022parameterization, gu2021s4}.  Note that the total parameter count for all models in this table are commensurate, and hence variations in performance cannot be attributed to a model having drastically more parameters.  }
  

  \label{app:tab:timescale_ablation}
  \centering
  \begin{adjustbox}{center}
    \begin{tabular}{@{}lccccccc@{}}
    \toprule
    Model               & \texttt{ListOps}                  & \texttt{Text}                 & \texttt{Retrieval}            & \texttt{Image}                & \texttt{Pathfinder}       & \texttt{Path-X}                                  \vspace{2pt}\\ 
    (Input length)      & (2,048)                           & (4,096)                       & (4,000)                       & (1,024)                       & (1,024)                   & (16,384)                                          \\ 
    \midrule
    S4D-LegS & \underline{60.47}                    & 86.18                         & 89.46                         & \underline{88.19}                & 93.06                     & 91.95                              \\
    S4-LegS  & 59.60                    & 86.82                         & \underline{90.90}                     & \textbf{88.65}                & 94.20                     & 96.35                               \\   \midrule
    S5 (P=N, $J=1$, $\Delta \in \mathbb{R}$)                  & 57.20                              & 87.60                & 90.53                & 82.01                         & 94.14            & 96.59           \\
    S5 (P=N, $J=1$, $\boldsymbol\Delta \in \mathbb{R}^N$)                  & 58.65                             & \underline{88.12}                 & 90.76                & 85.04                         & \underline{94.53}  & \underline{97.49}   \\ \midrule
    S5 (P: free, J$\geq$1, $\boldsymbol\Delta \in \mathbb{R}^P$, see Table \ref{tab:hyperparam})                  & \textbf{62.15}                             & \textbf{89.31}                 & \textbf{91.40}                 & 88.00                & \textbf{95.33}             & \textbf{98.58}          \\ \bottomrule
    \end{tabular}
\end{adjustbox}
\end{table}

\subsection{Importance of HiPPO-N and continuous-time parameterization}
\label{app:ablation:disc_hippo}
We perform a further ablation study to gain insight into the differences between S5 and prior attempts at parallelized linear RNNs (discussed in Section \ref{sec: related_worrk}) focusing on what appears to be the distinguishing features: continuous-time parameterizations and HiPPO initializations.  We compare different initializations of the state matrix: random Gaussian, random antisymmetric, and HiPPO-N. The antisymmetric initialization is interesting because prior work considered these matrices in RNNs for long-range dependencies~\citep{chang2019antisymmetricrnn}, and because the HiPPO-LegS matrix can be parameterized in a way related to antisymmetric matrices~\citep{gu2021s4}.  Moreover, to compare to a setup more akin to the previous parallelized linear RNN work, we also consider a direct discrete-time parameterization of S5 that does not perform repeated discretization during training or learn the timescale parameter~$\mathbf{\Delta}$.  We present the results of this ablation study in Table \ref{tab:ablation} (along with S5). We consider three of the LRA tasks that vary in length and difficulty. 
\begin{table}[t]
  \captionsetup{justification=centering}
  \caption{S5 Initialization and Parameterization Ablation Study.  \xmark\  indicates the model did not improve over random guessing. }
  \label{tab:ablation}
  \centering
  \begin{adjustbox}{center}
    \begin{tabular}{@{}cccccccccc@{}}
    \toprule
    Model               & Parameterization    & Initialization          & \texttt{ListOps}                  & \texttt{Text}                             & \texttt{Path-X}                           \vspace{2pt}\\ 
    (Input length)      &                     &                         & (2,048)                           & (4,096)                                             & (16,384)                                  \\ \midrule    
    S5 (ablation)                 & Discrete            & Gaussian                & 41.50                             &  81.09                                                 & \xmark                                    \\ 
    S5 (ablation)                 & Discrete            & Antisymmetric           & 49.10                             &  \underline{86.42}                                 & \xmark                                    \\
    S5 (ablation)                 & Discrete            & HiPPO-N                  & 58.15                          & 61.93                                              & \xmark                                  \\ \midrule
    S5 (ablation)                 & Continuous          & Gaussian                &  58.50                            &  69.03                                                & \xmark                                    \\ 
    S5 (ablation)                 & Continuous          & Antisymmetric           & \underline{59.35}                &  82.83                                              & \xmark                                    \\ 
    \textbf{S5}                   & \textbf{Continuous}          & \textbf{HiPPO-N}                 & \textbf{62.15}                    & \textbf{89.31}                               & \textbf{98.58}                            \\ \bottomrule
    \end{tabular}
\end{adjustbox}
\end{table}

The main takeaway is that the only approach that consistently performs well on all tasks, including the ability to solve Path-X, is the S5 approach that uses the continuous-time parameterization and HiPPO initialization. We also note that we observed the discrete time/HiPPO-N matrix configuration to be difficult to train due to stability issues, typically requiring a much lower learning rate. 

\subsection{S4D Initialization ablations}
\label{app:S4D_ablations}
Finally, \citet{gu2022parameterization} propose several alternative diagonal matrices to the diagonalized HiPPO-N matrix, including the S4D-Inv and S4D-Lin matrices. They perform an ablation on the LRA tasks where they simply replace the diagonalized HiPPO-N matrix with the S4D-Inv and S4D-Lin matrices while keeping all other factors the same. We include these results in Table \ref{app:tab:lra}. In Table \ref{app:tab:lra}, we also include results for a similar ablation in S5 by using these matrices to initialize S5 in place of the HiPPO-N matrix while keeping all other factors constant. Both matrices perform well on most tasks with the exception of the S4D-Lin matrix on Path-X. Interestingly, one of these runs reached $96.79 \%$, however the other runs did not exceed random guessing on this task. Future exploration of these and other matrices are an interesting direction for future work.

\clearpage
\newpage

\section{Supplementary Results}
\label{app:sec:extended}

We include further experimental results to supplement the results presented in the main text.

\subsection{Extended LRA Results}
\begin{table}[h!]
  \captionsetup{justification=centering}
  \caption{Test accuracy on the LRA benchmark tasks~\citep{tay2020lra}. \xmark\ indicates the model did not exceed random guessing. Citations refer to the original model. The results for Transformer through Performer are from~\citep{tay2020lra}. We follow the procedure reported in \citet{gu2021s4, gu2022parameterization} and report means across three seeds for S4, S4D (as reported by \citet{gu2021s4, gu2022parameterization}) and S5, with the standard deviation in parenthesis.  }
  \label{app:tab:lra}
  \centering
  \begin{adjustbox}{center}
    \begin{tabular}{@{}lccccccc@{}}
    \toprule
    Model               & \texttt{ListOps}                  & \texttt{Text}                 & \texttt{Retrieval}            & \texttt{Image}                & \texttt{Pathfinder}       & \texttt{Path-X}     & Avg.                              \vspace{2pt}\\ 
    (Input length)      & (2,048)                           & (4,096)                       & (4,000)                       & (1,024)                       & (1,024)                   & (16,384)                                          \\ \midrule    
    Transformer \citep{vaswani2017attention}        & 36.37                             & 64.27                         & 57.46                         & 42.44                         & 71.40                     & \xmark            & 53.66                                    \\
    Reformer   \citep{Kitaev2020Reformer:}         & 37.27                             & 56.10                         & 53.40                         & 38.07                         & 68.50                     & \xmark            & 50.56                                \\
    BigBird \citep{zaheer2020big}           & 36.05                             & 64.02                         & 59.29                         & 40.83                         & 74.87                     & \xmark              & 54.17                              \\
    Linear Trans. \citep{katharopoulos2020transformers}     & 16.13                             & 65.90                         & 53.09                         & 42.34                         & 75.30                     & \xmark               & 50.46                             \\
    Performer  \citep{choromanski2021rethinking}     & 18.01                             & 65.40                         & 53.82                         & 42.77                         & 77.05                     & \xmark                & 51.18                            \\ \midrule
    FNet   \citep{lee2021fnet}         & 35.33                             & 65.11                         & 59.61                         & 38.67                         & 77.80                     & \xmark                & 54.42                            \\
    Nystr\"omformer \citep{xiong2021nystromformer} & 37.15                             & 65.52                         & 79.56                         & 41.58                         & 70.94                     & \xmark                & 57.46                            \\
    Luna-256  \citep{ma2021luna}      & 37.25                             & 64.57                         & 79.29                         & 47.38                         & 77.72                     & \xmark                & 59.37                            \\
    H-Transformer-1D \citep{zhu2021h} & 49.53                             & 78.69                         & 63.99                         & 46.05                         & 68.78                     & \xmark               & 61.41                             \\ 
    CCNN \citep{romero2022towards}  & 43.60                             & 84.08                         & \xmark                         & 88.90                         & 91.51                     & \xmark               & 68.02                             \\  \midrule
        Mega ($\mathcal{O}(L^2)$) \citep{ma2022mega}  & \textbf{63.14}                             & \textbf{90.43}                         & 91.25                         & \textbf{90.44}                         & \textbf{96.01}                     & \underline{97.98}               & \textbf{88.21}                             \\  
    Mega-chunk ($\mathcal{O}(L)$) \citep{ma2022mega} & 58.76                             & \underline{90.19}                         & 90.97                         & 85.80                         & 94.41                     & 93.81               & 85.66                             \\  
    \midrule
    DSS\textsubscript{SOFTMAX} \citep{gupta2022dss} & 60.60                             & 84.80                         & 87.80                         & 85.70                         & 84.60                     & 87.80  & 81.88                                             \\
    S4D-LegS \citep{gu2022parameterization} & 60.47 (0.34)                    & 86.18 (0.43)                         & 89.46 (0.14)                         & 88.19 (0.26)                & 93.06 (1.24)                     & 91.95            & 84.89                        \\
    S4D-Inv \citep{gu2022parameterization} & 60.18 (0.35)                    & 87.34 (0.20)                         & 91.09 (0.01)                         & 87.83 (0.37)                & 93.78 (0.25)                     & 92.80            & 85.50                        \\
    S4D-Lin \citep{gu2022parameterization} & 60.52 (0.51)                    & 86.97 (0.23)                         & 90.96 (0.09)                         & 87.93 (0.34)                & 93.96 (0.60)                     & \xmark            & 78.39                        \\
    S4-FouT \citep{gu2022train} & 57.88 (1.90)                    & 86.34 (0.31)                         & 89.66 (0.88)                         & 89.07 (0.19)                & 94.46 (0.24)                     & \xmark            & 77.90                        \\
    S4-LegS/FouT \citep{gu2022train} & 60.45 (0.75)                    & 86.78 (0.26)                         & 90.30 (0.28)                         & 89.00 (0.26)                & 94.44 (0.08)                     & \xmark            & 78.50                        \\
    S4-LegS \citep{gu2021s4} & 59.60 (0.07)                    & 86.82 (0.13)                         & 90.90 (0.15)                         & 88.65 (0.23)                & 94.20 (0.25)                     & 96.35            & 86.09                        \\
    Liquid-S4 \citep{hasani2022liquid}  & \underline{62.75} (0.20)                   & 89.02 (0.04)                         & 91.20 (0.01)                         & \underline{89.50} (0.40)                & 94.80 (0.2)                     & 96.66 (0.001)           & 87.32                        \\
    \midrule
    S5-Inv  (See Appendix \ref{app:S4D_ablations})                   & 60.07 (0.26)                            & 87.77 (0.29)                & 91.26 (0.12)                & 86.41 (0.17)                         & 93.42 (0.42)            & 97.54 (0.74)        & 86.08 \\
    S5-Lin (See Appendix \ref{app:S4D_ablations})                 & 59.98 (0.53)                            & 88.15 (0.24)                & \underline{91.31} (0.24)                & 86.05 (0.96)                         & 94.31 (0.36)            & 65.60 (27.00)         & 80.90 \\
    \textbf{S5}                  & 62.15 (0.23)                            & 89.31 (0.15)                & \textbf{91.40} (0.05)                & 88.00 (0.22)                         & \underline{95.33} (0.26)            & \textbf{98.58} (0.17)         & \underline{87.46} \\   \bottomrule
    \end{tabular}
\end{adjustbox}
\end{table}

\newpage
\subsection{Extended Speech Results}
\begin{table}[h]
  \captionsetup{justification=centering}
  \caption{Speech Commands classification task \citep{warden2018speech}. Test accuracy on 35-way keyword spotting. Training examples
    are one-second audio waveforms sampled at 16kHz, or a one-dimensional sequence of length 16000. Last column indicates zero-shot
    testing at 8kHz where examples are constructed by naive decimation. The mean across three random seeds is reported, with the standard deviation in parenthesis. Performance for the baselines InceptionNet through to S4D-Lin are reported from \citet{gu2022parameterization}.}
  \label{tab:sc35}
  \centering
  \begin{adjustbox}{center}
  \begin{tabular}{@{}lccc@{}}
    \toprule                   
    Model                                          & Parameters     & 16000Hz                   & 8000Hz          \\ 
    \midrule
    InceptionNet \citep{nonaka2021depth} & 481K  & 61.24 (0.69)           & 05.18 (0.07) \\
    ResNet-18 \citep{nonaka2021depth} & 216K  & 77.86 (0.24)           & 08.74 (0.57) \\
    XResNet-50 \citep{nonaka2021depth} & 904K  & 83.01 (0.48)           & 07.72 (0.39) \\
    ConvNet \citep{nonaka2021depth} & 26.2M  & 95.51 (0.18)           & 07.26 (0.79) \\
    \midrule
    S4-LegS \citep{gu2021s4}   & 307K        & 96.08 (0.15)                 & 91.32 (0.17)                \\
    S4-FouT \citep{gu2022train}  & 307K        & 95.27 (0.20)                 & 91.59 (0.23)                \\
    S4-(LegS/FouT) \citep{gu2022train} &  307K        & 95.32 (0.10)                 & 90.72 (0.68)                \\
    S4D-LegS \citep{gu2022parameterization}  &  306K        & 95.83 (0.14)                 & 91.08 (0.16)                \\
    S4D-Inv \citep{gu2022parameterization}   & 306K        & 96.18 (0.27)                 & \underline{91.80} (0.24)                \\
    S4D-Lin \citep{gu2022parameterization}   & 306K        & 96.25 (0.03)                 & 91.58 (0.33)                \\
    Liquid-S4 \citep{hasani2022liquid}                               & 224K                & \textbf{96.78} (0.05)        & 90.00 (0.25)             \\ \midrule
    \textbf{S5}                                             & 280K          & \underline{96.52} (0.16)           & \textbf{94.53} (0.10)                \\ \bottomrule
  \end{tabular}
\end{adjustbox}
\end{table}

\newpage
\subsection{Pendulum Extended Results}
\label{app:sec:extended:pendulum}

We also evaluate two ablations: \emph{S5-drop} uses the same S5 architecture, but drops the dependence on the inter-sample interval, i.e. $\Delta_t \triangleq 1.0$.  We expect this network to perform poorly as it has no knowledge of how long has elapsed between observations.  \emph{S5-append} uses the same S5 architecture, but appends the integration timestep to the thirty-dimensional image encoding, prior to being input into the dense S5 input layer.  Hypothetically, we expect this network to perform as well as S5.  However, to do so, requires the S5 network to learn to process time, which may be difficult, especially in more complex domains.   We include these ablations in the bottom partition of Table \ref{app:tab:results:pendulum_mse}. 

Note that the runtimes quoted for the baseline methods (runtimes marked with a *) are as reported by \citet{schirmer2022modeling}.  These times are the total time for a training epoch, and hence include any time spent batching data.  We re-ran the CRU using the original PyTorch code on the same hardware as we run our JAX S5 experiments on (labelled \emph{CRU (our run)}).  For these experiments we used a single NVIDIA GeForce RTX 2080 Ti.  For these runs (\emph{CRU (our run)}, \emph{S5}, \emph{S5-drop} and \emph{S5-append}) we exclude the time spent batching the data to more faithfully compare the runtimes for the models themselves.  Also note that our S5 experiments will benefit from JAX compilation, but that this is not sufficient to explain the difference in runtime.

\begin{centering}
\begin{table}[h!]
\caption[Extended pendulum interpolation results]{Test MSE $\times10^{-3}$ and runtimes on the pendulum regression task.  Performance for the baselines, mTAND through to CRU, are reported from \citet{schirmer2022modeling}, with mean and standard deviations across five random seeds (standard deviation in parenthesis).  Accompanying citation indicates the original citation for the method.  We re-ran the CRU (labelled \emph{CRU (our run)}) and ran our S5 methods across twenty random seeds.  We report mean and variances of the MSE error on the held-out test set, using a model selected using the validation set MSE.  We refer the reader to \citet{schirmer2022modeling} for full description of the baselines.}  
\begin{center}
\vskip -0.1in
\begin{tabular}{l@{\hskip 5pt}ccc}
\toprule
Model               & Runtime (sec/train epoch) & Regression MSE ($\times 10^{-3}$)  \\
\midrule
mTAND \citep{shukla2021multitime}                & 3*                 & 65.64    (4.05)                                            \\
RKN   \citep{becker2019recurrent}               & 20*                & 8.43	   (0.61)                                            \\
RKN-$\Delta_t$ \citep{becker2019recurrent}     & 20*                & 5.09     (0.40)                                            \\
GRU    \citep{cho2014properties}            & 12*                & 9.44	   (1.00)                                            \\
GRU-$\Delta_t$ \citep{cho2014properties}      & 12*                & 5.44     (0.99)                                            \\
Latent ODE  \citep{chen2018neural}        & 52*                & 15.70    (2.85)                                            \\
ODE-RNN  \citep{rubanova2019latent}           & 37*                & 7.26     (0.41)                                            \\
GRU-ODE-B \citep{de2019gru}          & 60*                & 9.78     (3.40)                                            \\
f-CRU   \citep{schirmer2022modeling}            & 29*                & 6.16     (0.88)                                            \\
CRU  \citep{schirmer2022modeling}                & 36*                & 4.63     (1.07)                                            \\
\midrule
CRU (our run)       & 22                & \underline{3.94}     (0.21)                    \\
\textbf{S5}  & 0.25              & \textbf{3.41} (0.27)                              \\
\midrule
S5-drop (ablation)      & \textbf{0.25} & 6.68 (0.38)                                                \\
S5-append (ablation)    & \underline{0.25}          & 4.13 (0.43)                                                \\
\bottomrule
\label{app:tab:results:pendulum_mse}
\end{tabular}
\end{center}
\end{table}
\end{centering}

\newpage
\subsection{Pixel-level 1-D Image Classification Results}
\label{sec:pixellong}
Table \ref{tab:pixellong} presents results and citations of the pixel-level 1-D image classification.

\begin{table}[!ht]
  \captionsetup{justification=centering}
  \caption{Test accuracy on Pixel-level 1-D Image classification. Citations refer to the original model; additional citation indicates work from which this baseline is reported.}
  \label{tab:pixellong}
  \centering
  \begin{adjustbox}{center}
    \begin{tabular}{@{}lccccccc@{}}
    \toprule
    Model               & \texttt{sMNIST}                  & \texttt{psMNIST}                 & \texttt{sCIFAR}        \\ 
    (Input length)      & (784)                           & (784)                       & (1024)                                  \\ \midrule    
    Transformer \citep{trinh2018learning, vaswani2017attention}        & 98.9                              & 97.9                         & 62.2               \\  \midrule 
    CCNN \citep{romero2022towards}                              & \textbf{99.72}    & \textbf{98.84}   & \textbf{93.08}   \\
    FlexTCN \citep{romero2021flexconv}                          & 99.62                           & 98.63                      & 80.82 \\
    CKConv \citep{romero2021ckconv}        & 99.32                           & 98.54                      & 63.74        \\
    TrellisNet \citep{bai2018trellis}    & 99.20                          & 98.13                         & 73.42                    \\
    TCN \citep{bai2018empirical}           & 99.0                              & 97.2                          & -                         \\  \midrule
    LSTM  \citep{gu2020improving, hochreiter1997long}          & 98.9                             & 95.11                         & 63.01                                          \\
    r-LSTM   \citep{trinh2018learning}        & 98.4                              & 95.2                         & 72.2                               \\
    Dilated GRU \citep{chang2017dilated}                                            & 99.0             & 94.6              & -                \\
    Dilated RNN  \citep{chang2017dilated}                                           & 98.0             & 96.1              & -                \\
    IndRNN  \citep{li2018independently}                                                 & 99.0             & 96.0              & -                \\
    expRNN  \citep{lezcano2019cheap}                                               & 98.7            & 96.6              & -                \\     
    UR-LSTM   \citep{gu2020improving}      & 99.28                            & 96.96                        & 71.00                                 \\
    UR-GRU \citep{gu2020improving}         & 99.27                              & 96.51                         & 74.4                          \\
    LMU     \citep{Voelker2019LegendreMU}                        & -                & 97.15             & -   \\
    HiPPO-RNN  \citep{gu2020hippo}     & 98.9                              & 98.3                         & 61.1                   \\
    UNIcoRNN \citep{rusch2021unicornn}         &  -                                & 98.4              & -  \\ 
    LMU-FFT  \citep{chilkuri2021parallelizing}      & -                             & 98.49                        & -                      \\
    LipschitzRNN  \citep{erichson2021lipschitz}     & 99.4                            & 96.3                          & 64.2                \\  \midrule
    LSSL   \citep{gu2021lssl}        & 99.53                          & \underline{98.76}                        & 84.65 \\
    S4    \citep{gu2022parameterization, gu2021s4}         & 99.63                    & 98.70                         & 91.80                      \\
    S4D   \citep{gu2022parameterization}              & -              &-                                             & 89.92\\
    Liquid-S4 \citep{hasani2022liquid}. & -       & -   & \underline{92.02}\\
    \textbf{S5}  &          \underline{99.65}            & 98.67                 & 90.10     \\ \bottomrule
    \end{tabular}
\end{adjustbox}
\end{table}

\clearpage
\newpage

\section{Experiment Configurations}
\label{app:sec:configs}
In this section we describe the experimental details. This includes the model architecture, general hyperparameters, specifics for each task, and information about the datasets. 

\subsection{Deep Sequence Model Architecture}

For the experiments, we use the S5 layer as a drop-in replacement for the S4 layer used in the sequence model architecture of~\citep{gu2021s4}. On a high level, this architecture consists of a linear encoder (to encode the input at each time step into $H$ features), multiple S5 layers, a mean pooling layer, a linear decoder, and a Softmax operation for the classification tasks.  The mean pooling layer compresses the output of the last S5 layer, of shape [batch size, sequence length, number of features ($H$)], across the sequence length dimension, so that a single $H$-dimensional encoding is available for softmax classification. 

The baseline S4 models from \citet{gu2022parameterization, gu2022train} apply a GLU activation~\citep{dauphin2017languageGLU} function to the S4 SSM outputs. To take advantage of the fact that the S5 SSM outputs have already been mixed throughout the MIMO SSM we use what is essentially a weighted sigmoid gated unit~\citep{tanaka2020weighted} (a GLU activation without an additional linear transform). Specifically, given an S5 SSM output $\mathbf{y}_k\in \mathbb{R}^H$ and a dense matrix $\mathbf{W}\in \mathbb{R}^{H \times H}$, the layer output of the activation function we apply is $\mathbf{u}_k' = \mathrm{GELU}(\mathbf{y}_k) \odot \sigma(\mathbf{W}*\mathrm{GELU}(\mathbf{y}_k))$. 

Hyperparameter options such as dropout rate, using either layer normalization or batch normalization, and using either pre-norm or post-norm are applied between the layers.  Exceptions to the basic architecture described here are mentioned in the individual experiment sections below.

\subsection{Default Hyperparameters}

Table \ref{tab:hyperparam} presents the main hyperparameters used for each experiment.  For all experiments we ensure the number of layers and layer input/output features $H$ are less than or equal to the number of layers and layer input/output features reported in \citet{gu2022parameterization} as well as ensuring comparable parameter counts.

In general, the models for most tasks used batch normalization and pre-norm. Exceptions are noted in the individual experiment sections below.

\subsubsection{Optimizers and learning rates}
We follow the general optimization approach used by S4/S4D in \cite{gu2022parameterization}. We use the AdamW optimizer~\citep{loshchilov2018decoupled} with a global learning rate. However, in general, no weight decay and a smaller learning rate (the SSM learning rate) is applied to $\mathbf{\Lambda}$, $\mathbf{\tilde{B}}$,  $\mathbf{\Delta}$. All experiments used cosine annealing.  Exceptions to these points are noted in the individual experiment sections below.

\subsubsection{Bidirectionality}
We follow \citet{gu2022parameterization} and use bidirectional models for the LRA and speech tasks. Unidirectional (causal) models were used for the pendulum, sequential and permuted MNIST for fair comparison with prior methods that used unidirectional models.

\begin{table}
  \caption{Hyperparameters used for the reported results. Depth: number of layers. H: number of input/output features. P: Latent size. J: number of blocks used for the initialization of $\mathbf{A}$ (see Section \ref{subsec:initA}). Dropout: dropout rate. LR: global learning rate. SSM LR: the SSM learning rate. B: batch size. Epochs: max epochs set for the run. WD: weight decay.}
  \label{tab:hyperparam}
  
  \begin{adjustwidth}{-1in}{-1in}  
  \begin{center}
    \footnotesize
    \begin{tabular}{@{}lccccccccccccc@{}}
    \toprule
                  & Depth  & H    &P   & J   & Dropout  & LR       & SSM LR    & B      & Epochs  & WD   \vspace{2pt}\\ 
  \midrule
    ListOps       & 8      & 128  & 16    & 8   & 0.0        & 0.003  & 0.001     & 50          & 40       & 0.04      \\
    Text          & 6      & 256   & 192     & 12   & 0.1        & 0.004  & 0.001     & 50           & 35       & 0.07     \\
    Retrieval     & 6      & 128  & 256     & 16   & 0.0        & 0.002  & 0.001     & 32           & 20       & 0.05        \\
    Image         & 6      & 512  & 384    & 3    & 0.1  & 0.005  & 0.001    & 50           & 250      & 0.07      \\
    Pathfinder    & 6      & 192  & 256 & 8       &  0.05     & 0.005  & 0.0009    & 64         & 200          & 0.03        \\
    Path-X        & 6      & 128  & 256 &16       & 0.0      & 0.002  & 0.0006   & 32           & 75      & 0.06         \\
    \midrule
    Speech        & 6      & 96  & 128  & 16     & 0.1     & 0.008  & 0.002   & 16           & 40     & 0.04             \\\midrule
    Pendulum      & 4      & 30  & 16   & 8       & 0.0     & 0.012  & 0.003   & 32           & 100     & 0.0             \\\midrule
    sMNIST        & 4      & 96  & 128    & 1    & 0.1     & 0.008  & 0.002   & 50           & 150     & 0.01             \\
    psMNIST        & 4      & 128 & 128    & 2    & 0.15     & 0.004  & 0.001   & 50           & 150     & 0.01              \\
    sCIFAR        & 6      & 512  & 384    & 3   & 0.1  & 0.0045  & 0.001    & 50           & 250      & 0.07      \\
     \bottomrule
    \end{tabular}
    
\end{center}
\end{adjustwidth}

\end{table}

\subsection{Task Specific Hyperparameters}
Here we specify any task-specific details, hyperparameter or architectural differences from the defaults outlined above. 

\subsubsection{Listops}
Weight decay and the global learning rate were applied to $\mathbf{\tilde{B}}$.

\subsubsection{Text}
No exceptions to the defaults for this run.

\subsubsection{Retrieval}
This document matching task requires a slightly different architecture from the other experiments, as discussed in \citet{tay2020lra}.  We use the same configuration as S4~\citep{gu2021s4}.  Each string is passed through the input encoder, S5 layers, and mean pooling layers.  Denoting $X_1$ as the output for the first document and $X_2$ as the output for the second document, four features are created and concatenated together~\citep{tay2020lra} as
\begin{align}
    X = [X_1, X_2, X_1*X_2, X_1 - X_2].
\end{align}
This concatenated feature is then fed to a linear decoder and softmax function as normal.

\subsubsection{Image}
Weight decay and the global learning rate were applied to $\mathbf{\tilde{B}}$.

\subsubsection{Pathfinder}
No exceptions to the defaults for this run.

\subsubsection{Path-X}
\label{subsec:app:pathx}
Weight decay was applied to $\mathbf{\tilde{B}}$.

\subsubsection{Speech Commands}
No weight decay and the SSM learning rate were applied to $\mathbf{\tilde{C}}$ for this run.

\subsubsection{Pendulum Regression}
We use the same encoder-decoder architecture as \citet{schirmer2022modeling}.  The encoder has layers: convolution, ReLU, max pool, convolution, ReLU, max pool, dense, ReLU, dense.  The first convolution layer has twelve features, a $5 \times 5$ kernel, and a padding of $(2, 2)$.  The second convolution layer has twelve features, a $3 \times 3$ kernel, a stride of $2$, and a padding of $(1, 1)$.  Both max pools use a window size of $2 \times 2$ and a stride of 2.  The dense layer has thirty hidden units.  The linear readout layer outputs $H=30$ features.  This is chosen to match the encoding size in \citet{schirmer2022modeling}, and is used for all layers.  Separate mean and unconstrained variance decoders are used, defined as a one-layer MLP with a hidden size of thirty.  An elu+1 activation function is used to constrain the variance to be positive.  

Layer normalization and post-norm were used for this task.

For the timings presented in Table \ref{tab:results:pendulum_mse} and \ref{app:tab:results:pendulum_mse} we use a batch size of $50$, instead of the batch size of $32$ used during training, to match the batch sizes reported by the baselines. 

\subsubsection{Sequential MNIST}
No exceptions to the defaults for this run.

\subsubsection{Permuted Sequential MNIST}
Weight decay and the global learning rate were applied to $\mathbf{\tilde{B}}$. Post-norm was used for this task.

\subsubsection{Sequential CIFAR}
We trained a model with the exact hyperparameter settings as used for the LRA-IMAGE (grayscale sequential CIFAR) task with no further tuning.

\subsection{Dataset Details}
We provide more context and details for each of the LRA~\citep{tay2020lra} and Speech Commands~\citep{warden2018speech} datasets we consider.  Note that we follow the same data pre-processing steps as \citet{gu2021s4}, which we also include here for completeness.  
\begin{itemize}
    \item \texttt{ListOps}:  A lengthened version of the dataset presented by~\citet{nangia2018listops}.  Given a nested set of mathematical operations (such as \texttt{min} and \texttt{max}) and integer operands in the range zero to nine, expressed in prefix notation with brackets, compute the integer result of the mathematical expression, e.g. $\mathrm{[MAX 2 9 [MIN 47]0]} \rightarrow 9$.  Characters are encoded as one-hot vectors, with $17$ unique values possible (opening brackets and operators are grouped into a single token).  The sequences are of unequal length, and hence the end of shorter sequences is padded with a fixed indicator value, padded to a maximum length of $2,000$.  A reserved end-of-sequence token is appended.  There are $10$ different classes, representing the integer result of the expression.  There are $96,000$ training sequences, $2,000$ validation sequences, and $2,000$ test sequences.  No normalization is applied.  
    \item \texttt{Text}: Based off of the iMDB sentiment dataset presented by~\citet{maas2011learning}.  Given a movie review, where characters are encoded as a sequence of integer tokens, classify whether the movie review is positive or negative.  Characters are encoded as one-hot vectors, with $129$ unique values possible.  Sequences are of unequal length, and are padded to a maximum length of $4,096$.  There are two different classes, representing positive and negative sentiment.  There are $25,000$ training examples and $25,000$ test examples.  No validation set is provided.  No normalization is applied.
    \item \texttt{Retrieval}: Based off of the ACL Anthology network corpus presented by~\citet{radev2009acl}.  Given two textual citations, where characters are encoded as a sequence of integer tokens, classify whether the two citations are equivalent.  The citations must be compressed separately, before being passed into a final classifier layer.  This is to evaluate how effectively the network can represent the text.  The decoder head then uses the encoded representation to complete the task.  Characters are encoded into a one-hot vector with $97$ unique values.  Two paired sequences may be of unequal length, with a maximum sequence length of $4,000$.  There are two different classes, representing whether the citations are equivalent or not.  There are $147,086$ training pairs, $18,090$ validation pairs, and $17,437$ test pairs.  No normalization is applied.
    \item \texttt{Image}: Uses the CIFAR-10 dataset presented by~\citet{krizhevsky2009learning}.  Given a $32 \times 32$ grayscale CIFAR-10 image as a one-dimensional raster scan, classify the image into one of ten classes.  Sequences are of equal length ($1,024$).  There are ten different classes.  There are $45,000$ training examples, $5,000$ validation examples, and $10,000$ test examples.  RGB pixel values are converted to a grayscale intensities, which are then normalized to have zero mean and unit variance (across the entire dataset).
    \item \texttt{Pathfinder}:  Based off of the Pathfinder challenge introduced by~\citet{linsley2018learning}.  A $32 \times 32$ grayscale image image shows a start and an end point as a small circle.  There are a number of dashed lines on the image.  The task is to classify whether there is a dashed line (or path) joining the start and end point.  There are two different classes, indicating whether there is a valid path or not.  Sequences are all of the same length ($1,024$).  There are $160,000$ training examples, $20,000$ validation examples, and $20,000$ test examples.  The data is normalized to be in the range $[-1, 1]$.  
    \item \texttt{Path-X}:  An ``extreme'' version of the \texttt{Pathfinder} challenge.  Instead, the images are $128 \times 128$ pixels, resulting in sequences that are a factor of sixteen times longer.  Otherwise identical to the \texttt{Pathfinder} challenge.
    \item \texttt{Speech Commands}:  Based on the dataset released by~\citet{warden2018speech}.  Readers recite one of 35 words.  The task is then to classify which of the 35 words was spoken from a $16kHz$ one-dimensional audio recording.  There are 35 different classes, each representing one of the words in the vocabulary.  Sequences are all of the same length ($16,000$).  There are $24,482$ training examples, $5,246$ validation examples, and $5,247$ test examples.  Data is normalized to be zero mean and have a standard deviation of $0.2$.
    \item \texttt{Speech Commands $\times$ 0.5}:  Temporally sub-sampled version of \texttt{Speech Commands}, where the validation and test datasets only are sub-sampled by a factor of $1 / 0.5$, and are therefore shortened to length $8,000$.  No subsequent padding is applied.  The training dataset is not subsampled.  
    \item \texttt{Sequential MNIST}: (sMNIST)  10-way digit classification from a $28 \times 28$ grayscale image of a handwritten digit, where the input image is flattened into a $784$-length scalar sequence.
    \item \texttt{Permuted Sequential MNIST}: (psMNIST)  10-way digit classification from a $28 \times 28$ grayscale image of a handwritten digit, where the input image is flattened into a $784$-length scalar sequence.  This sequence is then permuted using a fixed order. 
    \item \texttt{Sequential CIFAR}:  (sCIFAR):  10-way image classification using the CIFAR-10 dataset.  Identical to \texttt{image}, except that full colour images are input as a $1,024$-length input sequence, where each input is an (R,G,B) triple.
    \item \texttt{Pendulum Regression}:  Reproduced from \citet{becker2019recurrent} and \citet{schirmer2022modeling}.  The input sequence is a $24\times 24$ grayscale rendering of a pendulum, driven by a random torque process.  The images pixels are corrupted by a noise process that is correlated in time.  The pendulum is simulated for $100$ timesteps, and $50$ frames are irregularly sampled without replacement from the simulation.  The objective is to estimate the sine and cosine of the angle of the pendulum.  A train/validation/test split of $2,000/1,000/1,000$ is used.  
\end{itemize}

\clearpage
\newpage

\section{Background on Parallel Scans for Linear Recurrences}
\label{app:sec:parallel_scan}

For the interested reader, this section provides more background on using a parallel scan for a linear recurrence, as well as a simple example to illustrate how it can compute the recurrence in parallel. The parallelization of scan operations has been well studied~ \citep{ladner1980parallel, lakshmivarahan1994parallel, blelloch1990prefix}, and many standard scientific computing libraries contain efficient implementations.
We note the linear recurrence we consider here is a specific instance of the more general setting discussed in Section 1.4 of \citet{blelloch1990prefix}.  

Computing a general parallel scan requires defining two objects: 
\begin{itemize}
    \item The initial elements the scan will operate on.
    \item A binary associative operator $\bullet$ used to combine the elements.
\end{itemize}
To compute a length $L$ linear recurrence, $x_k = \overline{\mathbf{A}}x_{k-1} + \overline{\mathbf{B}}u_k$, we will define the $L$ initial elements, $c_{1:L}$, such that each element  $c_k$ is the tuple
\begin{align}
    \label{eq:scan_elements}
    c_k = (c_{k,a}, c_{k,b}) := (\overline{\mathbf{A}},\enspace \overline{\mathbf{B}}u_k).
\end{align}
These $c_{1:L}$ will be precomputed prior to the scan. Having created the list of elements for the scan to operate on,  we define the binary operator $\bullet$ for the scan to use on this linear recurrence as
\begin{align}
    q_i \bullet q_j := \left(  q_{j,a} \odot q_{i,a}, \enspace q_{j,a} \otimes q_{i,b} + q_{j,b}\right),
\end{align}
where $q_k$ denotes an input element to the operator that could be the initial elements $c_k$ or some intermediate result, $\odot$ denotes matrix-matrix multiplication, $\otimes$ denotes matrix-vector multiplication and $+$ denotes elementwise addition. We show that this operator is associative at the end of this section. 
\paragraph{Simple example using binary operator}

We can illustrate how $\bullet$ can be used to compute a linear recurrence in parallel with a simple example. Consider the system $x_k = \overline{\mathbf{A}}x_{k-1} + \overline{\mathbf{B}}u_k$, and a length $L=4$ sequence of inputs $u_{1:4}$. Assuming $x_0=0$, the desired latent states from this recurrence are:
\begin{align}
    x_1 &= \overline{\mathbf{B}}u_1\\
    x_2 &= \overline{\mathbf{A}}\overline{\mathbf{B}}u_1 + \overline{\mathbf{B}}u_2\\
    x_3 & = \overline{\mathbf{A}}^2\overline{\mathbf{B}}u_1 + \overline{\mathbf{A}}\overline{\mathbf{B}}u_2 + \overline{\mathbf{B}}u_3\\
    x_4 &= \overline{\mathbf{A}}^{3}\overline{\mathbf{B}}u_1 + \overline{\mathbf{A}}^2\overline{\mathbf{B}}u_2 + \overline{\mathbf{A}}\overline{\mathbf{B}}u_3 + \overline{\mathbf{B}}u_4
\end{align}
We first note that $\bullet$ can be used to compute this recurrence sequentially. We can initialize the scan elements $c_{1:4}$ as in \eqref{eq:scan_elements}, and then sequentially scan over these elements to compute the output elements $s_i=s_{i-1} \bullet c_i$. Defining $s_0:=(\mathbf{I},0)$ where $\mathbf{I}$ is the identity matrix, we have for our example:
\begin{align}
    s_1 = s_0 \bullet c_1 &=  (\mathbf{I},\enspace 0) \bullet (\overline{\mathbf{A}},\enspace \overline{\mathbf{B}}u_1) = (\overline{\mathbf{A}}\mathbf{I},\enspace \overline{\mathbf{A}}0+\overline{\mathbf{B}}u_1) = (\overline{\mathbf{A}},\enspace \overline{\mathbf{B}}u_1)  \\
    s_2 = s_1 \bullet c_2 &=(\overline{\mathbf{A}},\enspace \overline{\mathbf{B}}u_1) \bullet (\overline{\mathbf{A}},\enspace \overline{\mathbf{B}}u_2)  =  (\overline{\mathbf{A}}^2,\enspace \overline{\mathbf{A}}\overline{\mathbf{B}}u_1 + \overline{\mathbf{B}}u_2 )\\
    s_3 = s_2 \bullet c_3 &= (\overline{\mathbf{A}}^2,\enspace \overline{\mathbf{A}}\overline{\mathbf{B}}u_1 + \overline{\mathbf{B}}u_2 ) \bullet (\overline{\mathbf{A}},\enspace \overline{\mathbf{B}}u_3)  =  (\overline{\mathbf{A}}^3,\enspace \overline{\mathbf{A}}^2\overline{\mathbf{B}}u_1 + \overline{\mathbf{A}}\overline{\mathbf{B}}u_2  + \overline{\mathbf{B}}u_3)\\
    s_4 = s_3 \bullet c_4 &= (\overline{\mathbf{A}}^3,\enspace \overline{\mathbf{A}}^2\overline{\mathbf{B}}u_1 + \overline{\mathbf{A}}\overline{\mathbf{B}}u_2  + \overline{\mathbf{B}}u_3) \bullet (\overline{\mathbf{A}},\enspace \overline{\mathbf{B}}u_4)\\
     & = (\overline{\mathbf{A}}^4, \enspace \overline{\mathbf{A}}^3\overline{\mathbf{B}}u_1 + \overline{\mathbf{A}}^2\overline{\mathbf{B}}u_2  + \overline{\mathbf{A}}\overline{\mathbf{B}}u_3 + \overline{\mathbf{B}}u_4).
\end{align}
Note that the second entry of each of the output tuples, $s_{i,b}$, contains the desired $x_i$ computed above. Computing the scan in this way requires four sequential steps since each $s_i$ depends on $s_{i-1}$.

Now consider how we can use this binary operator to compute the recurrence in parallel. We will label the output elements of the parallel scan as $r_{1:4}$ and define $r_0=(\mathbf{I},0)$.  We will first compute the even indexed elements $r_2$ and $r_4$, and then compute the odd indexed elements $r_1$ and $r_3$. We start by applying the binary operator $\bullet$ to adjacent pairs of our initial elements $c_{1:4}$ to compute $r_2$ and the intermediate result $q_4$, and we then repeat this process to compute $r_4$ by applying $\bullet$ to $r_2$ and $q_4$: 
\begin{align}
    r_2 =  c_1 \bullet c_2 &= (\overline{\mathbf{A}},\enspace \overline{\mathbf{B}}u_1) \bullet (\overline{\mathbf{A}},\enspace \overline{\mathbf{B}}u_2) = (\overline{\mathbf{A}}^2,\enspace \overline{\mathbf{A}}\overline{\mathbf{B}}u_1 + \overline{\mathbf{B}}u_2 )\\
    q_4 = c_3 \bullet c_4 &= (\overline{\mathbf{A}},\enspace \overline{\mathbf{B}}u_3) \bullet (\overline{\mathbf{A}},\enspace \overline{\mathbf{B}}u_4) = (\overline{\mathbf{A}}^2,\enspace \overline{\mathbf{A}}\overline{\mathbf{B}}u_3 + \overline{\mathbf{B}}u_4 )\\
    \vspace{4em}\nonumber\\
    r_4 = r_2 \bullet q_4 &= (\overline{\mathbf{A}}^2,\enspace \overline{\mathbf{A}}\overline{\mathbf{B}}u_1 + \overline{\mathbf{B}}u_2 ) \bullet 
    (\overline{\mathbf{A}}^2,\enspace \overline{\mathbf{A}}\overline{\mathbf{B}}u_3 + \overline{\mathbf{B}}u_4 )\\
    &= (\overline{\mathbf{A}}^4,\enspace \overline{\mathbf{A}}^3\overline{\mathbf{B}}u_1 + \overline{\mathbf{A}}^2\overline{\mathbf{B}}u_2 + \overline{\mathbf{A}}\overline{\mathbf{B}}u_3 + \overline{\mathbf{B}}u_4).
\end{align}
Now we will compute the odd indexed elements $r_1$ and $r_3$, using the even indexed $r_0$ and $r_2$, as $r_k=r_{k-1} \bullet c_k$:
\begin{align}
    r_1 = r_0 \bullet c_1 &= (I,\enspace 0) \bullet (\overline{\mathbf{A}},\enspace \overline{\mathbf{B}}u_1) = (\overline{\mathbf{A}},\enspace \overline{\mathbf{B}}u_1)\\
    r_3 = r_2 \bullet c_3 &= (\overline{\mathbf{A}}^2,\enspace \overline{\mathbf{A}}\overline{\mathbf{B}}u_1 + \overline{\mathbf{B}}u_2 )
    \bullet (\overline{\mathbf{A}},\enspace \overline{\mathbf{B}}u_3) = (\overline{\mathbf{A}}^3,\enspace \overline{\mathbf{A}}^2\overline{\mathbf{B}}u_1 + \overline{\mathbf{A}}\overline{\mathbf{B}}u_2  + \overline{\mathbf{B}}u_3).
\end{align}
Note that the second entry of each of the output tuples, $r_{k,b}$, corresponds to the desired $x_k$.  Inspecting the required dependencies for each application of $\bullet$, we see that $r_2$ and the intermediate result $q_4$ can be computed in parallel. Once $r_2$ and $q_4$ are computed, $r_1$, $r_3$ and $r_4$ can all be computed in parallel.  We have therefore reduced the number of sequential steps required from four in the sequential scan version to two in the parallel scan version.  This reduction in sequential steps becomes important when the sequence length is large since, given sufficient processors, the parallel time scales logarithmically with the sequence length. 

\paragraph{Associativity of binary operator}

Finally, for completeness, we show that the binary operator $\bullet$ is associative:
\begin{align}
    (q_i \bullet q_j) \bullet q_k &= \left(  q_{j,a} \odot q_{i,a}, \enspace q_{j,a} \otimes q_{i,b} + q_{j,b}\right) \bullet q_k \\
    &= \left(  q_{k,a} \odot (q_{j,a} \odot q_{i,a}), \enspace q_{k,a} \otimes (q_{j,a} \otimes q_{i,b} + q_{j,b}) + q_{k,b}\right)\\
    &=  \left( (q_{k,a} \odot q_{j,a}) \odot q_{i,a}, \enspace q_{k,a} \otimes ( q_{j,a} \otimes q_{i,b}) + q_{k,a}\otimes q_{j,b} + q_{k,b}\right)\\
    &= \left( (q_{k,a} \odot q_{j,a}) \odot q_{i,a}, \enspace (q_{k,a} \odot  q_{j,a}) \otimes q_{i,b} + q_{k,a}\otimes q_{j,b} + q_{k,b}\right)\\
    &= q_i \bullet  \left(  q_{k,a} \odot q_{j,a}, \enspace q_{k,a} \otimes q_{j,b} + q_{k,b}\right)\\
    &= q_i \bullet (q_j \bullet q_k)
\end{align}

\end{document}